\def\Headings#1#2{\def\ps@mypagestyle{\let\@mkboth\@gobbletwo%
\def\@oddhead{\hfill {\small\sc #1} \hfill}%
\def\@oddfoot{\hfill \small\rm \thepage \hfill}%
\def\@evenhead{\hfill {\small\sc #2} \hfill}%
\def\@evenfoot{\hfill \small\rm \thepage \hfill}}%
\pagestyle{mypagestyle}}
\renewcommand\footnoterule{\kern-3\p@ \hrule \@width \textwidth \kern 2\p@}
\renewcommand{\p@enumii}{\theenumi.}
\renewcommand{\p@enumiii}{\theenumi.\theenumii.}
\def\@startsiction#1#2#3#4#5#6{\if@noskipsec \leavevmode \fi
	\par \@tempskipa #4\relax
	\@afterindenttrue
	\ifdim \@tempskipa <\z@ \@tempskipa -\@tempskipa \@afterindentfalse\fi
	\if@nobreak \everypar{}\else
	\addpenalty{\@secpenalty}\addvspace{\@tempskipa}\fi \@ifstar
	{\@ssect{#3}{#4}{#5}{#6}}{\@dblarg{\@sict{#1}{#2}{#3}{#4}{#5}{#6}}}}
\def\@sict#1#2#3#4#5#6[#7]#8{\ifnum #2>\c@secnumdepth
	\def\@svsec{}\else
	\refstepcounter{#1}\edef\@svsec{\csname the#1\endcsname}\fi
	\@tempskipa #5\relax
	\ifdim \@tempskipa>\z@
	\begingroup #6\relax
	\@hangfrom{\hskip #3\relax\@svsec.\hskip 0.1em}
	{\interlinepenalty \@M #8\par}
	\endgroup
	\csname #1mark\endcsname{#7}\addcontentsline
	{toc}{#1}{\ifnum #2>\c@secnumdepth \else
	\protect\numberline{\csname the#1\endcsname}\fi
	#7}\else
	\def\@svsechd{#6\hskip #3\@svsec #8\csname #1mark\endcsname
	{#7}\addcontentsline
	{toc}{#1}{\ifnum #2>\c@secnumdepth \else
	\protect\numberline{\csname the#1\endcsname}\fi
	#7}}\fi
	\@xsect{#5}}
\def\section{\@startsiction{section}{1}{\z@}{-7.6mm}{2.5mm}{\large\bf\raggedright}}
\def\subsection{\@startsection{subsection}{2}{\z@}{-5mm}{2mm}{\normalsize\bf\raggedright}}
\def\subsubsection{\@startsection{subsubsection}{3}{\z@}{-4.6mm}{2mm}{\bf\raggedright}}
\def\paragraph{\@startsiction{paragraph}{4}{\z@}{1.5ex plus 0.5ex minus .2ex}{-1em}{\normalsize\bf}}
\def\subparagraph{\@startsiction{subparagraph}{5}{\z@}{1.5ex plus 0.5ex minus .2ex}{-1em}{\normalsize\bf}}
\Crefname{setting}{Setting}{Settings}
\renewenvironment{abstract}
{\centerline{\bf Abstract}\vspace{0.7ex}%
	\bgroup\leftskip 40pt\rightskip 40pt\small\noindent\ignorespaces}%
{\par\egroup\vskip 0.7ex}
\renewenvironment{equation*}{\begin{equation}}{\end{equation}}
\numberwithin{equation}{section}
\newtheorem{theorem}{Theorem}[section]
\newtheorem{corollary}[theorem]{Corollary}
\newtheorem{lemma}[theorem]{Lemma}
\newtheorem{proposition}[theorem]{Proposition}
\newtheorem{definition}[theorem]{Definition}
\newtheorem{remark}[theorem]{Remark}
\newtheorem{setting}[theorem]{Setting}
\newcommand{\N}{\mathbb{N}}
\newcommand{\R}{\mathbb{R}}
\renewcommand{\epsilon}{\varepsilon}
\newcommand{\ssum}[2]{\mathop{\textstyle{\sum}}_{#1}^{#2}}
\newcommand{\ID}[1]{\mathrm{id}_{#1}}
\newcommand{\IndFct}[1]{\mathbbm{1}_{#1}}
\newcommand{\ReLU}[1]{\max\{#1,0\}}
\newcommand{\NNfct}{f_{\phi}}
\newcommand{\NNfctPath}[1]{f_{\phi_{#1}}}
\newcommand{\NNfctAlt}{f_{\psi}}
\newcommand{\NNfctSpecial}{f_{\phi+\psi}}
\newcommand{\NNfctLeaky}{f_{\phi}^{\gamma}}
\newcommand{\NNfctTransf}{f_{P(\phi)}}
\newcommand{\NNfctQuad}{f_{\phi}^{\mathrm{quad}}}
\newcommand{\Loss}{\mathcal{L}}
\newcommand{\LossFull}{\Loss_{N,T,\mathcal{A}}}
\newcommand{\LossLeaky}{\Loss^{\gamma}}
\newcommand{\LossLeakyFull}{\LossFull^{\gamma}}
\newcommand{\LossQuad}{\LossFull^{\mathrm{quad}}}
\newcommand{\Grad}{\mathcal{G}}
\newcommand{\GradFull}{\Grad_{N,T,\mathcal{A}}}
\title{\Large{\bf{Landscape analysis for shallow neural networks: \\ complete classification of critical points for affine target functions}} \vskip 1mm}
\author{Patrick Cheridito\footnote{Department of Mathematics, ETH Zurich, Switzerland} \qquad Arnulf Jentzen\footnote{School of Data Science and Shenzhen Research Institute of Big Data, The Chinese University of Hong Kong, Shenzhen, China} \footnote{Applied Mathematics: Institute for Analysis and Numerics, Faculty of Mathematics and Computer Science, University of M{\"u}nster, Germany} \qquad Florian Rossmannek$^*$\footnote{corresponding author; e-mail: florian.rossmannek@math.ethz.ch}}
\date{}
\begin{document}

\maketitle

\begin{abstract}
In this paper, we analyze the landscape of the true loss of neural networks with one hidden layer and ReLU, leaky ReLU, or quadratic activation.
In all three cases, we provide a complete classification of the critical points in the case where the target function is affine and one-dimensional.
In particular, we show that there exist no local maxima and clarify the structure of saddle points.
Moreover, we prove that non-global local minima can only be caused by `dead' ReLU neurons.
In particular, they do not appear in the case of leaky ReLU or quadratic activation.
Our approach is of a combinatorial nature and builds on a careful analysis of the different types of hidden neurons that can occur.
\end{abstract}


\section{Introduction}

An important aspect of neural network theory in machine learning is the dynamic behavior of gradient-based training algorithms.
Although empirical evidence suggests that training is often successful, meaning that the algorithm reaches a point that is close to a global minimum of the loss function measuring the error (see, e.g., \cite{LeCun2015}), a full theoretical understanding of gradient-based methods in network models is still lacking.
One branch of recent research has been investigating the effects of overparametrization, i.e.\ using an exceedingly large number of neurons in the network model, on the convergence behavior (we refer to \cite{ChizatOyallonBach2019,AllLiSong2019} and the references therein for more details on this), but here we focus on landscape analysis of the loss surface.
This landscape analysis provides an indirect tool for studying the dynamics of gradient-based algorithms, as these dynamics are governed by the loss surface.
One goal of landscape analysis is a better understanding of the occurrence and frequency of critical points of the loss function and obtaining information about their type, that is, whether they constitute extrema, local extrema, or saddle points.
Using the hierarchical structure of networks, some partial results have been obtained; see \cite{FukuAma2000}.
Though, the choice of activation function in the network model can have a significant impact on the landscape.
For instance, it is known that the loss surface of a linear network, that is, a network with the identity function as activation, only has global minima and saddle points but no non-global local minima (see \cite{BaldiHornik1989,Kawaguchi2016}).
However, the picture becomes less clear if a nonlinearity is introduced (see \cite{ManVanZde2020,SafranShamir2018}).

In the last decade, progress has been made in this more difficult nonlinear case.
In \cite{ChoHenMatBenLeCun2015}, the loss surface has been studied by relating it to a model from statistical physics.
This way, detailed results have been obtained about the frequency and quality of local minima.
Although the findings of \cite{ChoHenMatBenLeCun2015} are theoretically insightful, their theory is based on assumptions that are not met in practice (see \cite{ChoLeCunBen2015}).
In \cite{SoudryHoffer2017}, similar results have been obtained for networks with one hidden layer with less unrealistic assumptions.
We refer to \cite{DauPasGulChoGanBen2014} for experimental findings, on which \cite{ChoHenMatBenLeCun2015,SoudryHoffer2017} is based.

Besides work studying the effects of overparametrization on gradient-based methods directly, there have also been investigations of its impact on the loss landscape.
For instance, it has been shown in \cite{SafranShamir2016} that taking larger networks increases the likelihood to start from a good initialization with small loss or from which there exists a monotonically decreasing path to a global minimum.
However, it is still not fully understood in which situations a gradient-based training algorithm follows such a path.
If the quadratic activation function is used in a network with one hidden layer, then in the overparametrized regime only global minima and strict saddle points remain, but no non-global local minima; see \cite{DuLee2018,VenBanBru2019}.
Even for deeper architectures, all non-global local minima disappear with high probability for any activation function if the width of the last hidden layer is increased
(see \cite{SoudryCarmon2016,SolJavLee2019,LivShaSha2014}) and, under some regularity assumptions on the activation, this continues to hold if any of the hidden layers is sufficiently wide and the proceeding layers have a pyramidal structure (see \cite{NguHein2017}).
However, note that these results only apply in this level of generality if the loss is measured with respect to a finite set of data.
In particular, these global minima are potentially prone to overfitting.

In contrast to the literature mentioned above, our results concern the landscape of the true loss instead of the empirical loss.
The final goal in machine learning is to minimize not only the empirical loss, but the true loss, so it is of essence to understand its landscape.
In this paper, we consider networks with a single hidden layer with (leaky) rectified linear unit (ReLU) or quadratic activation.
As an alternative to the popular theme of overparametrization, we do not impose assumptions on the network model that are not met in practice, but instead focus on {\it special target functions}.
In \cite{CheJenRieRos2022}, this strategy has been pursued with constant target functions.
In this paper, we expand the scope from constant to affine functions.
This represents a first step towards a better understanding of the true loss landscape corresponding to general target functions.

In this framework with affine target functions, we provide a complete classification of the critical points of the true loss.
We do so by unfolding the combinatorics of the problem, governed by different types of hidden neurons appearing in a network.
We find that ReLU networks admit non-global local minima regardless of the number of hidden neurons.
At the same time, it turns out that these local minima are solely caused by `dead' ReLU neurons.
In particular, for leaky ReLU networks, which are often used to avoid the problem of dead neurons, there are only saddle points and global minima.
This suggests that using leaky ReLU instead of ReLU not only makes sense to avoid issues with training itself, but also to work with a better behaved loss surface on which training takes place to begin with.
Interestingly, also for the quadratic activation, non-global local minima do not appear, which is in line with the observations in \cite{DuLee2018,VenBanBru2019} for the discretized loss but does not require overparametrization.
In addition, for networks with quadratic activation, all saddle points have a constant realization function, whereas for (leaky) ReLU networks we show that there exist saddle points with a non-constant realization.

These complete classifications in the proposed approach to consider special target functions shed new light on important aspects of gradient-based methods in the training of networks.
Knowledge of the loss surface can be transformed into results about convergence of such methods as done in, e.g., \cite{JenRie2021b}.
In a smooth setting, a recent strand of work has shown that the domain of attraction of saddle points under gradient descent has zero Lebesgue measure as long as the Hessian at the saddle points has a strictly negative eigenvalue
(see \cite{LeePanPilSimJorRecht2019,LeeSimJorRecht2016,PanPil2017}).
This indicates that it also becomes necessary to study the spectrum of the Hessian of the loss function as previously pursued in, e.g., \cite{PennBahri2017,DuLee2018}.
Using the classification in this paper, we are able to derive results about the existence of strictly negative eigenvalues of the Hessian at most of the saddle points (understood in a suitable sense because we have to deal with differentiability issues arising from the (leaky) ReLU activation).
Furthermore, the set of non-global local minima, being caused by dead ReLU neurons, consists of a single connected component in the parameter space.
In particular, these extrema are not isolated.
The behavior of (stochastic) gradient descent at not necessarily isolated local minima has been studied in, e.g., \cite{FehrGessJen2020}.

The remainder of this article is organized as follows.
The first activation function we consider is the ReLU activation in \cref{section_MainResult}.
We begin by introducing the relevant notation and definitions, including a new description of the types of hidden neurons that can appear in a ReLU network, in \cref{section_notation,section_neuron_types}.
The first main result, the classification for ReLU networks, is \cref{ThrmMain} in \cref{section_main_statement}.
The remainder of \cref{section_MainResult} is dedicated to proving the classification.
More precisely, we discuss a few important ingredients for the proof in \cref{section_methods}.
Thereafter, \cref{section_diff_loss} is devoted to differentiability and regularity properties of the loss function in view of the non-differentiability of the ReLU activation.
The heart of the proof is contained in \cref{section_crit_const,section_crit_non_const}.
Finally, we establish in \cref{section_main_proof_ID_target} a special case of \cref{ThrmMain} and deduce it in full generality afterwards in \cref{section_main_proof}.
\cref{section_leaky} is concerned with extending the classification to leaky ReLU, stated as our second main result in \cref{ThrmMainLeaky}, which heavily relies on understanding the ReLU case.
To conclude, we also classify the critical points for networks with the quadratic activation in our third main result, \cref{ThrmMainQuad} in \cref{section_quadratic}.


\section{Classification for ReLU activation}
\label{section_MainResult}

\subsection{Notation and formal problem description}
\label{section_notation}

We consider shallow networks, by which we mean networks with a single hidden layer.
For simplicity, we focus on networks with a single input and output neuron.
The set of such networks with $N \in \N$ hidden neurons can be parametrized by $\R^{3N+1}$.
We begin by describing the problem for the ReLU activation function $x \mapsto \ReLU{x}$.
We will always write an element $\phi \in \R^{3N+1}$ as $\phi = (w,b,v,c)$, where $w,b,v \in \R^N$ and $c \in \R$.
The realization of the network $\phi$ with ReLU activation is the function $\NNfct \in C(\R,\R)$ given by
\begin{equation*}
	\NNfct(x) = c + \ssum{j=1}{N} v_j \ReLU{w_jx+b_j}.
\end{equation*}%
We suppose that the objective is to approximate an affine function on an interval $[T_0,T_1]$ in the $L^2$-norm.
In other words, given $\mathcal{A}=(\alpha,\beta) \in \R^2$ and $T=(T_0,T_1) \in \R^2$, one tries to minimize the loss function $\LossFull \in C(\R^{3N+1},\R)$ given by
\begin{equation*}
	\LossFull(\phi) = \int_{T_0}^{T_1} ( \NNfct(x) - \alpha x - \beta )^2 \, dx.
\end{equation*}%
The purpose of the first half of this paper is to classify the critical points of the loss function $\LossFull$.
Since the ReLU function is not differentiable at 0, we work with the generalized gradient $\GradFull \colon \R^{3N+1} \rightarrow \R^{3N+1}$ of the loss obtained by taking right-hand partial derivatives;
\begin{equation*}
	(\GradFull(\phi))_k = \lim_{h \downarrow 0} \frac{\LossFull(\phi+he_k) - \LossFull(\phi)}{h}
\end{equation*}%
for all $k \in \{1,\dots,3N+1\}$, where $e_k$ is the $k^{th}$ unit vector in $\R^{3N+1}$.
The function $\GradFull$ is defined on the entire parameter space $\R^{3N+1}$ and agrees with the gradient of $\LossFull$ if the latter exists.
We verify this and study regularity properties of $\LossFull$ more thoroughly in \cref{section_diff_loss}.

\begin{definition}
\label{def_crit_point}
	Let $N \in \N$ and $\mathcal{A},T \in \R^2$.
Then we call $\phi \in \R^{3N+1}$ a critical point of $\LossFull$ if $\GradFull(\phi) = 0$ and a saddle point if it is a critical point but not a local extremum.\footnote{We consider non-strict local extrema, i.e.\ $\phi$ is a local minimum (maximum) of $\LossFull$ if $\LossFull(\phi) \leq \LossFull(\psi)$ ($\geq$) for all $\psi$ in an open neighborhood of $\phi$, allowing equality $\LossFull(\phi) = \LossFull(\psi)$.}
\end{definition}

It can be shown that if $\phi$ is a critical point of $\LossFull$, then 0 belongs to the limiting sub-differential of $\LossFull$; see\footnote{In \cite{EbeJenRieWei2021}, the authors use a different generalization of the gradient, which can be obtained by taking left-hand partial derivatives. However, if $\GradFull$ is zero at some $\phi$, then its left-hand analog is also zero at $\phi$, so \cite[Prop.\ 2.12]{EbeJenRieWei2021} is applicable.} \cite[Prop.\ 2.12]{EbeJenRieWei2021}.
With \cref{def_crit_point}, it is not immediately clear whether all local extrema are critical points.
However, we will show that this is the case by demonstrating that local extrema are points of differentiability of the loss function.
In particular, \cref{def_crit_point} is well-suited for our purposes.
The next notion relates the outer bias, i.e., the coordinate $c$, to the target function $x \mapsto \alpha x + \beta$.

\begin{definition}
	Let $N \in \N$, $\phi = (w,b,v,c) \in \R^{3N+1}$, $\mathcal{A}=(\alpha,\beta) \in \R^2$, and $T=(T_0,T_1) \in \R^2$.
Then we say that $\phi$ is $(T,\mathcal{A})$-centered if $c = \frac{\alpha}{2}(T_0+T_1)+\beta$.
\end{definition}

To motivate this definition, note that $\frac{\alpha}{2}(T_0+T_1)+\beta$ is the best constant $L^2$-approximation of the function $[T_0,T_1] \rightarrow \R$, $x \mapsto \alpha x + \beta$.


\subsection{Different types of hidden neurons}
\label{section_neuron_types}

In this section, we introduce a few notions that describe how the different hidden neurons in a network are contributing to the realization function.
In the definition below, we introduce sets $I_j$, which are defined such that $[T_0,T_1] \backslash I_j$ is the interval on which the output of the $j^{th}$ hidden neuron is rendered zero by the ReLU activation.

\begin{definition}
\label{def_types_of_neurons}
	Let $N \in \N$, $\phi = (w,b,v,c) \in \R^{3N+1}$, $j \in \{1,\dots,N\}$, and $T_0,T_1 \in \R$ such that $T_0<T_1$.
Then we denote by $I_j$ the set given by $I_j = \{x \in [T_0,T_1] \colon w_jx+b_j \geq 0 \}$, we say that the $j^{th}$ hidden neuron of $\phi$ is\newline
\begin{minipage}{0.59\linewidth}
\vspace{0.2cm}
\begin{itemize}\itemsep = -0.2em
\item {\it inactive} if $I_j = \emptyset$,
\item {\it semi-inactive} if $\# I_j = 1$,
\item {\it semi-active} if $w_j = 0 < b_j$,
\item {\it active} if $w_j \ne 0 < b_j + \max_{k \in \{0,1\}} w_jT_k$,
\item {\it type-1-active} if $w_j \ne 0 \leq b_j + \min_{k \in \{0,1\}} w_jT_k$,
\item {\it type-2-active} if $\emptyset \ne I_j \cap (T_0,T_1) \ne (T_0,T_1)$,
\item {\it degenerate} if $|w_j| + |b_j| = 0$,
\item {\it non-degenerate} if $|w_j| + |b_j| > 0$,
\item {\it flat} if $v_j = 0$,
\item {\it non-flat} if $v_j \ne 0$,
\end{itemize}
and we say that $t \in \R$ is the breakpoint of the $j^{th}$ hidden \newline neuron of $\phi$ if $w_j \ne 0 = w_jt+b_j$.
\end{minipage}
\hfill
\begin{minipage}{0.4\linewidth}
	\vspace{-0.2cm}
	\centering
	\includegraphics[width=0.98\linewidth]{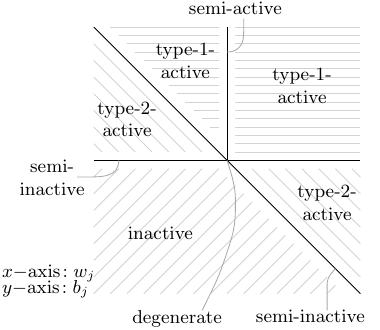}
	\vspace{-0.2cm}
	\captionof{figure}{Regions\protect\footnotemark with different types of a hidden neuron as seen in the $(w_j,b_j)$-plane.}
	\label{fig_types_of_neurons}
\end{minipage}
\end{definition}

Let us briefly motivate these notions.\footnotetext{\cref{fig_types_of_neurons} shows the case $T_0 = 0$, $T_1 = 1$. The general case is obtained by a shear transformation.}
Every hidden neuron is exactly one of: inactive, semi-inactive, semi-active, active, or degenerate.
Moreover, observe that $I_j$ is always an interval.

For an inactive neuron, applying the ReLU activation function yields the constant zero function on $[T_0,T_1]$.
The breakpoint $t_j$ might not exist (if $w_j = 0$ and $b_j < 0$), or it might exist and lie outside of $[T_0,T_1]$ with $t_j < T_0$ if $w_j < 0$ and $t_j > T_1$ if $w_j > 0$.
Note that inactivity is a stable condition in the sense that a small perturbation of an inactive neuron remains inactive.

Applying the ReLU activation to a semi-inactive neuron also yields the constant zero function on $[T_0,T_1]$.
But in this case, a breakpoint must exist and be equal to one of the endpoints $T_0,T_1$ (which one depends on the sign of $w_j$ similarly to the inactive case).
However, a perturbation of a semi-inactive neuron may yield a (semi-)inactive or a type-2-active neuron; see \cref{fig_types_of_neurons}.
In this sense, semi-inactive neurons are boundary cases.

The realization of a semi-active neuron is also constant, but not necessarily zero since the corresponding interval $I_j$ is $[T_0,T_1]$.
As can be seen from \cref{fig_types_of_neurons}, perturbing a semi-active neuron always yields a semi- or type-1-active neuron.

Non-flat active neurons provide a non-constant contribution to the overall realization function.
Note that a hidden neuron is active exactly if it is type-1- or type-2-active.
These two types distinguish whether the breakpoint $t_j$, which exists in either case, lies outside or inside the interval $(T_0,T_1)$ and, hence, whether the contribution of the neuron is affine (corresponding to $I_j = [T_0,T_1]$) or piecewise affine (corresponding to $I_j = [T_0,t_j]$ or $I_j = [t_j,T_1]$).
Type-1 and type-2-active neurons both form two connected components in the $(w_j,b_j)$-plane; see \cref{fig_types_of_neurons}.
A perturbation of an active neuron remains active.

The case $w_j = 0 = b_j$ is called degenerate because it leads to problems with differentiability.
Perturbing a degenerate neuron may yield any of the other types of neurons.

Lastly, a flat neuron also does not contribute to the overall realization, but the reason for this lies between the second and third layer and not between the first and second one, which is why this case deserves a separate notion.


\subsection{Classification of the critical points of the loss function}
\label{section_main_statement}

Now, we are ready to provide a classification of the critical points of the loss function.

\begin{theorem}
\label{ThrmMain}
	Let $N \in \N$, $\phi = (w,b,v,c) \in \R^{3N+1}$, $\mathcal{A}=(\alpha,\beta) \in \R^2$, and $T=(T_0,T_1) \in \R^2$ satisfy $\alpha \ne 0$ and $0 \leq T_0<T_1$.
Then the following hold:
\begin{enumerate}[\rm (I)]\itemsep = 0em

\item\label{ThrmMainLocMax} $\phi$ is not a local maximum of $\LossFull$.

\item\label{ThrmMainDiff} If $\phi$ is a critical point or a local extremum of $\LossFull$, then $\LossFull$ is differentiable at $\phi$ with gradient $\nabla \LossFull (\phi) = 0$.

\item\label{ThrmMainLocMin} $\phi$ is a non-global local minimum of $\LossFull$ if and only if $\phi$ is $(T,\mathcal{A})$-centered and, for all $j \in \{1,\dots,N\}$, the $j^{th}$ hidden neuron of $\phi$ is
\begin{enumerate}[\rm (a)]\itemsep = 0em

\item\label{ThrmMainLocMinInact} inactive,

\item\label{ThrmMainLocMinSemiIn1} semi-inactive with $I_j = \{T_0\}$ and $\alpha v_j>0$, or

\item\label{ThrmMainLocMinSemiIn2} semi-inactive with $I_j = \{T_1\}$ and $\alpha v_j<0$.

\end{enumerate}

\item\label{ThrmMainSaddle} $\phi$ is a saddle point of $\LossFull$ if and only if $\phi$ is $(T,\mathcal{A})$-centered, $\phi$ does not have any type-1-active neurons, $\phi$ does not have any non-flat semi-active neurons, $\phi$ does not have any non-flat degenerate neurons, and exactly one of the following two items holds:
\begin{enumerate}[\rm (a)]\itemsep = 0em

\item\label{ThrmMainSaddleTriv} $\phi$ does not have any type-2-active neurons and there exists $j \in \{1,\dots,N\}$ such that the $j^{th}$ hidden neuron of $\phi$ is
\begin{enumerate}[\rm (i)]\itemsep = 0em
\item\label{ThrmMainSaddleSemiAc} flat semi-active,

\item\label{ThrmMainSaddleSemiIn1} semi-inactive with $I_j = \{T_0\}$ and $\alpha v_j \leq 0$,

\item\label{ThrmMainSaddleSemiIn2} semi-inactive with $I_j = \{T_1\}$ and $\alpha v_j \geq 0$, or

\item\label{ThrmMainSaddleDeg} flat degenerate.

\end{enumerate}

\item\label{ThrmMainSaddleNonTriv} There exists $n \in \{2,4,6,\dots\}$ such that $(\bigcup_{j \in \{1,\dots,N\},\, w_j \ne 0} \{-\frac{b_j}{w_j}\}) \cap (T_0,T_1) = \bigcup_{i=1}^{n} \{T_0+\frac{i(T_1-T_0)}{n+1}\}$ and, for all $j \in \{1,\dots,N\}$, $i \in \{1,\dots,n\}$ with $w_j \ne 0 = b_j + w_j(T_0+\frac{i(T_1-T_0)}{n+1})$, it holds that $\mathrm{sign}(w_j) = (-1)^{i+1}$ and $\ssum{k \in \{1,\dots,N\},\, w_k \ne 0 = b_k + w_k(T_0+\frac{i(T_1-T_0)}{n+1})}{} v_k w_k = \frac{2\alpha}{n+1}$.

\end{enumerate}

\item\label{ThrmMainFctTriv} If $\phi$ is a non-global local minimum of $\LossFull$ or a saddle point of $\LossFull$ without type-2-active neurons, then $\NNfct(x) = \frac{\alpha}{2}(T_0+T_1)+\beta$ for all $x \in [T_0,T_1]$.

\item\label{ThrmMainFctNonTriv} If $\phi$ is a saddle point of $\LossFull$ with at least one type-2-active neuron, then there exists $n \in \{2,4,6,\dots\}$ such that $n \leq N$ and, for all $i \in \{0,\dots,n\}$, $x \in [T_0+\frac{i(T_1-T_0)}{n+1},T_0+\frac{(i+1)(T_1-T_0)}{n+1}]$, one has
\begin{equation*}
	\NNfct(x) = \alpha x + \beta - \frac{(-1)^i \alpha}{n+1} \Big( x - T_0 - \frac{(i+\frac{1}{2})(T_1-T_0)}{n+1} \Big).
\end{equation*}%

\end{enumerate}
\end{theorem}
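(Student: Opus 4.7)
The plan is to prove Theorem \ref{ThrmMain} by a detailed case analysis of the types of hidden neurons appearing in a critical point, following the roadmap sketched in the introduction. First, I would reduce to the normalized setting $\alpha = 1$, $\beta = 0$, $T_0 = 0$, $T_1 = 1$ via affine changes of variables in the input and the target; this reduction is what justifies first treating a special case in section_main_proof_ID_target and then deducing the general statement in section_main_proof. In this normalized setting, I would compute $\GradFull$ component-wise: the $c$-component gives $\int_{T_0}^{T_1}(\NNfct(x)-\alpha x-\beta)\,dx=0$, the $v_j$-component gives $\int_{I_j}(\NNfct(x)-\alpha x-\beta)(w_jx+b_j)\,dx=0$, and the $w_j$- and $b_j$-components give the corresponding integrals weighted by $v_j x$ and $v_j$ over the active region $I_j$. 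From section_diff_loss, these formulas hold wherever $\LossFull$ is differentiable, and one shows along the way that every local extremum is a differentiability point, establishing (II) as a byproduct.

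Next, I would partition the hidden neurons using Definition \ref{def_types_of_neurons} and read off how each type can contribute to a critical point. Inactive, semi-inactive, semi-active, and flat neurons contribute only a constant to $\NNfct$; a non-flat type-1-active neuron contributes an affine piece on all of $[T_0,T_1]$; a non-flat type-2-active neuron contributes a genuine kink at its breakpoint. The core observation is that the $v_j$- and $w_j$-equations for a non-flat type-1-active neuron, combined with the $c$-equation, force the residual $\NNfct(x)-\alpha x-\beta$ to be $L^2$-orthogonal to an affine function that overdetermines the system, so such neurons cannot occur at a critical point; an analogous cancellation rules out non-flat semi-active and non-flat degenerate neurons. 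What remains is the interaction of the type-2-active neurons, grouped by their breakpoints $t_i=T_0+\frac{i(T_1-T_0)}{n+1}$: the simultaneous vanishing of the $w_j$- and $b_j$-equations at each breakpoint cluster translates into the $L^2$-best piecewise-affine approximation problem of $\alpha x+\beta$ on a prescribed partition, whose equioscillation characterization forces the breakpoints to be equally spaced, the signs of $w_j$ to alternate, and the slope sum to equal $\frac{2\alpha}{n+1}$, yielding the exact form of (IV)(b) and the saw-tooth realization in (VI).

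With the necessary conditions established, I would verify the sufficient direction and the extremum/saddle dichotomy by exhibiting explicit perturbations. For the candidate non-global local minima in (III), all neurons are (semi-)inactive, hence $\NNfct$ is the constant best approximation by (V); any sufficiently small perturbation either keeps every neuron (semi-)inactive (loss unchanged) or turns a semi-inactive neuron into a type-2-active one, and the sign conditions $\alpha v_j>0$ or $\alpha v_j<0$ at $T_0$ or $T_1$ are precisely what makes the second-order effect of this activation strictly increase the loss. Conversely, at saddle-candidates in (IV)(a), one can activate a flat semi-active, flat degenerate, or boundary semi-inactive neuron in the opposite sign to produce a direction of strict decrease while the complementary direction leaves the loss unchanged; for (IV)(b), breaking the equally-spaced breakpoint pattern or perturbing the slope sums produces a direction of decrease along the line of best piecewise-affine approximation, while rescaling any $v_j$ toward zero gives a direction of increase. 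The absence of local maxima in (I) then follows from the same toolkit: one can always activate an inactive neuron or increase the magnitude of a flat neuron's output weight to strictly raise the loss; together with (II) this rules out maxima even at points where $\LossFull$ fails to be differentiable. Finally, (V) and (VI) fall out of the structure theorem for the residual.

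The main obstacle will be establishing the rigid arithmetic progression in (IV)(b) and (VI): showing that as soon as any type-2-active neuron is present at a critical point, the entire set of breakpoints inside $(T_0,T_1)$ must coincide with $\{T_0+\tfrac{i(T_1-T_0)}{n+1}\}_{i=1}^{n}$ for an even $n$, with the stated sign and slope-sum relations. This is the combinatorial heart of the proof, because several distinct neurons can share a breakpoint and because the $w_j$-, $b_j$-, and $v_j$-equations couple all breakpoints through a single residual function; untangling this coupling to obtain the equioscillation structure is where the argument will be most delicate. A secondary technical difficulty is the non-smoothness of $\LossFull$ at semi-inactive, semi-active, and degenerate configurations, which requires careful one-sided perturbation analysis to ensure that the generalized gradient $\GradFull$ truly captures all local extrema, as claimed in part (II).
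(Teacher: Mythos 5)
Your overall architecture matches the paper's: normalize to the identity target on $[0,1]$, read off the moment conditions from the generalized gradient, rule out type-1-active, non-flat semi-active, and non-flat degenerate neurons by the overdetermination argument, derive the equally spaced breakpoints and alternating signs from the coupled moment conditions (the paper packages this as \cref{LemUniqueAffGeneral} and \cref{LemCritNonAff}), and verify the minimum/saddle dichotomy by explicit perturbations. Your treatment of the non-global local minima and of the ``trivial'' saddle points in (IV)(a) is in the same spirit as \cref{LemCritLocMin} and \cref{LemCritSaddleT}, although for the minima the paper's argument is a pointwise comparison $|\NNfctAlt(x)-x|\geq|c-x|$ rather than a second-order expansion, and for (I) the paper simply observes that $\LossFull$ is strictly convex in $c$, which is cleaner and covers all $\phi$ at once (your ``activate an inactive neuron'' argument needs a separate case when no such neuron exists).

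The genuine gap is the claim that the critical points with sawtooth realization in (IV)(b) are saddle points. You propose that ``breaking the equally-spaced breakpoint pattern or perturbing the slope sums produces a direction of decrease,'' but at a critical point every first-order perturbation leaves the loss unchanged to first order, so this gives no information; the sign of the second-order change is exactly what must be determined, and it is not settled by any of the structural facts you derive earlier. Without it, these configurations could a priori be additional non-global local minima, which would falsify both (III) and (IV)(b). The paper resolves this in \cref{LemCritNonAffSaddle} by first proving twice continuous differentiability in the coordinates of type-2-active neurons (\cref{LemTwiceDiff}), then computing the determinant of the Hessian restricted to a carefully chosen block $(b_{j_1},\dots,b_{j_l},v_k,c)$ built from neurons at the first two breakpoints, and showing it is strictly negative via the identities $\Gamma=\frac{32n^2-21n+3}{16n(2n-1)}\in(0,1)$ and $4n(1-\Gamma)=\frac{5n-3}{8n-4}<1$. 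The margin here is thin, so no soft perturbation heuristic substitutes for the computation; your proposal needs a concrete second-order argument (a direction with strictly negative second-order loss change) to close this step. A secondary, smaller omission: in the non-affine case you must also show that the moment conditions $\int_{q_i}^{q_{i+1}}(\NNfct(x)-x)\,dx=0$ actually hold on each cell of the partition (the paper's \cref{LemZeroInt}, which uses that each interior breakpoint is realized by a non-flat type-2-active neuron together with the $c$-equation); your plan presupposes these conditions without deriving them.
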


\cref{ThrmMain}.\eqref{ThrmMainSaddleNonTriv} says that the set of breakpoints of all type-2-active neurons agrees with the set of $n$ equally spaced points $T_0 < q_1 < \dots < q_n < T_1$.
Furthermore, for any type-2-active neuron with breakpoint $q_i$, the sign of the coordinate $w$ is given by $(-1)^{i+1}$.
Lastly, the sum of $v_kw_k$, where $k$ ranges over all type-2-active neurons with breakpoint $q_i$, is equal to $\frac{2\alpha}{n+1}$.
The term $v_kw_k$ is the contribution of the $k^{th}$ hidden neuron to the slope of the realization.

\begin{remark}
\label{rem_proper_crit}
	Note that, by \cref{ThrmMain}.\eqref{ThrmMainDiff}, all local extrema and all critical points of $\LossFull$, which we defined as zeros of $\GradFull$, are actually critical points of $\LossFull$ in the classical sense, i.e.\ points of differentiability of $\LossFull$ with vanishing gradient.
In particular, the classification in \cref{ThrmMain} turns out to be a classification of the critical points in the classical sense as well.
\end{remark}

\begin{remark}
	Gradient Descent-type algorithms typically use generalized gradients to train ReLU networks.
For instance, they might compute $\Grad$, its left-hand analog, the average of the two, or quantities obtained by artificially defining the derivative of the ReLU function at 0.
For each of these versions, a similar classification of critical points could be derived.
\end{remark}

\cref{ThrmMain}.\eqref{ThrmMainFctTriv} shows that any non-global local minimum has the constant realization $\frac{\alpha}{2}(T_0+T_1)+\beta$.
In particular, there is only one value that the loss function can take at non-global local minima.
Similarly, it follows from \cref{ThrmMain}.\eqref{ThrmMainFctNonTriv} that a saddle point can lead to exactly one of $\lfloor N/2 \rfloor+1$ possible loss values.

\begin{corollary}
\label{CorLossValues}
	Let $N \in \N$, $\mathcal{A} = (\alpha,\beta) \in \R^2$, and $T = (T_0,T_1) \in \R^2$ satisfy $0 \leq T_0 < T_1$, and assume that $\phi \in \R^{3N+1}$ is a critical point of $\LossFull$.
Then the following hold:
\begin{enumerate}[\rm (i)]\itemsep = 0em

\item If $\phi$ is a non-global local minimum of $\LossFull$, then $\LossFull(\phi) = \frac{1}{12}\alpha^2(T_1-T_0)^3$.

\item If $\phi$ is a saddle point of $\LossFull$, then there exists $n \in \{0,2,4,\dots\}$ such that $n \leq N$ and $\LossFull(\phi) = \frac{1}{12(n+1)^4} \alpha^2 (T_1-T_0)^3$.

\end{enumerate}
\end{corollary}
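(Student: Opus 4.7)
The plan is to invoke \cref{ThrmMain}.\eqref{ThrmMainFctTriv} and \cref{ThrmMain}.\eqref{ThrmMainFctNonTriv}, which give explicit descriptions of $\NNfct$ at any non-global local minimum and at any saddle point of $\LossFull$, and then to compute $\int_{T_0}^{T_1}(\NNfct(x) - \alpha x - \beta)^2\,dx$ directly. Everything reduces to an elementary one-variable integration (implicitly working in the regime $\alpha \ne 0$ in which \cref{ThrmMain} applies; the case $\alpha = 0$ gives both formulas the value $0$ and can be handled separately).

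For (i), \cref{ThrmMain}.\eqref{ThrmMainFctTriv} yields $\NNfct(x) \equiv \tfrac{\alpha}{2}(T_0+T_1)+\beta$ on $[T_0,T_1]$, so the integrand equals $\alpha^{2}(x-\tfrac{T_0+T_1}{2})^{2}$, and shifting the integration variable to center at $\tfrac{T_0+T_1}{2}$ produces $\int_{T_0}^{T_1}\alpha^{2}(x-\tfrac{T_0+T_1}{2})^{2}\,dx = \tfrac{1}{12}\alpha^{2}(T_1-T_0)^{3}$.

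For (ii) I split into the two sub-cases distinguished in \cref{ThrmMain}.\eqref{ThrmMainSaddle}. If $\phi$ has no type-2-active neuron, then \cref{ThrmMain}.\eqref{ThrmMainFctTriv} applies and the computation above gives $\LossFull(\phi) = \tfrac{1}{12}\alpha^{2}(T_1-T_0)^{3}$, which matches the claim with $n = 0$. Otherwise, \cref{ThrmMain}.\eqref{ThrmMainFctNonTriv} furnishes an even $n \in \{2,4,6,\ldots\}$ with $n \leq N$ such that, on each of the $n+1$ sub-intervals $[T_0+\tfrac{i(T_1-T_0)}{n+1},\,T_0+\tfrac{(i+1)(T_1-T_0)}{n+1}]$ for $i \in \{0,\ldots,n\}$, the error $\NNfct(x) - \alpha x - \beta$ equals $-\tfrac{(-1)^{i}\alpha}{n+1}(x - m_i)$, where $m_i := T_0 + \tfrac{(i+\frac{1}{2})(T_1-T_0)}{n+1}$ is the midpoint of that sub-interval. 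Since integrating $(x-m_i)^{2}$ over an interval of length $\tfrac{T_1-T_0}{n+1}$ centered at $m_i$ produces $\tfrac{(T_1-T_0)^{3}}{12(n+1)^{3}}$, each sub-integral contributes $\tfrac{\alpha^{2}(T_1-T_0)^{3}}{12(n+1)^{5}}$, and summing over the $n+1$ values of $i$ yields $\tfrac{\alpha^{2}(T_1-T_0)^{3}}{12(n+1)^{4}}$, as desired.

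The proof is mechanical and there is no substantive obstacle. The one point worth flagging is that the linear factor appearing in \cref{ThrmMain}.\eqref{ThrmMainFctNonTriv} vanishes precisely at the midpoint of each sub-interval; this is exactly what reduces every sub-integral to a centered quadratic moment and produces the clean closed form in (ii).
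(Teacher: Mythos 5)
Your proof is correct and follows essentially the same route as the paper, which obtains the corollary from \cref{ThrmMain}.\eqref{ThrmMainFctTriv} and \cref{ThrmMain}.\eqref{ThrmMainFctNonTriv} by the direct integration you carry out (the paper leaves this elementary computation implicit). The only point to tighten is the case $\alpha=0$: the right justification there is not that the two closed-form expressions evaluate to zero, but that for a constant target function there are no critical points other than global minima, so both statements hold vacuously.
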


Formally, \cref{CorLossValues} only follows from \cref{ThrmMain} for $\alpha \ne 0$.
But for $\alpha = 0$ it holds trivially since for constant target functions there exist no critical points other than global minima (see \cite{CheJenRieRos2022}).


\subsection{Ingredients for the proof of the classification}
\label{section_methods}

As a first step, let us provide a simple argument to establish \cref{ThrmMain}.\eqref{ThrmMainLocMax}.

\begin{lemma}
\label{LemNoMaxima}
	Let $N \in \N$, $\mathcal{A} \in \R^2$, and $T=(T_0,T_1) \in \R^2$ satisfy $T_0<T_1$.
Then $\LossFull$ does not have any local maxima.
\end{lemma}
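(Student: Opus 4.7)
The plan is to exhibit, at every point $\phi \in \R^{3N+1}$, an explicit one-dimensional slice along which the loss is a strictly convex upward parabola; this alone immediately rules out local maxima. The natural slice is the line through $\phi$ in the direction of the outer bias $c$, because shifting $c$ by $h$ shifts the whole realization $\NNfct$ uniformly by $h$, with no interaction with the ReLU non-linearity. This means the computation involves no differentiability considerations and no case distinction among the neuron types introduced in \cref{section_neuron_types}.

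Concretely, I would restrict $\LossFull$ to the affine line $\phi + h e_{3N+1}$, $h \in \R$, and expand the square in the integral defining $\LossFull$. Using that $f_{\phi + h e_{3N+1}}(x) = \NNfct(x) + h$ on $[T_0,T_1]$, one obtains
\begin{equation*}
	\LossFull(\phi + h e_{3N+1}) = \LossFull(\phi) + 2h \int_{T_0}^{T_1} (\NNfct(x) - \alpha x - \beta)\, dx + (T_1 - T_0)\, h^2,
\end{equation*}
which is a polynomial of degree two in $h$ with leading coefficient $T_1 - T_0 > 0$.

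From here the conclusion is immediate: a strictly convex upward parabola on $\R$ has no local maxima anywhere, so every neighborhood of $h = 0$ contains values $h$ for which the right-hand side strictly exceeds $\LossFull(\phi)$, regardless of the sign of the linear term. Hence $\phi$ cannot be a local maximum of $\LossFull$, proving \cref{LemNoMaxima}. There is no real obstacle here; the lemma is a clean warm-up, made painless by the fact that the outer bias is the one parameter whose dependence on the loss is globally smooth and has an explicitly positive curvature, sidestepping all of the combinatorics that will be required for the remaining parts of \cref{ThrmMain}.
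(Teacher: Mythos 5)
Your proposal is correct and is essentially the paper's own argument: the paper also proves the lemma by observing that $\LossFull$ is strictly convex in the outer bias $c$, which is exactly the explicit quadratic expansion with leading coefficient $T_1-T_0>0$ that you wrote out.
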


\begin{proof}
	Write $\mathcal{A} = (\alpha,\beta)$.
The lemma directly follows from the simple fact that
\begin{equation*}
	\LossFull(w,b,v,c) = \int_{T_0}^{T_1} \Big(c + \ssum{j=1}{N} v_j \ReLU{w_jx+b_j} - \alpha x - \beta \Big)^2 dx
\end{equation*}%
is strictly convex in $c$.
\end{proof}

As a consequence of this lemma, whenever we want to show that a critical point $\phi$ is a saddle point, it suffices to show that it is not a local minimum, that is, it suffices to show that, in every neighborhood of $\phi$, $\Loss$ attains a value that is below $\Loss(\phi)$.

\begin{remark}
\label{RemNoMaxima}
	The previous proof only used linearity of the realization function in the $c$-coordinate and strict convexity of the square function.
In particular, the same argument shows that the square loss never has local maxima regardless of the target function, the activation function, and the architecture of the network.
\end{remark}

Let us now provide a sketch of the proofs to come.
Instead of proving \cref{ThrmMain} directly, we first assume that the affine target function is the identity on the interval $[0,1]$, corresponding to the special case $T_0 = \beta = 0$ and $T_1 = \alpha = 1$ in \cref{ThrmMain}.
Afterwards, we will verify that the general case can always be reduced to this one.
For convenience of notation, we assume the following convention to hold throughout the remainder of \cref{section_MainResult}.

\begin{setting}
	Fix $N \in \N$ and denote $\Loss = \Loss_{N,(0,1),(1,0)}$ and $\Grad = \Grad_{N,(0,1),(1,0)}$.
We say that a network $\phi \in \R^{3N+1}$ is centered if it is $((0,1),(1,0))$-centered.
\end{setting}

The generalized gradient $\Grad$ was defined in terms of the right-hand partial derivatives of $\Loss$.
These are given by
\begin{equation*}
\begin{split}
	\frac{\partial^{+}}{\partial w_j} \Loss(\phi) &= 2 v_j \int_{I_j} x(\NNfct(x)-x) dx, \\
	\frac{\partial^{+}}{\partial b_j} \Loss(\phi) &= 2 v_j \int_{I_j} (\NNfct(x)-x) dx,
\end{split}
\qquad
\begin{split}
	\frac{\partial^{+}}{\partial v_j} \Loss(\phi) &= 2 \int_{I_j} (w_jx+b_j)(\NNfct(x)-x) dx, \\
	\frac{\partial^{+}}{\partial c} \Loss(\phi) &= 2 \int_{0}^{1} (\NNfct(x)-x) dx.
\end{split}
\end{equation*}%
Regularity properties of the loss function will be discussed in detail in the next section.
We will see then that these right-hand partial derivatives are proper partial derivatives if the $j^{th}$ hidden neuron is flat or non-degenerate.
If these partial derivatives are zero, then we encounter the system of equations
\begin{equation}
\label{LossGradient}
\begin{split}
	0 &= 2 v_j \int_{I_j} x(\NNfct(x)-x) dx, \\
	0 &= 2 v_j \int_{I_j} (\NNfct(x)-x) dx, \\
	0 &= 2 \int_{I_j} (w_jx+b_j)(\NNfct(x)-x) dx, \\
	0 &= 2 \int_{0}^{1} (\NNfct(x)-x) dx,
\end{split}
\end{equation}%
from which we deduce that any non-flat non-degenerate neuron of a critical point or local extremum $\phi$ satisfies
\begin{equation}
\label{CritCharacteristic}
	\int_{I_j} (\NNfct(x)-x) dx = 0 = \int_{I_j} x(\NNfct(x)-x) dx.
\end{equation}%
This simple observation will be used repeatedly in the proof of \cref{ThrmMain}.
Moreover, for a type-1-active neuron (for which $I_j = [0,1]$), \eqref{CritCharacteristic} is even satisfied if the neuron is flat as can be seen from the third and fourth line of \eqref{LossGradient}.
Here is an example of how \eqref{CritCharacteristic} can be employed:
note that any affine function $f \colon [0,1] \rightarrow \R$ satisfying
\begin{equation}
\label{ExampleArgumentMethod}
	\int_{0}^{1} (f(x)-x) dx = 0 = \int_{0}^{1} x(f(x)-x) dx
\end{equation}%
necessarily equals the identity on $[0,1]$.
Thus, if $\phi$ is a critical point or local extremum of $\Loss$ for which $\NNfct$ is affine and if $\phi$ admits a type-1-active or non-flat semi-active neuron (so that $I_j = [0,1]$), then we obtain from \eqref{CritCharacteristic} that $\phi$ is a global minimum.
If $\NNfct$ is not affine, we will be able to develop similar arguments for each affine piece of $\NNfct$.
In this case, we will obtain a system of equations from \eqref{LossGradient} that intricately describes the combinatorics of the realization function.


\subsection{Differentiability of the loss function}
\label{section_diff_loss}

Since the ReLU function is not differentiable at 0, the loss function is not everywhere differentiable.
However, a simple argument establishes that $\Loss$ is differentiable at any of its global minima as the following lemma shows.

\begin{lemma}
\label{LemDiffGlobalMinima}
	Let $\phi \in \R^{3N+1}$.
If $\NNfct(x) = x$ for all $x \in [0,1]$, then $\Loss$ is differentiable at $\phi$.
\end{lemma}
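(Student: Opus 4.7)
The plan is to observe that $\phi$ is a global minimizer of $\Loss$ and then verify the Fréchet differentiability estimate directly against the zero linear functional. Indeed, since $\Loss \geq 0$ everywhere by construction, the hypothesis $\NNfct(x) = x$ on $[0,1]$ forces $\Loss(\phi) = 0$. Consequently, the only candidate derivative of $\Loss$ at $\phi$ is the zero covector, and differentiability of $\Loss$ at $\phi$ reduces to showing the estimate $\Loss(\phi + \eta) = o(\|\eta\|)$ as $\eta \to 0$ in $\R^{3N+1}$.

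To establish this estimate, I would derive a pointwise Lipschitz bound on the realization map. Writing $\phi = (w,b,v,c)$ and $\eta = (\Delta w, \Delta b, \Delta v, \Delta c)$, the difference $f_{\phi+\eta}(x) - \NNfct(x)$ expands into $\Delta c$ plus one summand per hidden neuron of the form
$$(v_j + \Delta v_j)\ReLU{(w_j + \Delta w_j)x + b_j + \Delta b_j} - v_j \ReLU{w_j x + b_j}.$$
Inserting the telescoping term $(v_j + \Delta v_j)\ReLU{w_j x + b_j}$, applying the $1$-Lipschitz inequality $|\ReLU{a} - \ReLU{b}| \leq |a - b|$, and using that $x \in [0,1]$ together with boundedness of $v_j$, $w_j$, $b_j$ on a compact neighborhood of $\phi$, every such summand is dominated by a constant multiple of $\|\eta\|$. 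Summing over the $N$ hidden neurons gives a uniform bound $|f_{\phi+\eta}(x) - \NNfct(x)| \leq C\|\eta\|$ for all $x \in [0,1]$ and all sufficiently small $\eta$, for some constant $C$ depending only on $\phi$.

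Since $\NNfct(x) = x$, this produces $|f_{\phi+\eta}(x) - x| \leq C\|\eta\|$, and squaring and integrating over $[0,1]$ yields $\Loss(\phi + \eta) \leq C^2 \|\eta\|^2$, which is the required $o(\|\eta\|)$ estimate. The only ingredient that does any real work is the $1$-Lipschitz continuity of the ReLU; there are no cases to analyze, because expanding around a global minimizer makes the identification of the linear part automatic, so I do not anticipate any serious obstacle.
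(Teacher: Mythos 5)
Your proposal is correct and follows essentially the same route as the paper: both reduce the claim to the estimate $\Loss(\phi+\eta)=O(\|\eta\|^2)$ via a local Lipschitz bound $|f_{\phi+\eta}(x)-\NNfct(x)|\leq C\|\eta\|$ uniform in $x\in[0,1]$. The only difference is cosmetic: the paper cites the local Lipschitz continuity of the realization map as well known, whereas you derive it inline by telescoping and the $1$-Lipschitz property of the ReLU.
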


\begin{proof}
	It is well known that the realization function $\R^{3N+1} \rightarrow C([0,1],\R), ~\phi \mapsto \NNfct|_{[0,1]}$ is locally Lipschitz continuous if $C([0,1],\R)$ is equipped with the supremums norm (see, e.g., \cite{PeteRasVoigt2020}).
Thus, there is a constant $L>0$ depending only on $N$ and $\phi$ with $|\NNfctSpecial(x)-\NNfct(x)| \leq L\|\psi\|$ uniformly on $[0,1]$ for all $\psi$ sufficiently close to $\phi$.
Then
\begin{equation*}
	\frac{\Loss(\phi+\psi)-\Loss(\phi)}{\|\psi\|} = \frac{1}{\|\psi\|} \int_0^1 (\NNfctSpecial(x)-\NNfct(x))^2 dx \leq L^2 \|\psi\|,
\end{equation*}%
which shows that $\Loss$ is differentiable at $\phi$.
\end{proof}

The next result shows that there even are regions in the parameter space where $\Loss$ is infinitely often differentiable in spite of the ReLU activation.

\begin{lemma}
\label{LemLossSmooth}
	The loss function $\Loss$ is everywhere analytic in $(v,c)$.
Moreover, if the $j^{th}$ hidden neuron of $\phi \in \R^{3N+1}$ is inactive, semi-active, or type-1-active with breakpoint neither 0 nor 1 for some $j \in \{1,\dots,N\}$, then $\Loss$ is also analytic in $(w_j,b_j,v,c)$ in a neighborhood of $\phi$, and mixed partial derivatives of any order can be obtained by differentiating under the integral.
In particular,
\begin{equation*}
\begin{split}
	\frac{\partial}{\partial w_j} \Loss(\phi) &= 2 v_j \int_{I_j} x(\NNfct(x)-x) dx, \\
	\frac{\partial}{\partial b_j} \Loss(\phi) &= 2 v_j \int_{I_j} (\NNfct(x)-x) dx,
\end{split}
\qquad
\begin{split}
	\frac{\partial}{\partial v_j} \Loss(\phi) &= 2 \int_{I_j} (w_jx+b_j)(\NNfct(x)-x) dx, \\
	\frac{\partial}{\partial c} \Loss(\phi) &= 2 \int_{0}^{1} (\NNfct(x)-x) dx.
\end{split}
\end{equation*}%
\end{lemma}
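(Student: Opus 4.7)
The plan is to split the statement into the two analyticity claims and in each case reduce the relevant ReLU terms to affine expressions on $[0,1]$ that depend polynomially on the parameters being varied. For the first claim (analyticity in $(v,c)$ at any $\phi$), I would observe that for each $x \in [0,1]$ the integrand
\begin{equation*}
\Big( c + \ssum{j=1}{N} v_j \ReLU{w_j x + b_j} - x \Big)^2
\end{equation*}
is a polynomial of degree $2$ in $(v,c)$ whose coefficients are bounded measurable functions of $x$, since $x \mapsto \ReLU{w_j x + b_j}$ is bounded on the compact interval $[0,1]$. Integrating over $[0,1]$ therefore yields a polynomial (and hence analytic) function of $(v,c)$, and mixed partial derivatives of any order in $v$ and $c$ can be computed by differentiating under the integral, since every such derivative of the integrand is again a polynomial in $(v,c)$ with bounded $x$-coefficients and dominated convergence applies.

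For the second claim, the key observation is that in each of the three listed situations the term $\ReLU{w_j x + b_j}$ coincides on $[0,1]$ with a fixed affine expression in $x$ (either $0$ or $w_j x + b_j$), and this identification persists under small perturbations of $(w_j, b_j)$. For an inactive neuron, $w_j x + b_j < 0$ on the compact set $[0,1]$, so by strict inequality the perturbed expression also stays negative on $[0,1]$. For a semi-active neuron, $w_j = 0 < b_j$ implies $\tilde w_j x + \tilde b_j > 0$ on $[0,1]$ whenever $(\tilde w_j, \tilde b_j)$ is sufficiently close to $(w_j, b_j)$. For a type-1-active neuron with breakpoint neither $0$ nor $1$, the endpoint inequalities $b_j > 0$ and $w_j + b_j > 0$ are strict, hence stable. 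Thus one obtains a neighborhood of $\phi$ on which $v_j \ReLU{\tilde w_j x + \tilde b_j}$ equals either $0$ or $\tilde v_j (\tilde w_j x + \tilde b_j)$ for every $x \in [0,1]$.

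It then follows that, locally around $\phi$, the integrand is a polynomial in $(w_j, b_j, v, c)$ whose coefficients are bounded measurable functions of $x$ (depending on the frozen $w_k, b_k$ for $k \neq j$ through the corresponding ReLU terms), so that integrating over $[0,1]$ yields a polynomial---and therefore analytic---function in $(w_j, b_j, v, c)$. Iterated differentiation under the integral is justified by the same polynomial structure via dominated convergence. The explicit formulas then follow by differentiating the squared integrand pointwise: on the complement of $I_j$ (which in all three cases is either empty or all of $[0,1]$) both the $j$-th summand and its $w_j, b_j, v_j$-derivatives vanish, so these three partial derivatives reduce to integrals over $I_j$, while the $c$-derivative is an integral over all of $[0,1]$. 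The main (though mild) obstacle is the neighborhood-stability of the neuron type: one must ensure that no new breakpoint enters $[0,1]$ under perturbation, which is precisely why type-1-active neurons with breakpoint at $0$ or $1$ must be excluded and why the strict-inequality arguments above are needed.
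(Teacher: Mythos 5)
Your proposal is correct and follows essentially the same route as the paper: analyticity in $(v,c)$ because the loss is a polynomial in those coordinates, and in the three listed cases the ReLU term reduces stably to $0$ or to $w_jx+b_j$ on a neighborhood of $\phi$, making the loss a local polynomial in $(w_j,b_j,v,c)$, with differentiation under the integral justified by dominated convergence. Your added detail on why the strict endpoint inequalities make the neuron type neighborhood-stable is exactly the (implicit) content of the paper's ``sufficiently small neighborhood'' step.
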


\begin{proof}
	For the first part, note that $\Loss$ is a polynomial in the coordinates $(v,c)$.
Secondly, assume that the $j^{th}$ hidden neuron of $\phi^0 \in \R^{3N+1}$ is inactive.
Then for all $\phi$ in a sufficiently small neighborhood of $\phi^0$ and all $x \in [0,1]$ we have $\ReLU{w_jx+b_j} = 0$.
Hence, $\Loss$ is constant in the coordinates $(w_j,b_j)$ near $\phi^0$ and it is a polynomial in $(w_j,b_j,v,c)$.
Thirdly, assume that the $j^{th}$ hidden neuron of $\phi^0$ is semi-active or type-1-active with breakpoint neither 0 nor 1.
Then for all $\phi$ in a sufficiently small neighborhood of $\phi^0$ and all $x \in [0,1]$ we have $\ReLU{w_jx+b_j} = w_jx+b_j$.
In particular, $\Loss$ is a polynomial in the coordinates $(w_j,b_j,v,c)$ near $\phi^0$.
The statement about differentiating under the integral follows from dominated convergence.
\end{proof}

In regions of the parameter space not covered by \cref{LemLossSmooth}, we cannot guarantee as much regularity of the loss function, but we can still hope for differentiability.
Indeed, we already noted in the proof of \cref{LemDiffGlobalMinima} that the realization function $\R^{3N+1} \rightarrow C([0,1],\R), ~\phi \mapsto \NNfct|_{[0,1]}$ is locally Lipschitz continuous.
So, it follows from Rademacher's theorem that $\Grad$ is, in fact, equal to the true gradient $\nabla \Loss$ of $\Loss$ almost everywhere.
In the next result, we obtain insights about the measure-zero set on which $\Grad$ may not be the true gradient.

\begin{lemma}
\label{LemLossDiff}
	For all $j \in \{1,\dots,N\}$, the right-hand partial derivatives $\partial^+ \Loss(\phi) / \partial w_j$ and $\partial^+ \Loss(\phi) / \partial b_j$ exist everywhere and are given by
\begin{equation*}
	\frac{\partial^+}{\partial w_j} \Loss(\phi) = 2 v_j \int_{I_j} x(\NNfct(x)-x) dx \quad \text{and} \quad \frac{\partial^+}{\partial b_j} \Loss(\phi) = 2 v_j \int_{I_j} (\NNfct(x)-x) dx.
\end{equation*}%
Moreover, if the $j^{th}$ hidden neuron is flat or non-degenerate, then $\Loss$ is differentiable in $(w_j,b_j,v,c)$ and, in particular, the right-hand partial derivatives $\partial^+ \Loss(\phi) / \partial w_j$ and $\partial^+ \Loss(\phi) / \partial b_j$ are proper partial derivatives.
\end{lemma}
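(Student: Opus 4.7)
The plan is to handle the two assertions separately: for the right-hand partial derivative formulas I will perform a direct dominated-convergence calculation on the difference quotient, while for differentiability I will expand the loss difference to first order and control the remainder neuron by neuron. Fix $j \in \{1,\dots,N\}$ and let $e_k$ denote the unit vector in the $w_j$-direction. The key pointwise observation is that, since $x \in [0,1]$ is non-negative, one has $\lim_{h \downarrow 0} h^{-1}(\ReLU{(w_j+h)x + b_j} - \ReLU{w_j x + b_j}) = x \cdot \IndFct{I_j}(x)$ at every $x \in [0,1]$; the only delicate case is $w_j x + b_j = 0$, where the increment equals $\ReLU{hx} = hx$ for small $h > 0$, confirming the limit. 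Writing
\begin{equation*}
	\frac{\Loss(\phi+he_k) - \Loss(\phi)}{h} = \int_0^1 \tfrac{f_{\phi+he_k}(x) - \NNfct(x)}{h} \big( f_{\phi+he_k}(x) + \NNfct(x) - 2x \big) dx
\end{equation*}
and noting that the difference quotient of $\ReLU{\cdot}$ is bounded by $x$ by Lipschitzness while $f_{\phi+he_k} \to \NNfct$ uniformly on $[0,1]$, dominated convergence yields the claimed formula for $\partial^+\Loss/\partial w_j$. The analogous computation in the $b_j$-direction gives the $\partial^+\Loss/\partial b_j$ formula.

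For the second assertion, I decompose
\begin{equation*}
	\Loss(\phi + \psi) - \Loss(\phi) = 2\int_0^1 (\NNfctSpecial(x) - \NNfct(x))(\NNfct(x) - x) dx + \int_0^1 (\NNfctSpecial(x) - \NNfct(x))^2 dx.
\end{equation*}
The second integral is $O(\|\psi\|^2)$ by the local Lipschitz continuity of $\phi \mapsto \NNfct|_{[0,1]}$ already exploited in \cref{LemDiffGlobalMinima}. For the first integral, the contribution of the $j$-th neuron to $\NNfctSpecial - \NNfct$ can be written as $v_j \delta_j(x) + \Delta v_j \ReLU{(w_j+\Delta w_j)x + b_j + \Delta b_j}$, where $\delta_j(x) = \ReLU{(w_j+\Delta w_j)x + b_j + \Delta b_j} - \ReLU{w_j x + b_j}$. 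In the flat case $v_j = 0$, only the second summand survives and it equals $\Delta v_j \ReLU{w_j x + b_j} + \Delta v_j \cdot O(\|\psi\|)$, yielding linearity in $\Delta v_j$ and matching the vanishing derivatives in $w_j, b_j$ predicted by the first part. In the non-degenerate case, I use the identity $\delta_j(x) = (\Delta w_j \cdot x + \Delta b_j)\IndFct{I_j}(x) + r(x)$, where the remainder $r$ is supported on the symmetric difference $I_j \triangle I_j^{\mathrm{new}}$ between the old and perturbed activation sets. Contributions from the other neurons (whose $(w_k, b_k)$ are held fixed) and from $(v, c)$ are linear in the variation, so only $r$ remains to be controlled.

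The main obstacle is bounding $r$ in the cases where $I_j$ is unstable under small perturbations, namely the semi-inactive case and the type-1-active case with breakpoint at $0$ or $1$. A case distinction on $\mathrm{sgn}(w_j)$ and on the signs of $\Delta w_j, \Delta b_j$ shows that the measure of $I_j \triangle I_j^{\mathrm{new}}$ is $O(\|\psi\|)$ and that the affine expression $(w_j + \Delta w_j)x + b_j + \Delta b_j$ is $O(\|\psi\|)$ on this thin strip, so $\|r\|_{L^\infty} = O(\|\psi\|)$ and $\|r\|_{L^1} = O(\|\psi\|^2)$. Multiplying by the bounded factor $(\NNfct - x)$ yields the desired $o(\|\psi\|)$ error. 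Assembling the contributions of all neurons and the $c$-coordinate produces a linear map that serves as the total derivative of $\Loss$ in $(w_j, b_j, v, c)$, and this linear map must agree with the right-hand partial derivatives from the first assertion along the coordinate directions.
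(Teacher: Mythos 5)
Your proposal is correct and follows essentially the same route as the paper: a first-order expansion of the loss whose quadratic term is controlled by local Lipschitz continuity of the realization map and whose remainder is supported on the symmetric difference $I_j \triangle I_j^{\mathrm{new}}$, where the affine expression is $O(\|\psi\|)$. The only cosmetic differences are that you obtain the one-sided derivative formulas by dominated convergence on the difference quotient rather than extracting them from the same expansion, and you bound the remainder by $O(\|\psi\|^2)$ where the paper settles for $o(\|h\|)$ via the measure of the symmetric difference tending to zero.
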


\begin{proof}
	Let $\phi \in \R^{3N+1}$ be arbitrary and denote by $\phi_h$, $h = (h^1,h^2) \in \R^2$, the network with the same coordinates as $\phi$ except in the $j^{th}$ hidden neuron, where $\phi_h$ has coordinates $w_j + h^1$ and $b_j + h^2$.
We use the notation $I_j^h$ for the interval $I_j$ associated to $\phi_h$ and denote
\begin{equation*}
	\epsilon = \Loss(\phi_h) - \Loss(\phi) - 2 v_j h^1 \int_{I_j} x(\NNfct(x)-x) dx - 2 v_j h^2 \int_{I_j} (\NNfct(x)-x) dx.
\end{equation*}%
The proof is complete if we can show that $\epsilon$ goes to zero faster than $(h^1,h^2)$.
To do that, we estimate the two terms of the last line of
\begin{equation*}
\begin{split}
	\epsilon &= \int_{0}^{1} (\NNfctPath{h}(x)-\NNfct(x))^2 dx + 2 \int_{0}^{1} (\NNfctPath{h}(x)-\NNfct(x))(\NNfct(x)-x) dx - 2 v_j \int_{I_j} (h^1x+h^2)(\NNfct(x)-x) dx \\
	&= \int_{0}^{1} (\NNfctPath{h}(x)-\NNfct(x))^2 dx + 2 v_j \int_{0}^{1} (w_jx+b_j+h^1x+h^2) (\NNfct(x)-x) (\IndFct{I_j^h}(x) - \IndFct{I_j}(x)) dx.
\end{split}
\end{equation*}%
To control the first term, we use local Lipschitz continuity of the realization function, which yields a constant $L>0$ depending only on $\phi$ so that $|\NNfctPath{h}(x)-\NNfct(x)| \leq L(|h^1|+|h^2|)$ uniformly on $[0,1]$ for all sufficiently small $h$.
To estimate the second term, we note that the absolute value of $\IndFct{I_j^h} - \IndFct{I_j}$ is the indicator function of the symmetric difference $I_j \triangle I_j^h$.
By definition of these sets, we obtain the bound $|w_jx+b_j| \leq |h^1x + h^2|$ for any $x \in I_j \triangle I_j^h$.
This yields
\begin{equation*}
	\frac{|\epsilon|}{|h^1|+|h^2|} \leq L^2 (|h^1|+|h^2|) + 4 |v_j| \int_{0}^{1} |\NNfct(x)-x| \IndFct{I_j \triangle I_j^h}(x) dx.
\end{equation*}%
The term $L^2 (|h^1|+|h^2|)$ vanishes as $h \rightarrow 0$.
We need to argue that the second term also vanishes as $h \rightarrow 0$.
If the $j^{th}$ hidden neuron is flat, then the second term is trivially zero.
On the other hand, if the $j^{th}$ hidden neuron is non-degenerate, then the Lebesgue measure of $I_j \triangle I_j^h$ tends to zero as $h \rightarrow 0$.
Thus, in this case, the integral also vanishes as $h \rightarrow 0$.
If the $j^{th}$ hidden neuron is non-flat degenerate, then we consider the directional derivatives from the right, i.e.\ with $h^1,h^2 \downarrow 0$.
But then $I_j = [0,1] = I_j^h$, so $\IndFct{I_j \triangle I_j^h}$ is constantly zero.
\end{proof}

It is well known that a multivariate function is continuously differentiable if it has continuous partial derivatives.
The following result is a slight extension for the loss function $\Loss$.

\begin{lemma}
\label{LemLossContDiff}
	The loss function $\Loss$ is continuously differentiable on the set of networks without degenerate neurons.
In addition, $\Loss$ is differentiable at networks without non-flat degenerate neurons.
\end{lemma}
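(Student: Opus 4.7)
The plan is to treat the two assertions separately, building on the partial-derivative formulas in \cref{LemLossSmooth,LemLossDiff}. For the first assertion, let $U \subseteq \R^{3N+1}$ denote the open set of networks without degenerate neurons. At every $\phi \in U$ each neuron is flat or non-degenerate, so \cref{LemLossDiff} yields existence of the partial derivatives in $w_j$ and $b_j$, while \cref{LemLossSmooth} supplies those in $v$ and $c$. The key step is to verify that these partials depend continuously on $\phi$ throughout $U$. Since each is given by an integral of a polynomial in $\NNfct$ and the coordinates of $\phi$ over $I_j$ (or over $[0,1]$), continuity reduces to (a) continuous dependence of $\NNfct|_{[0,1]}$ on $\phi$ in the supremum norm (as already used in the proof of \cref{LemDiffGlobalMinima}) together with (b) continuous dependence of $I_j$ on $(w_j,b_j)$ in symmetric-difference measure. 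Statement (b) is clear on the interior of each neuron-type region, because the endpoints of $I_j$ are either $0,1$ or the breakpoint $-b_j/w_j$, a continuous function of $(w_j,b_j)$ when $w_j \ne 0$; at semi-inactive points the set $I_j$ either degenerates to a single endpoint or shrinks to a small interval near an endpoint, and its Lebesgue measure tends to zero continuously. Once continuity of the partials is established, the first assertion follows from the standard fact that continuous partial derivatives on an open set imply continuous differentiability.

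For the second assertion, fix $\phi$ without non-flat degenerate neurons and partition the hidden-neuron indices into the non-degenerate set $A$ and the flat degenerate set $B$, so that $v_j = w_j = b_j = 0$ for every $j \in B$. For a perturbation $h \in \R^{3N+1}$ with $\|h\|$ small, I would start from
\begin{equation*}
	\Loss(\phi+h) - \Loss(\phi) = 2 \int_0^1 (\NNfct(x)-x)(f_{\phi+h}(x) - \NNfct(x)) \, dx + \int_0^1 (f_{\phi+h}(x) - \NNfct(x))^2 \, dx
\end{equation*}
and reduce differentiability to a first-order expansion of $f_{\phi+h} - \NNfct$ in $h$. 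The quadratic integral is $O(\|h\|^2)$ by local Lipschitzness of the realization map. The linear integral splits by coordinate block: the $c$-block gives a clean linear term; each $j \in A$ contributes the expected linear-in-$h$ term together with an error that becomes $o(\|h\|)$ after pairing with $\NNfct(x)-x$ (exactly as in the proof of \cref{LemLossDiff}, the additional cross term $h_{v_j} \cdot O(\|h\|)$ being $O(\|h\|^2)$); and each $j \in B$ contributes $h_{v_j}\, \ReLU{h_{w_j} x + h_{b_j}}$, which is uniformly $O(\|h\|^2)$ on $[0,1]$ because $v_j = 0$. Assembling these estimates produces differentiability at $\phi$ with gradient given by the formulas in \cref{LemLossSmooth,LemLossDiff}.

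The main obstacle is the flat degenerate block in the second assertion: in every neighborhood of such a $\phi$ there are points with non-flat degenerate neurons, at which $\Loss$ need not be differentiable, so one cannot expect continuous differentiability there and must argue pointwise. The saving grace is that a flat degenerate neuron enters the perturbed realization only through the product $h_{v_j}\, \ReLU{h_{w_j} x + h_{b_j}}$, which is genuinely quadratic in $h$; this is consistent with the fact that the formulas from \cref{LemLossDiff} evaluate to zero at such a neuron, so no first-order information is lost.
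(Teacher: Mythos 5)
Your proposal is correct and follows essentially the same route as the paper: existence of the first-order partials from \cref{LemLossSmooth,LemLossDiff} plus continuity of $I_j$ in $(w_j,b_j)$ away from degeneracy for the first claim, and for the second claim the observation that a flat degenerate neuron enters the perturbed realization only through $h_{v_j}\ReLU{h_{w_j}x+h_{b_j}}=O(\|h\|^2)$. The only (harmless) organizational difference is that the paper splits $\phi$ into a flat-degenerate sub-network and a non-degenerate sub-network and invokes the already-proved first statement for the latter, whereas you re-run the block-wise expansion of \cref{LemLossDiff} directly.
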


\begin{proof}
	The preceding two results established existence of all partial derivatives of first order at networks without degenerate neurons.
Furthermore, these partial derivatives are continuous in the network parameters.
This is clear for $(v,c)$ and it also holds for $(w,b)$ because the endpoints of $I_j$ vary continuously in $w_j$ and $b_j$ as long as not both are zero.
This concludes the first statement.

To prove that $\Loss$ is still differentiable if flat degenerate neurons appear, assume without loss of generality that the first $M \leq N$ hidden neurons of $\phi \in \R^{3N+1}$ are flat degenerate and the remaining $N-M$ hidden neurons are non-degenerate.
Denote by $\phi_1 \in \R^{3M+1}$ the network comprised of the first $M$ hidden neurons of $\phi$ (with zero outer bias) and by $\phi_2 \in \R^{3(N-M)+1}$ the network comprised of the last $N-M$ hidden neurons.
We write $\Loss_{N-M}$ for the loss defined on networks with $N-M$ hidden neurons.
Then, for any perturbation $\phi_h = \phi+h \in \R^{3N+1}$ of $\phi$ with the same decomposition into its first $M$ and last $N-M$ hidden neurons, we can write $\NNfctPath{h}(x) = \NNfctPath{1,h}(x) + \NNfctPath{2,h}(x)$ and, hence,
\begin{equation*}
	\Loss(\phi_h) = \int_0^1 \NNfctPath{1,h}(x)^2 dx + 2 \int_0^1 \NNfctPath{1,h}(x) ( \NNfctPath{2,h}(x)-x) dx + \Loss_{N-M}(\phi_{2,h}).
\end{equation*}%
Since the first $M$ hidden neurons of $\phi$ are flat degenerate, $\NNfctPath{1,h}(x)$ is given by
\begin{equation*}
	\NNfctPath{1,h}(x) = \ssum{j=1}{M} h_{j+2N} \ReLU{h_jx+h_{j+N}}.
\end{equation*}%
In particular, $\NNfctPath{1,h}(x)/\|h\| \rightarrow 0$ uniformly in $x \in [0,1]$ as $h \rightarrow 0$.
Denote by $\tilde{h}$ the last $3(N-M)$ components of $h$.
Since $\phi_2$ has only non-degenerate neurons, $\Loss_{N-M}$ is differentiable at $\phi_2$ with some gradient $A$.
Using that the first $M$ hidden neurons of $\phi$ do not contribute to its realization and, hence, $\Loss(\phi) = \Loss_{N-M}(\phi_2)$, we find
\begin{equation*}
\begin{split}
	\lim_{h \rightarrow 0} \frac{\Loss(\phi_h)-\Loss(\phi)-A\tilde{h}}{\|h\|} = &\lim_{h \rightarrow 0} \frac{\Loss_{N-M}(\phi_{2,h})-\Loss_{N-M}(\phi_2)-A\tilde{h}}{\|\tilde{h}\|} \frac{\|\tilde{h}\|}{\|h\|} \\
	+ &\lim_{h \rightarrow 0} \frac{1}{\|h\|} \Big( \int_0^1 \NNfctPath{1,h}(x)^2 dx + 2 \int_0^1 \NNfctPath{1,h}(x) ( \NNfctPath{2,h}(x)-x) dx  \Big) = 0.
\end{split}
\end{equation*}%
This proves differentiability of $\Loss$ at $\phi$.
\end{proof}

So far, we have seen that, in some regions of the parameter space, the loss is differentiable while in others it may not be.
In the following, we show that, for type-2-active neurons, one even has twice continuous differentiability.

\begin{lemma}
\label{LemTwiceDiff}
	Let $i,j \in \{1,\dots,N\}$.
If the $i^{th}$ and $j^{th}$ hidden neuron of $\phi \in \R^{3N+1}$ are type-2-active, then $\Loss$ is twice continuously differentiable in $(w_i,w_j,b_i,b_j,v,c)$ in a neighborhood of $\phi$ in $\R^{3N+1}$.
\end{lemma}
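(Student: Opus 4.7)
The plan is to exploit the piecewise polynomial structure that arises from the type-2-activity of the $i^{th}$ and $j^{th}$ neurons. The first step is to observe that there is an open neighborhood $U$ of $\phi$ in $\R^{3N+1}$ on which both $w_i$ and $w_j$ remain nonzero and both breakpoints $t_i = -b_i/w_i$ and $t_j = -b_j/w_j$ lie in $(0,1)$; this is immediate from openness of the type-2-active condition. Consequently, on $U$ the maps $(w_i, b_i) \mapsto t_i$ and $(w_j, b_j) \mapsto t_j$ are $C^\infty$, and on $I_k$ for $k \in \{i,j\}$ one has the factorization $w_k x + b_k = w_k (x - t_k)$.

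Next, I would decompose the realization as $\NNfct(x) = \tilde{f}(x) + v_i \ReLU{w_i x + b_i} + v_j \ReLU{w_j x + b_j}$, where $\tilde{f}(x) = c + \sum_{k \notin \{i,j\}} v_k \ReLU{w_k x + b_k}$ is continuous piecewise affine in $x$. Since $w_k, b_k$ for $k \notin \{i,j\}$ are not among the coordinates in which we differentiate, the breakpoints in $x$ of $\tilde{f}$ are frozen as $(w_i, w_j, b_i, b_j, v, c)$ vary, and the coefficients of $\tilde{f}$ on its affine pieces depend affinely on $(v, c)$. Expanding $(\NNfct(x) - x)^2$ and integrating over $[0,1]$ then writes $\Loss$ as a finite sum of three shapes: (a) a background term $\int_0^1 (\tilde{f}(x)-x)^2\,dx$, which is polynomial in $(v, c)$; (b) single-neuron terms of the form $\int_{I_k} Q(x) (x - t_k)^m\,dx$ with $k \in \{i,j\}$, $m \in \{1, 2\}$, and $Q$ a continuous piecewise polynomial in $x$ with frozen breakpoints; and (c) the mixed term $v_i v_j w_i w_j \int_{I_i \cap I_j} (x - t_i)(x - t_j)\,dx$.

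The crucial feature of shapes (b) and (c) is that the integrand vanishes at every variable endpoint of integration. For (b), Leibniz's rule gives that the derivative of $\int_{I_k} Q(x)(x - t_k)\,dx$ with respect to $t_k$ equals $-\int_{I_k} Q(x)\,dx$, because the boundary contribution at $t_k$ is killed by the factor $(x - t_k)$; a further differentiation in $t_k$ produces a term of the form $\pm Q(t_k)$ which is continuous in $t_k$ because $Q$ is continuous in $x$, even when $t_k$ traverses a frozen breakpoint of $Q$. The quadratic-vanishing case $m = 2$ is even smoother. For (c), I would split the analysis into the cases $t_i \le t_j$ and $t_i \ge t_j$ and compute directly: the double vanishing of $(x - t_i)(x - t_j)$ at the moving endpoint $\max(t_i, t_j)$ forces the one-sided first and second partials in $(t_i, t_j)$ to agree across the diagonal $t_i = t_j$. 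Composing with the $C^\infty$ chain rule via $t_k = -b_k/w_k$ then converts smoothness in $(t_i, t_j)$ into smoothness in $(w_i, b_i, w_j, b_j)$ and yields continuous second-order partials of $\Loss$ throughout $U$.

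I expect the main obstacle to be shape (c), because $I_i \cap I_j$ has a variable endpoint $\max(t_i, t_j)$ or $\min(t_i, t_j)$ that is only Lipschitz in $(t_i, t_j)$, so a naive differentiation would suggest discontinuous second derivatives. The resolution, to be verified by the explicit case split at $t_i = t_j$, is that the zero of $(x - t_i)(x - t_j)$ at this moving endpoint provides precisely enough vanishing to absorb the kink of $\max$ and $\min$. Once this is checked, summing the contributions of (a), (b), and (c) gives that $\Loss$ is twice continuously differentiable, in fact $C^\infty$, in the coordinates $(w_i, w_j, b_i, b_j, v, c)$ on $U$, as claimed.
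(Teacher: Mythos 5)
Your argument is correct for the claim as stated, and it rests on the same two pillars as the paper's proof: near $\phi$ the breakpoints $t_k=-b_k/w_k$ of the two type-2-active neurons stay in $(0,1)$ and depend smoothly on $(w_k,b_k)$, and one then differentiates under the integral sign with moving endpoints. The organization is genuinely different, though. The paper starts from the first-order partials already supplied by \cref{LemLossDiff}, e.g.\ $\partial\Loss(\phi)/\partial w_j = 2v_j\int_{I_j}x(\NNfct(x)-x)\,dx$, differentiates them once more via Leibniz's rule (in $(w_j,b_j)$) and dominated convergence (in the remaining coordinates), and checks continuity of the resulting expressions such as $\partial_{w_i}\partial_{w_j}\Loss(\phi)=2v_iv_j\int_{I_i\cap I_j}x^2\,dx$, where the only subtle point --- the kink of the endpoint of $I_i\cap I_j$ at $t_i=t_j$ --- is harmless because only continuity, not differentiability, of that integral is needed at second order. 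You instead expand $(\NNfct(x)-x)^2$ from scratch into background, single-neuron, and mixed terms and differentiate each twice, exploiting the vanishing of the integrands at the moving endpoints; this is longer but makes the behavior of the mixed term across $t_i=t_j$ fully explicit. One caveat: your closing assertion that $\Loss$ is in fact $C^\infty$ in these coordinates overreaches. For the single-neuron term $\int_{I_k}Q(x)(x-t_k)\,dx$ the second $t_k$-derivative is $\pm Q(t_k)$, whose further derivative jumps when $t_k$ crosses a frozen kink of $Q$ (i.e.\ a breakpoint of another neuron), and for the mixed term the third partial $\partial_{t_i}^3\int_{I_i\cap I_j}(x-t_i)(x-t_j)\,dx$ is discontinuous across $t_i=t_j$; so only $C^2$ holds in general, which is precisely what the lemma asserts.
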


\begin{proof}
	Note that we established twice continuous differentiability of $\Loss$ in $(v,c)$ in \cref{LemLossSmooth}.
Suppose the $i^{th}$ and $j^{th}$ hidden neuron of $\phi^0 = (w^0,b^0,v^0,c^0) \in \R^{3N+1}$ are type-2-active.
Since a small perturbation of a type-2-active neuron remains type-2-active and since a type-2-active neuron is non-degenerate, it follows from \cref{LemLossDiff} that $\Loss$ is differentiable in $(w_j,b_j)$ in a neighborhood $U \subseteq \R^{3N+1}$ of $\phi^0$ with partial derivatives
\begin{equation*}
	\frac{\partial}{\partial w_j} \Loss(\phi) = 2 v_j \int_{I_j} x(\NNfct(x)-x) dx \quad \text{and} \quad \frac{\partial}{\partial b_j} \Loss(\phi) = 2 v_j \int_{I_j} (\NNfct(x)-x) dx
\end{equation*}%
for any $\phi = (w,b,v,c) \in U$.
Because the $j^{th}$ hidden neuron is assumed to be type-2-active, the interval $I_j^0$ is exactly $[0,t_j^0]$ or $[t_j^0,1]$ for the breakpoint $t_j^0 = -b_j^0/w_j^0 \in (0,1)$.
Assume $I_j^0 = [0,t_j^0]$ as the other case is dealt with analogously.
By shrinking $U$ if necessary, we therefore integrate over $[0,-b_j/w_j]$ in the above partial derivatives for all $\phi = (w,b,v,c) \in U$.
In particular, the integration boundaries vary smoothly in $(w_j,b_j)$ in $U$.
So, it follows from Leibniz' rule that these partial derivatives are continuously differentiable with respect to $(w_j,b_j)$.
Furthermore, since $t_j = -b_j/w_j$ does not depend on $(w_i,b_i,v,c)$, it follows from dominated convergence that $\partial \Loss(\phi) / \partial w_j$ and $\partial \Loss(\phi) / \partial b_j$ are also differentiable with respect to $(w_i,b_i,v,c)$.
The mixed partial derivative with respect to $w_i$ and $w_j$ is given by
\begin{equation*}
	\frac{\partial}{\partial w_i} \frac{\partial}{\partial w_j} \Loss(\phi) = 2v_j \int_{I_j} x \, \frac{\partial}{\partial w_i} \NNfct(x) dx = 2v_iv_j \int_{I_i \cap I_j} x^2 dx.
\end{equation*}%
That the $i^{th}$ and $j^{th}$ hidden neuron are type-2-active ensures that $\int_{I_i \cap I_j} x^2 dx$ is continuous in $(w_i,w_j,b_i,b_j)$ and, hence, that $\partial^2 \Loss(\phi) / (\partial w_i \partial w_j)$ is continuous in $(w_i,w_j,b_i,b_j,v,c)$.
Analogous considerations show that all mixed partial derivatives with respect to $w_i,w_j,b_i,b_j,v,c$ up to second order exist and are continuous.
Thus, $\Loss$ restricted to $(w_i,w_j,b_i,b_j,v,c)$ is twice continuously differentiable in a neighborhood of $\phi^0$.
\end{proof}

\begin{remark}
	We mentioned in \cref{rem_proper_crit} that all critical points and local extrema of $\Loss$ are actually proper critical points and, hence, the classification actually does not deal with points of non-differentiability.
Furthermore, by modifying the Gradient Descent algorithm and the initialization in an appropriate way, one can ensure that the trajectories of the algorithm avoid any points of non-differentiability; see \cite{Wojtowytsch2020} and also the appendix in \cite{ChizatBach2020}.
Nonetheless, to formally prove the classification, including that all critical points are proper, an extensive regularity analysis of the loss function as done in this section is necessary.
\end{remark}


\subsection{Critical points of the loss function with affine realization}
\label{section_crit_const}

In this and the next section, we develop the building blocks necessary for proving the main result.
The first lemma establishes one direction of the equivalence in \cref{ThrmMain}.\eqref{ThrmMainLocMin}.

\begin{lemma}
\label{LemCritLocMin}
	Suppose $\phi \in \R^{3N+1}$ is centered and all of its hidden neurons satisfy one of the properties \eqref{ThrmMainLocMinInact}-\eqref{ThrmMainLocMinSemiIn2} in \cref{ThrmMain}.
Then $\phi$ is a local minimum of $\Loss$.
\end{lemma}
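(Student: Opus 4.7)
The plan is to show that under the hypotheses the realization $\NNfct$ is the constant $\tfrac{1}{2}$ on $[0,1]$, and that for all sufficiently small perturbations $\psi = \phi + \eta$ the loss can only increase thanks to a favorable sign alignment between the perturbed network's hidden-layer output and the linear function $x \mapsto \tfrac{1}{2} - x$.

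First, since every hidden neuron of $\phi$ is either inactive or semi-inactive, the ReLU output of each neuron vanishes on $[0,1]$ up to at most one point. Together with the centered condition $c = \tfrac{1}{2}$, this yields $\NNfct \equiv \tfrac{1}{2}$ on $[0,1]$ and $\Loss(\phi) = \tfrac{1}{12}$. For $\psi = \phi + \eta$, let $\tilde c$ denote the outer bias perturbation and set $g(x) = \ssum{j=1}{N} (v_j + \tilde v_j) \ReLU{(w_j + \tilde w_j)x + b_j + \tilde b_j}$, so that $\NNfctAlt(x) = \tfrac{1}{2} + \tilde c + g(x)$. Expanding $(\NNfctAlt(x) - x)^2 = ((\tfrac{1}{2} - x) + \tilde c + g(x))^2$ and using the identity $\int_0^1 (\tfrac{1}{2} - x)\, dx = 0$ yields the clean decomposition
\begin{equation*}
\Loss(\psi) - \Loss(\phi) = \int_0^1 [\tilde c + g(x)]^2 dx + 2 \int_0^1 \Big(\tfrac{1}{2} - x\Big) g(x)\, dx,
\end{equation*}
so the proof reduces to showing the second summand is non-negative for $\eta$ sufficiently small.

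Next, I would analyze the sign of $(\tfrac{1}{2} - x)g(x)$ neuron-by-neuron. Inactive neurons of $\phi$ remain inactive under small perturbations (as noted after \cref{def_types_of_neurons}), so contribute $0$ to $g$. For a semi-inactive neuron with $I_j = \{0\}$ and $v_j > 0$, one has $b_j = 0$ and $w_j < 0$, so the perturbed breakpoint $t_j^\eta = -\tilde b_j/(w_j + \tilde w_j)$ either leaves $[0,1]$ (leaving the neuron inactive, contributing $0$) or lies close to $0$, in which case the neuron is type-2-active on $[0, t_j^\eta] \subseteq [0, \tfrac{1}{2}]$; there its contribution is non-negative because $(w_j + \tilde w_j)x + \tilde b_j \geq 0$ and $v_j + \tilde v_j > 0$, while $\tfrac{1}{2} - x \geq 0$ on this interval, so the pointwise product is non-negative. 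The case of a semi-inactive neuron with $I_j = \{1\}$ and $v_j < 0$ is symmetric: its contribution is non-positive on a small interval near $x = 1$ where $\tfrac{1}{2} - x \leq 0$. Summing over $j$, $(\tfrac{1}{2} - x) g(x) \geq 0$ pointwise, which yields the required non-negativity.

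The main obstacle is the uniform smallness of the perturbation: one must ensure that every semi-inactive neuron realizes only these three scenarios simultaneously. This is handled by choosing $\|\eta\|$ smaller than $\tfrac{1}{2} \min_j |w_j|$ over the finitely many semi-inactive neurons of $\phi$, which preserves $\mathrm{sign}(w_j + \tilde w_j)$ and forces $t_j^\eta$ to remain in a small neighborhood of the appropriate endpoint of $[0,1]$.
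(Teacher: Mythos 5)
Your proof is correct and follows essentially the same route as the paper: the same neuron-by-neuron sign analysis (inactive neurons stay inactive; perturbed $I_j=\{0\}$, $v_j>0$ neurons contribute non-negatively on a small interval near $0$ where the residual $\tfrac12-x$ is non-negative, and symmetrically near $1$), with only a cosmetic difference in the final bookkeeping. Where you expand the square and use $\int_0^1(\tfrac12-x)\,dx=0$ to reduce everything to a non-negative cross term, the paper instead derives the pointwise bound $|\NNfctAlt(x)-x|\ge|c-x|$ and invokes optimality of the centered constant; both are valid.
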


\begin{proof}
	Denote by $J_0 \subseteq \{1,\dots,N\}$ the set of those hidden neurons of $\phi$ that satisfy \cref{ThrmMain}.\eqref{ThrmMainLocMinSemiIn1}, and, likewise, denote by $J_1 \subseteq \{1,\dots,N\}$ the set of those hidden neurons of $\phi$ that satisfy \cref{ThrmMain}.\eqref{ThrmMainLocMinSemiIn2}.
Write $\phi = (w^0,b^0,v^0,c^0)$ and consider $\psi = (w,b,v,c) \in U$ in a small neighborhood $U$ of $\phi$.
Since a small perturbation of an inactive neuron remains inactive, we have for all $\psi \in U$ and every $x \in [0,1]$ that
\begin{equation*}
	\NNfctAlt(x) = c + \ssum{j \in J_0 \cup J_1}{} v_j \ReLU{w_j x+b_j}
\end{equation*}%
if $U$ is small enough.
Moreover, for any $j \in J_0$ and $\psi \in U$, note that $\ReLU{w_jx+b_j} = 0$ for all $x \in [1/4,1]$.
Similarly, $\ReLU{w_jx+b_j} = 0$ for all $x \in [0,3/4]$ if $j \in J_1$.
Since we also know $v_j^0 > 0$ for all $j \in J_0$ and $v_j^0 < 0$ for all $j \in J_1$, we find that the realization of $\psi \in U$ satisfies
\begin{equation*}
	\NNfctAlt(x) =
	\begin{cases}
		c + \ssum{j \in J_0}{} v_j \ReLU{w_j x+b_j} \geq c &\text{if } x \in [0,1/4] \\
		c &\text{if } x \in [1/4,3/4] \\
		c + \ssum{j \in J_1}{} v_j \ReLU{w_j x+b_j} \leq c &\text{if } x \in [3/4,1]
	\end{cases}
\end{equation*}%
for sufficiently small $U$.
In particular, it follows that $|\NNfctAlt(x)-x| \geq |c-x|$ for all $x \in [0,1]$ and, because $\phi$ is centered, that
\begin{equation*}
	\Loss(\psi) \geq \int_{0}^{1} (c-x)^2 dx \geq \int_{0}^{1} (\tfrac{1}{2}-x)^2 dx = \Loss(\phi).
\end{equation*}%
Thus, $\phi$ is a local minimum.
\end{proof}

The proof of the next lemma revolves, for the most part, around the argument \eqref{ExampleArgumentMethod}, presented in \cref{section_methods}.
The last statement of the lemma paired with \cref{LemLossContDiff} shows that saddle points with an affine realization are also points of differentiability of $\Loss$.

\begin{lemma}
\label{LemCritAff}
	Suppose $\phi \in \R^{3N+1}$ is a critical point or a local extremum of $\Loss$ but not a global minimum and that $\NNfct$ is affine on $[0,1]$.
Then $\phi$ is centered and does not have any active or non-flat semi-active neurons, so, in particular, $\NNfct \equiv 1/2$.
Moreover, if $\phi$ is a saddle point, then it also does not have any non-flat degenerate neurons.
\end{lemma}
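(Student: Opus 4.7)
The plan is to follow the orthogonality template behind \eqref{ExampleArgumentMethod}. Writing the affine realization as $\NNfct(x)=ax+b$, note that $\Loss$ is a polynomial in $c$, so $\partial\Loss/\partial c(\phi)=0$ is available at any critical point and at any local extremum; this yields $\int_0^1(\NNfct(x)-x)\,dx=0$, which pins $b=(1-a)/2$. The task then reduces to showing that the presence of any active or non-flat semi-active neuron supplies a second independent linear relation forcing $a=1$: combined with the $c$-equation this would give $\NNfct\equiv x$ on $[0,1]$, contradicting the assumption that $\phi$ is not a global minimum. Once that is done, the only surviving neuron types---inactive, semi-inactive, flat semi-active, and flat or non-flat degenerate---each contribute zero to $\NNfct$ on $[0,1]$, so $\NNfct$ reduces to a constant which the $c$-equation forces to be $1/2$, giving both centeredness and $\NNfct\equiv 1/2$.

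To supply the required second equation I would split into cases by neuron type. If the $j$th neuron is type-1-active or non-flat semi-active, then $I_j=[0,1]$; in the non-flat subcase \cref{LemLossDiff} makes \eqref{CritCharacteristic} available and directly yields $\int_0^1 x(\NNfct(x)-x)\,dx=0$, whereas in the flat type-1-active subcase the third and fourth lines of \eqref{LossGradient} combined with $w_j\neq 0$ give the same identity. If the $j$th neuron is type-2-active with breakpoint $t_j\in(0,1)$ and $I_j\subsetneq[0,1]$, I would substitute $\NNfct(x)=ax+b$ into \eqref{CritCharacteristic} in the non-flat subcase, or into the single $v_j$-equation $\int_{I_j}(x-t_j)(\NNfct(x)-x)\,dx=0$ together with the $c$-equation in the flat subcase; a short computation on the explicit interval then produces a factor of $(a-1)$ times a quantity that cannot vanish when $t_j\in(0,1)$. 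The most delicate subcase is the flat type-2-active one, since \eqref{CritCharacteristic} is then unavailable (the $w_j$- and $b_j$-derivatives vanish trivially from $v_j=0$) and only one integral equation remains; the non-degeneracy of the resulting $2\times 2$ linear system in $(a-1,b)$ hinges precisely on the fact that the integration domain $I_j$ is a proper subinterval whose endpoint lies strictly inside $(0,1)$.

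For the saddle-point statement, assume $\phi$ is a saddle, hence a genuine zero of $\Grad$, and suppose for contradiction that some $j$th neuron is non-flat degenerate. Then $I_j=[0,1]$ by the definition of $I_j$, and \cref{LemLossDiff} applied at $\phi$ gives
\[ 0 = (\Grad(\phi))_{w_j} = 2v_j \int_0^1 x(\NNfct(x)-x)\,dx = 2v_j \int_0^1 x(\tfrac{1}{2}-x)\,dx = -\tfrac{v_j}{6}, \]
where the second equality uses the conclusion $\NNfct\equiv 1/2$ from the first part. This forces $v_j=0$, contradicting non-flatness.
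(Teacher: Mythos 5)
Your proposal is correct and follows essentially the same route as the paper's proof: the two-moment argument \eqref{ExampleArgumentMethod} applied via \eqref{CritCharacteristic} for neurons with $I_j=[0,1]$ and for non-flat type-2-active neurons, the explicit evaluation of the $v_j$-equation on a proper subinterval to rule out flat type-2-active neurons, the $c$-equation to get constancy equal to $1/2$, and the right-hand $w_j$-derivative to exclude non-flat degenerate neurons at saddle points. The only difference is cosmetic: you keep the slope $a$ symbolic and derive $a=1$ contradictions, while the paper first concludes constancy and then substitutes $1/2$ into the flat type-2-active integral.
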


\begin{proof}
	We know from \cref{LemLossDiff} that $\Loss$ is differentiable in those coordinates that correspond to non-degenerate neurons and its partial derivatives must vanish at $\phi$.
Thus, the argument using \eqref{ExampleArgumentMethod} shows that $\phi$ does not have any type-1-active or non-flat semi-active neurons.
If $\phi$ had a non-flat type-2-active neuron, say the $j^{th}$, then we could, using the same argument with $I_j$ in place of $[0,1]$, conclude that $\NNfct(x) = x$ on $I_j$.
But since $\NNfct$ was assumed to be affine, this could only be true if $\phi$ were a global minimum.
Having no type-1-active or non-flat type-2-active neurons, $\NNfct$ must be constant.
By the fourth equation of \eqref{LossGradient}, this constant is $1/2$, so $\phi$ is centered.

Next, suppose that the $j^{th}$ hidden neuron is flat type-2-active.
In particular, $I_j = [0,t_j]$ or $I_j = [t_j,1]$, where $t_j = -b_j/w_j \in (0,1)$ is the breakpoint.
After dividing by $2w_j$, the integral in the third equation of \eqref{LossGradient} evaluates to
\begin{equation*}
	0 = \int_{I_j} (x-t_j)(\tfrac{1}{2}-x) dx =
	\begin{cases}
		-\tfrac{1}{6} t_j^2 (\frac{3}{2} - t_j) &\text{if } I_j = [0,t_j] \\
		-\tfrac{1}{6} (1-t_j)^2(t_j+\frac{1}{2}) &\text{if } I_j = [t_j,1]
	\end{cases} \Bigg\} \ne 0,
\end{equation*}%
yielding a contradiction.
Lastly, suppose $\phi$ is a saddle point.
If there were a non-flat degenerate neuron, then $\Grad(\phi) = 0$ would imply $0 = \int_0^1 x(\NNfct(x)-x)dx$.
But since we know that $\NNfct(x) \equiv 1/2$, this cannot be.
\end{proof}

The next lemma serves as the basis of \cref{ThrmMain}.\eqref{ThrmMainSaddleTriv}.
However, note that we also consider the possibility of a non-flat degenerate neuron, whereas \cref{ThrmMain}.\eqref{ThrmMainSaddleDeg} requires the degenerate neuron to be flat.
This generalization is needed in the proof of \cref{ThrmMain}.\eqref{ThrmMainLocMin}, which will be given later by way of contradiction.
In addition, \cref{LemCritSaddleT} shows that non-global local minima with an affine realization cannot have non-flat degenerate neurons and, hence, are points of differentiability of $\Loss$ by \cref{LemLossContDiff}.
Together with the preceding lemma and \cref{LemDiffGlobalMinima,LemLossContDiff}, we conclude that all critical points and local extrema with an affine realization are points of differentiability.

\begin{lemma}
\label{LemCritSaddleT}
	Suppose $\phi \in \R^{3N+1}$ is a critical point or a local extremum of $\Loss$ but not a global minimum and that $\NNfct$ is affine on $[0,1]$.
Suppose further that at least one of its hidden neurons satisfies one of the properties \eqref{ThrmMainSaddleSemiAc}-\eqref{ThrmMainSaddleSemiIn2} in \cref{ThrmMain} or is degenerate.
Then $\phi$ is a saddle point.
\end{lemma}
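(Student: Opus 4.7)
My strategy is to show that in every neighborhood of $\phi$ there exists $\psi$ with $\Loss(\psi) < \Loss(\phi)$; then $\phi$ is not a local minimum, and by \cref{LemNoMaxima} not a local maximum either, so $\phi$ is not a local extremum. Combined with the standing hypothesis that $\phi$ is a critical point or a local extremum, this will force $\phi$ to be a critical point and hence a saddle point. First I would apply \cref{LemCritAff}: since $\NNfct$ is affine and $\phi$ is not a global minimum, $\phi$ must be centered with $\NNfct \equiv 1/2$ and $\Loss(\phi) = 1/12$. The guiding observation is that the best affine approximation of the identity on $[0,1]$ is the identity itself, so introducing any small positive slope into the realization (while preserving centering) should strictly decrease the loss.

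For the flat semi-active case and the flat degenerate case, I would perturb the offending $j$-th neuron into a type-1-active one with a small positive slope: set $w_j \mapsto \eta$, $v_j \mapsto \delta$ with $\eta,\delta > 0$ small, and compensate with $c \mapsto c - \delta b_j$ so that the new realization on $[0,1]$ equals $1/2 + \delta\eta x$. A direct expansion of $\int_0^1 (1/2 + (s-1)x)^2\, dx$ in $s := \delta\eta$ at $0$ yields $1/12 - s/6 + O(s^2)$, which drops below $1/12$ for small $s > 0$.

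For the two semi-inactive cases, which are symmetric, I would treat the one with $I_j = \{0\}$ and $v_j \leq 0$: here $b_j = 0$ and $w_j < 0$. Perturbing $b_j \mapsto \epsilon > 0$, and also $v_j \mapsto -\delta$ with $\delta > 0$ small in the subcase $v_j = 0$, turns the $j$-th neuron type-2-active on $[0, t_j]$ with $t_j = \epsilon/|w_j| \downarrow 0$. On $[0,t_j]$ the new realization $1/2 + v_j(w_j x + \epsilon)$ lies between $1/2 - |v_j|\epsilon$ and $1/2$, hence is closer to $x \approx 0$ than the old value $1/2$. A leading-order expansion of $\int_0^{t_j} [(\NNfctAlt(x)-x)^2 - (1/2-x)^2]\, dx$ should yield a contribution of order $-|v_j w_j|\, t_j^2/2 < 0$, and the realization outside $[0,t_j]$ is unchanged.

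Finally, for the non-flat degenerate case the loss need not be differentiable at $\phi$, and I would argue via one-sided directional derivatives supplied by \cref{LemLossDiff}. The right-hand derivative of $\Loss$ with respect to $w_j$ at $\phi$ equals $2 v_j \int_0^1 x(1/2 - x)\, dx = -v_j/6$, already strictly negative when $v_j > 0$. When $v_j < 0$, I would instead use the coordinated perturbation $w_j \mapsto -h$, $b_j \mapsto h$ with $h > 0$ small, under which the $j$-th neuron is type-1-active with realization $1/2 + v_j h(1-x)$; a short computation gives a directional derivative $v_j/6 < 0$. The main obstacle I expect is the semi-inactive case, because the ReLU activation pattern genuinely changes there: it requires careful bookkeeping of the newly active subinterval, of the sign of the added contribution relative to the target $x$, and of the interplay between the perturbations of $b_j$ and $v_j$ in the boundary subcase $v_j = 0$.
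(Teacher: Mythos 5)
Your plan is correct: every case closes, and the overall skeleton is the same as the paper's --- reduce via \cref{LemNoMaxima} to exhibiting a strict descent direction, use \cref{LemCritAff} to get that $\phi$ is centered with $\NNfct \equiv 1/2$, and then treat each offending neuron type by an explicit perturbation. The differences are tactical. For a flat semi-active neuron the paper computes the Hessian of $\Loss$ restricted to $(w_j,v_j)$, finds determinant $-R^2<0$ with $R = 2\int_0^1 x(\NNfct(x)-x)\,dx = -\tfrac{1}{6} \ne 0$, and descends along an eigenvector; your perturbation $w_j\mapsto\eta$, $v_j\mapsto\delta$, $c\mapsto c-\delta b_j$ exploits exactly the same cross term but avoids the spectral detour, gives the exact value $\tfrac{1}{12}-s/6+s^2/3$ with $s=\delta\eta$, and covers the flat degenerate neuron by the identical formula. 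For semi-inactive and degenerate neurons the paper uses a single one-parameter path moving $(w_j,b_j,v_j)$ jointly and expands $\Loss(\phi_s)-\Loss(\phi)$ in $s$, whereas you perturb only $b_j$ (plus $v_j$ in the flat subcase) for semi-inactive neurons and use one-sided directional derivatives in $w_j$, respectively in $(w_j,b_j)$, for non-flat degenerate ones; your leading-order terms $-\tfrac12|v_jw_j|t_j^2$ and $\mp v_j/6$ are correct, and in the semi-inactive case one can even argue pointwise that the squared error strictly decreases on $[0,t_j)$ and is unchanged elsewhere. The one place to be slightly careful in a full write-up is the semi-inactive subcase with $v_j=0$, where the gain is of order $\delta t_j^2$ while the quadratic remainder is $O(\delta^2 t_j^3)$, so the sign is indeed controlled for all small $\delta,\epsilon>0$; your bookkeeping there is consistent with this.
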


\begin{proof}
	Since, by \cref{LemNoMaxima}, $\Loss$ cannot have any local maxima, it is enough to show that $\Loss$ is strictly decreasing along some direction starting from $\phi$.
First, assume that the $j^{th}$ hidden neuron of $\phi$ is flat semi-active.
Then \cref{LemLossSmooth} asserts smoothness of the loss in the coordinates of the $j^{th}$ hidden neuron and
\begin{equation*}
\begin{split}
	\frac{\partial}{\partial w_j} \frac{\partial}{\partial w_j} \Loss(\phi) &= 2 v_j \int_{0}^{1} x \, \frac{\partial}{\partial w_j} \NNfct(x) dx = 0, \\
	\frac{\partial}{\partial v_j} \frac{\partial}{\partial w_j} \Loss(\phi) &= 2 v_j \int_{0}^{1} x \, \frac{\partial}{\partial v_j} \NNfct(x) dx + 2 \int_{0}^{1} x (\NNfct(x)-x) dx \\
	&= 2 \int_{0}^{1} x (\NNfct(x)-x) dx =: R, \\
	\frac{\partial}{\partial v_j} \frac{\partial}{\partial v_j} \Loss(\phi) &= 2 \int_{0}^{1} (w_jx+b_j) \, \frac{\partial}{\partial v_j} \NNfct(x) dx =: S,
\end{split}
\end{equation*}%
where we used that the $j^{th}$ hidden neuron is flat.
Since $2\int_{0}^{1} (\NNfct(x)-x) dx = \frac{\partial}{\partial c} \Loss(\phi) = 0$, we must have $R \ne 0$ for otherwise $\phi$ would be a global minimum by the argument \eqref{ExampleArgumentMethod}.
This yields
\begin{equation*}
	\det
	\begin{pmatrix}
		\frac{\partial}{\partial w_j} \frac{\partial}{\partial w_j} \Loss(\phi) & \frac{\partial}{\partial w_j} \frac{\partial}{\partial v_j} \Loss(\phi) \\
		\frac{\partial}{\partial v_j} \frac{\partial}{\partial w_j} \Loss(\phi) & \frac{\partial}{\partial v_j} \frac{\partial}{\partial v_j} \Loss(\phi)
	\end{pmatrix}
	= \det
	\begin{pmatrix}
		0 & R \\
		R & S
	\end{pmatrix}
	= -R^2 < 0.
\end{equation*}%
In particular, this matrix must have a strictly negative eigenvalue, and a second order expansion of the loss restricted to $(w_j,v_j)$ shows that $\Loss$ is strictly decreasing along the direction of an eigenvector associated to this negative eigenvalue.

Next, assume that the $j^{th}$ hidden neuron is semi-inactive with $I_j = \{0\}$ and $v_j \leq 0$ (case one) or that it is degenerate with $v_j \leq 0$ (case two).
In either case, note that $b_j = 0$ and consider the perturbation $\phi_s = (w^s,b^s,v^s,c^s)$, $s \in [0,1]$, of $\phi = \phi_0$ given by $w_j^s = w_j-s$, $b_j^s = -sw_j^s$, and $v_j^s = v_j-s$ (all other coordinates coincide with those of $\phi$).
Note that we have $w_j^s < 0$ and $v_j^s < 0$ for all $s \in (0,1]$ in both cases.
For simplicity, denote $a^s = v_j^sw_j^s$.
By \cref{LemCritAff}, we already know that $\phi$ is centered and does not have any active or non-flat semi-active neurons.
Thus, for every $s,x \in [0,1]$, we can write
\begin{equation*}
	\NNfctPath{s}(x) = c + v_j^s \ReLU{w_j^sx+b_j^s} = c + v_j^s \ReLU{w_j^s(x-s)} = \tfrac{1}{2} + a^s(x-s)\IndFct{[0,s]}(x).
\end{equation*}%
Using this formula, we have for all $s \in [0,1]$
\begin{equation*}
\begin{split}
	\Loss(\phi_s) - \Loss(\phi) &= \int_{0}^{s} [a^s(x-s)]^2 dx - \int_{0}^{s} 2a^s(x-s) (x-\tfrac{1}{2}) dx \\
	&= \tfrac{1}{3} a^s(a^s+1) s^3 - \tfrac{1}{2} a^s s^2 \\
	&=
	\begin{cases}
		-\tfrac{1}{2} v_jw_j s^2 + \mathcal{O}(s^3) &\text{if } w_j \ne 0 \ne v_j \\
	-\tfrac{1}{2}|v_j+w_j| s^3 + \mathcal{O}(s^4) &\text{if } w_j \ne 0 = v_j \text{ or } w_j = 0 \ne v_j \\
	-\tfrac{1}{2}s^4 + \mathcal{O}(s^5) &\text{if } w_j = 0 = v_j,	
	\end{cases}
\end{split}
\end{equation*}%
which is strictly negative for small $s>0$.
Hence, $\phi$ is a saddle point.

Lastly, assume that the $j^{th}$ hidden neuron is semi-inactive with $I_j = \{1\}$ and $v_j \geq 0$ (case one) or that it is degenerate with $v_j > 0$ (case two).
This is dealt with the same way as the previous step.
Let $\phi_s \in \R^{3N+1}$, $s \in [0,1]$, be given by $w_j^s = w_j+s$, $b_j^s = -(1-s)w_j^s$, and $v_j^s = v_j+s$.
This time, we have $w_j^s > 0$ and $a^s = v_j^sw_j^s > 0$ for all $s \in (0,1]$ in both cases.
The realization of $\phi_s$ on $[0,1]$ is given for all $s,x \in [0,1]$ by
\begin{equation*}
	\NNfctPath{s}(x) = c + v_j^s \ReLU{w_j^sx+b_j^s} = \tfrac{1}{2} + a^s(x-1+s)\IndFct{[1-s,1]}(x).
\end{equation*}%
Essentially by the same computation as in the previous step,
\begin{equation*}
\begin{split}
	\Loss(\phi_s) - \Loss(\phi) &= \tfrac{1}{3} a^s(a^s+1) s^3 - \tfrac{1}{2} a^s s^2 \\
	&=
	\begin{cases}
		-\tfrac{1}{2} v_jw_j s^2 + \mathcal{O}(s^3) &\text{if } w_j \ne 0 \ne v_j \\
	-\tfrac{1}{2}(v_j+w_j) s^3 + \mathcal{O}(s^4) &\text{if } w_j \ne 0 = v_j \text{ or } w_j = 0 \ne v_j \\
	-\tfrac{1}{2}s^4 + \mathcal{O}(s^5) &\text{if } w_j = 0 = v_j,	
	\end{cases}
\end{split}
\end{equation*}%
from which we conclude that $\phi$ is a saddle point.
\end{proof}

This finishes the treatment of the affine case, and we now tend to the more involved non-affine case in the next section.


\subsection{Critical points of the loss function with non-affine realization}
\label{section_crit_non_const}

The following lemma is the main tool for this section.
It generalizes the argument \eqref{ExampleArgumentMethod} that we presented in \cref{section_methods}; see \cref{LemUniqueAffGeneral}.\eqref{LemUniqueAffGeneralConsequNzero} below.
This lemma captures the combinatorics of piecewise affine functions satisfying conditions of the form \eqref{CritCharacteristic}.

\begin{lemma}
\label{LemUniqueAffGeneral}
	Let $n \in \N_0$, $A_0,\dots,A_n,B_0,\dots,B_n,q_0,\dots,q_{n+1} \in \R$ satisfy $q_0 < \dots < q_{n+1}$, and consider a function $f \in C([q_0,q_{n+1}],\R)$ satisfying for all $i \in \{0,\dots,n\}$, $x \in [q_i,q_{i+1}]$ that $f(x) = A_ix+B_i$ and $\int_{q_i}^{q_{i+1}} (f(y)-y) dy = 0$.
Then
\begin{enumerate}[\rm (i)]\itemsep = 0em

\item we have for all $i \in \{0,\dots,n\}$ that
\begin{equation}
\label{LemUniqueAffGeneralFormulas}
\begin{split}
	A_i - 1 &= (-1)^i \frac{q_1-q_0}{q_{i+1}-q_i} (A_0-1), \\
	B_i &= (-1)^{i+1} \frac{q_{i+1}+q_i}{2} \frac{q_1-q_0}{q_{i+1}-q_i} (A_0-1),
\end{split}
\end{equation}%

\item\label{LemUniqueAffGeneralConsequ1} we have $f = \ID{[q_0,q_{n+1}]}$ $\iff$ $\forall i \in \{0,\dots,n\} \colon A_i = 1, ~B_i = 0$ \newline
$\phantom{\text{we have}}$ $\iff$ $\exists i \in \{0,\dots,n\} \colon A_i = 1, ~B_i = 0$ $\iff$ $\exists i \in \{0,\dots,n\} \colon f|_{[q_i,q_{i+1}]} = \ID{[q_i,q_{i+1}]}$,

\item for all $i \in \{0,\dots,n\}$ we have $\mathrm{sign}(A_i-1) = (-1)^i \mathrm{sign}(A_0-1)$.

\end{enumerate}
If, in addition, $0 = \int_{q_0}^{q_{n+1}} x(f(x)-x) dx$, then
\begin{enumerate}[\rm (i)]\itemsep = 0em
\setcounter{enumi}{3}

\item we have $0 = (A_0-1) \sum_{i=0}^{n} (-1)^i (q_{i+1}-q_i)^2$,

\item\label{LemUniqueAffGeneralConsequSum} if $f \ne \ID{[q_0,q_{n+1}]}$, then $0 = \ssum{i=0}{n} (-1)^{i+1} (q_{i+1}-q_i)^2$,

\item\label{LemUniqueAffGeneralConsequNzero} if $n=0$, then $f = \ID{[q_0,q_1]}$.

\end{enumerate}
\end{lemma}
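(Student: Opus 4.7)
The plan is to derive everything from two algebraic observations: the per-interval mean-zero condition $\int_{q_i}^{q_{i+1}}(f(y)-y)\,dy=0$ combined with continuity of $f$ at the interior nodes $q_1,\dots,q_n$. Writing $\alpha_i := A_i-1$, evaluating the integral directly gives $B_i = -\alpha_i (q_i+q_{i+1})/2$. Substituting this into the continuity equation $A_iq_{i+1}+B_i = A_{i+1}q_{i+1}+B_{i+1}$ and simplifying, the $q_{i+1}$-terms cancel pairwise and the recursion collapses to the clean form $\alpha_{i+1}(q_{i+2}-q_{i+1}) = -\alpha_i(q_{i+1}-q_i)$ for $i \in \{0,\dots,n-1\}$. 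A straightforward induction solves the recursion in closed form, which together with $B_i = -\alpha_i(q_i+q_{i+1})/2$ establishes the two formulas in (i). Item (iii) is then immediate since $q_1-q_0$ and $q_{i+1}-q_i$ are positive. For item (ii), the closed form makes every implication transparent: the identity $f = \ID{[q_0,q_{n+1}]}$ forces all $A_i=1$ and $B_i=0$; the reverse (``if some $A_j=1,B_j=0$'') propagates via the recursion to $\alpha_0 = 0$ and hence to all $\alpha_i=0,B_i=0$; and the fourth equivalence (``$f|_{[q_j,q_{j+1}]}$ is the identity for some $j$'') reduces to the existential coefficient statement since $f$ is affine on each piece.

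For the second half of the lemma, I would compute $\int_{q_i}^{q_{i+1}} x(A_ix+B_i-x)\,dx$ after substituting $B_i = -\alpha_i(q_i+q_{i+1})/2$. The key algebraic identity is
\begin{equation*}
\frac{q_{i+1}^3-q_i^3}{3} - \frac{q_i+q_{i+1}}{2}\cdot\frac{q_{i+1}^2-q_i^2}{2} = \frac{(q_{i+1}-q_i)^3}{12},
\end{equation*}
which may be checked directly or, more elegantly, by the translation $x \mapsto x-(q_i+q_{i+1})/2$ which centers the interval at zero and annihilates the odd terms. Consequently the $i$-th integral simplifies to $\alpha_i(q_{i+1}-q_i)^3/12$. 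Summing over $i$, substituting the closed form $\alpha_i = (-1)^i \alpha_0 (q_1-q_0)/(q_{i+1}-q_i)$ from (i), and dividing by the positive quantity $(q_1-q_0)/12$ yields item (iv). Item (v) then follows from (ii) since $f \ne \ID{[q_0,q_{n+1}]}$ forces $A_0-1 \ne 0$; and item (vi) is the specialization of (iv) to $n=0$, where the single-term sum $(q_1-q_0)^2 > 0$ forces $A_0=1$ and hence $f = \ID{[q_0,q_1]}$ via (ii).

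I do not anticipate a serious obstacle, since the entire argument is a chain of elementary manipulations controlled by the single parameter $A_0-1$. The most error-prone steps are the sign bookkeeping in the induction for (i) and the algebraic simplification displayed above; however, once the recursion is written in the symmetric form $\alpha_{i+1}(q_{i+2}-q_{i+1}) = -\alpha_i(q_{i+1}-q_i)$, the alternating-sign structure that permeates (iii), (iv), and (v) essentially writes itself.
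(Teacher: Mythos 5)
Your proposal is correct and follows essentially the same route as the paper: the integral condition gives $B_i=-\tfrac12(q_{i+1}+q_i)(A_i-1)$, continuity yields the recursion $A_{i+1}-1=-\tfrac{q_{i+1}-q_i}{q_{i+2}-q_{i+1}}(A_i-1)$, induction gives (i), and the same computation $\int_{q_0}^{q_{n+1}}x(f(x)-x)\,dx=\tfrac{q_1-q_0}{12}(A_0-1)\sum_{i=0}^n(-1)^i(q_{i+1}-q_i)^2$ delivers (iv)--(vi). The explicit centering identity you record for the cubic term is a nice touch but does not change the argument.
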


\begin{proof}
	First note that we must have $A_iq_{i+1} + B_i = A_{i+1}q_{i+1} + B_{i+1}$ for all $i \in \{0,\dots,n-1\}$.
Moreover, the assumption $0 = \int_{q_i}^{q_{i+1}} (f(x)-x) dx$ is equivalent to $B_i = - \frac{1}{2} (q_{i+1}+q_i) (A_i-1)$.
Combining these yields
\begin{equation*}
	A_{i+1} - 1 = - \frac{q_{i+1}-q_i}{q_{i+2}-q_{i+1}} (A_i-1)
\end{equation*}%
for all $i \in \{0,\dots,n-1\}$.
Induction then proves the formula for $A_i-1$, and the formula for $B_i$ follows.
Lastly, by plugging the formulas for $A_i$ and $B_i$ into $f(x)$, we compute
\begin{equation*}
	\int_{q_0}^{q_{n+1}} x(f(x)-x) dx = \sum_{i=0}^{n} \int_{q_i}^{q_{i+1}} x((A_i-1)x+B_i) dx = \frac{q_1-q_0}{12} (A_0-1) \sum_{i=0}^{n} (-1)^i (q_{i+1}-q_i)^2.
\end{equation*}%
The remaining items follow immediately.
\end{proof}

In order to apply this lemma later on, let us verify that our network always satisfies the condition $\int_{q_i}^{q_{i+1}} (f(y)-y) dx = 0$ for suitable choices of $q_i$ and $q_{i+1}$.

\begin{lemma}
\label{LemZeroInt}
	Suppose $\phi \in \R^{3N+1}$ is a critical point or a local extremum of $\Loss$ and denote by $0 = q_0 < q_1 < \dots < q_n < q_{n+1} = 1$, for $n \in \N_0$, the roughest partition such that $\NNfct$ is affine on all subintervals $[q_i,q_{i+1}]$.
Then we have for all $i \in \{0,\dots,n\}$ that
\begin{equation*}
	\int_{q_i}^{q_{i+1}} (\NNfct(x)-x) dx = 0.
\end{equation*}%
\end{lemma}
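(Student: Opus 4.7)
The plan is to deduce the identity from a careful application of the critical-point / local-extremum condition at one judiciously chosen hidden neuron per interior breakpoint $q_i$. My first step is to observe that, for any $i \in \{1,\dots,n\}$, the fact that $q_i$ belongs to the \emph{roughest} partition on whose subintervals $\NNfct$ is affine forces the slope of $\NNfct$ to jump genuinely at $q_i$. A hidden neuron $j$ with $w_j \ne 0$ and breakpoint $-b_j/w_j = q_i$ contributes a slope jump of exactly $v_j|w_j|$ at $q_i$, while every other neuron is affine in a neighborhood of $q_i$. Hence the total slope jump $\sum v_j|w_j|$, where the sum ranges over such $j$, is nonzero, which yields at least one index $j_i \in \{1,\dots,N\}$ with $v_{j_i} \ne 0$, $w_{j_i} \ne 0$, and $-b_{j_i}/w_{j_i} = q_i$. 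Because $w_{j_i}\ne 0$, this neuron is non-degenerate, and because $q_i \in (0,1)$, it is type-2-active.

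Second, I would invoke the regularity results of \cref{section_diff_loss}. Since the $j_i$-th hidden neuron is non-degenerate, \cref{LemLossDiff} gives that $\Loss$ is differentiable in the coordinate $b_{j_i}$ at $\phi$ with
\[
    \frac{\partial}{\partial b_{j_i}}\Loss(\phi) \;=\; 2 v_{j_i} \int_{I_{j_i}} (\NNfct(x)-x)\,dx,
\]
and \cref{LemLossSmooth} gives differentiability in $c$ with $\tfrac{\partial}{\partial c}\Loss(\phi) = 2\int_0^1 (\NNfct(x)-x)\,dx$. Whether $\phi$ is a critical point, in which case all right-hand partial derivatives of $\Loss$ vanish at $\phi$, or merely a local extremum, in which case the one-variable restriction of $\Loss$ to any coordinate line through $\phi$ has a local extremum at $\phi$ and, being differentiable in $b_{j_i}$ and in $c$, must have vanishing derivative there, both of these partial derivatives are zero at $\phi$. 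Combined with $v_{j_i}\ne 0$, this produces $\int_{I_{j_i}}(\NNfct(x)-x)\,dx = 0$ and $\int_0^1 (\NNfct(x)-x)\,dx = 0$.

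Third, I would conclude by a telescoping argument. Since the $j_i$-th neuron is type-2-active with breakpoint $q_i$, the set $I_{j_i}$ equals $[0,q_i]$ or $[q_i,1]$, so the two identities above force both $\int_0^{q_i}(\NNfct(x)-x)\,dx = 0$ and $\int_{q_i}^1 (\NNfct(x)-x)\,dx = 0$ for every $i\in\{1,\dots,n\}$. Setting $F(q) := \int_0^q (\NNfct(x)-x)\,dx$, this means $F(q_i) = 0$ for every $i \in \{0,\dots,n+1\}$, and the desired identity
\[
    \int_{q_i}^{q_{i+1}}(\NNfct(x)-x)\,dx \;=\; F(q_{i+1}) - F(q_i) \;=\; 0
\]
follows for each $i\in\{0,\dots,n\}$.

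The main obstacle is the opening step, namely extracting a suitable non-flat type-2-active neuron at each interior breakpoint. It rests on the elementary but crucial observation that a kink of the piecewise affine function $\NNfct$ can only arise from a neuron with nonzero outer weight and nonzero inner weight, and cannot be created by flat, semi-active, semi-inactive, inactive, or degenerate neurons; so the roughness of the partition directly hands us the neuron to which the critical-point (or local-extremum) condition is applied. Everything after this reduction is a straightforward application of the regularity of $\Loss$ in one coordinate at a time.
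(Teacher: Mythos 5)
Your proposal is correct and follows essentially the same route as the paper: extract a non-flat type-2-active neuron at each interior breakpoint, use the vanishing of the partial derivatives in $b_j$ and $c$ to get $\int_0^{q_i}(\NNfct(x)-x)\,dx=0$ for all $i$, and telescope. The only difference is that you spell out (correctly, via the slope-jump $v_j|w_j|$) why such a neuron must exist at each $q_i$, a step the paper merely asserts.
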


\begin{proof}
	First, note that $\phi$ must have a non-flat type-2-active neuron whose breakpoint is $q_i$, for all $i \in \{1,\dots,n\}$.
From the fourth line of \eqref{LossGradient}, we know that $\int_0^1 (\NNfct(x)-x) dx = 0$.
This and the second line of \eqref{LossGradient} imply, for any non-flat type-2-active neuron $j$,
\begin{equation*}
	\int_{I_j} (\NNfct(x)-x) dx = 0 = \int_{[0,1] \backslash I_j} (\NNfct(x)-x) dx.
\end{equation*}%
Since either $I_j = [0,t_j]$ or $[0,1] \backslash I_j = [0,t_j]$, it follows that $\int_0^{q_i} (\NNfct(x)-x) dx = 0$, for all $i \in \{0,\dots,n+1\}$.
Taking differences of these integrals yields the desired statement.
\end{proof}

Next, as a first application of \cref{LemUniqueAffGeneral}, we prove that only global minima can have type-1-active or non-flat semi-active neurons.
We already established this in \cref{LemCritAff} in the affine case, but now we extend it to the non-affine case.
The statement from \cref{LemCritAff} about saddle points not having non-flat degenerate neurons also holds in the non-affine case, but we will not see this until later in \cref{section_main_proof_ID_target}.

\begin{lemma}
\label{LemCritGen}
	Suppose $\phi \in \R^{3N+1}$ is a critical point or a local extremum of $\Loss$ but not a global minimum.
Then $\phi$ does not have any type-1-active or non-flat semi-active neurons.
\end{lemma}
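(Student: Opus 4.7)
The plan is to argue by contradiction: assume that $\phi$ admits a non-flat type-1-active or non-flat semi-active neuron, say the $j^{th}$ one; my goal is to deduce that $\NNfct(x) = x$ for all $x \in [0,1]$, contradicting the hypothesis that $\phi$ is not a global minimum. In either case $I_j = [0,1]$ and the neuron is non-flat and non-degenerate, so by \cref{LemLossDiff} the loss $\Loss$ is differentiable at $\phi$ in the coordinates $(w_j,b_j,v_j,c)$; those partial derivatives must vanish since $\phi$ is a critical point or a local extremum. Reading off the first and second lines of \eqref{LossGradient} then yields the two moment identities
\begin{equation*}
	\int_{0}^{1} (\NNfct(x) - x)\,dx = 0 = \int_{0}^{1} x\,(\NNfct(x) - x)\,dx.
\end{equation*}

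Next, I consider the roughest partition $0 = q_0 < q_1 < \dots < q_n < q_{n+1} = 1$ such that $\NNfct$ is affine on each $[q_i, q_{i+1}]$, as in \cref{LemZeroInt}. If $n = 0$, then $\NNfct$ is affine on $[0,1]$ and the two moment identities above, combined with the elementary argument around \eqref{ExampleArgumentMethod}, immediately give $\NNfct = \ID{[0,1]}$, which is the desired contradiction. So I may assume $n \geq 1$. The combinatorial crux is the observation that every internal breakpoint $q_i$ with $1 \leq i \leq n$ must be the breakpoint of at least one non-flat type-2-active neuron of $\phi$: by \cref{def_types_of_neurons}, flat and degenerate neurons contribute nothing to $\NNfct$, while inactive, semi-inactive, semi-active, and type-1-active neurons either have no breakpoint at all or a breakpoint outside the open interval $(0,1)$; hence a kink of $\NNfct$ inside $(0,1)$ can only be produced by a non-flat type-2-active neuron.

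For each such $i$, fix a non-flat type-2-active neuron $k_i$ of $\phi$ with breakpoint $q_i$, so that $I_{k_i} \in \{[0,q_i],[q_i,1]\}$. Because $k_i$ is non-flat and non-degenerate, \eqref{CritCharacteristic} applies to it and gives $\int_{I_{k_i}} x\,(\NNfct(x) - x)\,dx = 0$. Subtracting from the moment identity on $[0,1]$ and using the decomposition $\int_0^1 = \int_0^{q_i} + \int_{q_i}^{1}$ then shows that $\int_0^{q_i} x\,(\NNfct(x) - x)\,dx = 0$, regardless of which of the two cases for $I_{k_i}$ occurs. Differencing across consecutive breakpoints (with the conventions $q_0 = 0$ and $q_{n+1} = 1$) yields $\int_{q_i}^{q_{i+1}} x\,(\NNfct(x) - x)\,dx = 0$ for every $i \in \{0, \dots, n\}$, and together with \cref{LemZeroInt} both moment conditions against $\NNfct(x) - x$ vanish on each subinterval $[q_i, q_{i+1}]$. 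Since $\NNfct$ is affine on each such subinterval, applying \cref{LemUniqueAffGeneral}.\eqref{LemUniqueAffGeneralConsequNzero} to the single piece $[q_i, q_{i+1}]$ forces $\NNfct(x) = x$ there, and concatenating gives $\NNfct = \ID{[0,1]}$, the required contradiction. I expect the main obstacle to be the combinatorial step above — certifying that every internal kink of the realization really is produced by at least one non-flat type-2-active neuron; once this is clean, the remainder is merely bookkeeping with the integral identities across the partition.
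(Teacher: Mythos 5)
Your overall strategy is sound and close to the paper's: both proofs extract the two moment identities on $[0,1]$ from the offending neuron, use the fact that every interior kink of $\NNfct$ must come from a non-flat type-2-active neuron to localize the moment conditions to the pieces of the partition, and then invoke \cref{LemUniqueAffGeneral} to force $\NNfct = \ID{[0,1]}$. The only real difference in route is at the end: you establish \emph{both} moment conditions on \emph{every} subinterval $[q_i,q_{i+1}]$ and apply the single-piece case \cref{LemUniqueAffGeneral}.\eqref{LemUniqueAffGeneralConsequNzero} to each one, whereas the paper only establishes them on the first piece $[q_0,q_1]$, concludes $\NNfct|_{[q_0,q_1]} = \ID{[q_0,q_1]}$ there, and then propagates to all of $[0,1]$ via \cref{LemUniqueAffGeneral}.\eqref{LemUniqueAffGeneralConsequ1} together with \cref{LemZeroInt}. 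Both are valid; yours costs a little more bookkeeping, the paper's leans on the propagation item of the key lemma.

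There is, however, one genuine gap: you only treat \emph{non-flat} type-1-active neurons, while the lemma excludes \emph{all} type-1-active neurons, flat or not (the qualifier ``non-flat'' in the statement attaches only to ``semi-active''; compare \cref{PropIDtarget}.\eqref{PropMainSaddle}, where ``no type-1-active neurons'' appears without a flatness restriction, and note that a flat type-1-active neuron genuinely cannot occur at a non-global critical point, since $\partial\Loss/\partial v_j = 2w_j\int_0^1 x(\NNfct(x)-x)\,dx \ne 0$ at the saddle realizations). Your derivation of the moment identities reads off the first and second lines of \eqref{LossGradient}, which requires dividing by $v_j$ and hence gives no information when $v_j = 0$. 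The fix is exactly the observation the paper records after \eqref{CritCharacteristic}: for a type-1-active neuron one has $I_j = [0,1]$ and $w_j \ne 0$, so the \emph{third} line of \eqref{LossGradient}, $0 = 2\int_0^1(w_jx+b_j)(\NNfct(x)-x)\,dx = 2w_j\int_0^1 x(\NNfct(x)-x)\,dx + 2b_j\int_0^1(\NNfct(x)-x)\,dx$, combined with the fourth line, yields $\int_0^1 x(\NNfct(x)-x)\,dx = 0$ even when the neuron is flat. With that added, the rest of your argument goes through unchanged.
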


\begin{proof}
	For affine $\NNfct$, the result has been established in \cref{LemCritAff}.
Thus, suppose $\NNfct$ is not affine on $[0,1]$ and that $\phi$ has a type-1-active or non-flat semi-active neuron.
Denote by $0 = q_0 < q_1 < \dots < q_n < q_{n+1} = 1$, for $n \in \N$, the roughest partition such that $\NNfct$ is affine on all subintervals $[q_i,q_{i+1}]$.
We know from \cref{LemZeroInt} that $\int_{q_0}^{q_1} (\NNfct(x)-x) dx = 0$, and we claim that also $\int_{q_0}^{q_1} x(\NNfct(x)-x) dx = 0$.
To prove this, note that $\phi$ must have at least one non-flat type-2-active neuron (without loss of generality the first) with breakpoint $-b_1/w_1 = q_1$.
Moreover, \eqref{CritCharacteristic} shows that $0 = \int_{0}^{1} x(\NNfct(x)-x) dx$ if applied with the type-1-active or non-flat semi-active neuron.
Using this and $\frac{\partial}{\partial w_1} \Loss(\phi) = 0$, one deduces the claim as in the proof of \cref{LemZeroInt}.
Hence, we conclude $\NNfct|_{[q_0,q_1]} = \ID{[q_0,q_1]}$ with the argument \eqref{ExampleArgumentMethod}.
But then we also get $\NNfct = \ID{[q_0,q_{n+1}]}$ by \cref{LemUniqueAffGeneral}.\eqref{LemUniqueAffGeneralConsequ1} and \cref{LemZeroInt}, yielding a contradiction.
\end{proof}

We now turn to the proof of \cref{ThrmMain}.\eqref{ThrmMainSaddleNonTriv}.
More precisely, we show that critical points and local extrema whose realizations are not affine must take a very specific form.
The only degree of freedom of their realization functions is a single parameter varying over the set of even integers in $\{1,\dots,N\}$.
Examples of the possible realizations are shown in Figure \ref{fig_plot_realizations}, which illustrates that the degree of freedom is reflected by the number of breakpoints.
Once this number is fixed, the shape of the function is uniquely determined: the breakpoints are equally spaced in the interval $[0,1]$, and the slope of the realization on each affine segment alternates between two given values in such a way that the function symmetrically oscillates around the diagonal.
In addition, we deduce in \cref{LemCritNonAff} that critical points and local extrema can realize these functions only in a very specific way, limited by few combinatorial choices.

\begin{lemma}
\label{LemCritNonAff}
	Suppose $\phi \in \R^{3N+1}$ is a critical point or a local extremum of $\Loss$ but not a global minimum and that $\NNfct$ is not affine on $[0,1]$.
Denote by $0 = q_0 < q_1 < \dots < q_n < q_{n+1} = 1$, for $n \in \N$, the roughest partition such that $\NNfct$ is affine on all subintervals $[q_i,q_{i+1}]$, and denote by $K_i \subseteq \{1,\dots,N\}$ the set of all type-2-active neurons of $\phi$ whose breakpoint is $q_i$.
Then the following hold:
\begin{enumerate}[\rm (i)]\itemsep = 0em

\item $n$ is even,

\item $q_i = \frac{i}{n+1}$ for all $i \in \{1,\dots,n\}$,

\item $-b_j/w_j \in \{q_1,\dots,q_n\}$ for all type-2-active neurons $j \in \{1,\dots,N\}$ of $\phi$,

\item $\mathrm{sign}(w_j) = (-1)^{i+1}$ for all $i \in \{1,\dots,n\}$, $j \in K_i$,

\item $\ssum{j \in K_i}{} v_j w_j = 2/(n+1)$ for all $i \in \{1,\dots,n\}$,

\item $\phi$ is centered,

\item\label{LemCritNonAffFct} $\NNfct(x) = x - \frac{(-1)^i}{n+1} \big( x - \frac{i+1/2}{n+1} \big)$ for all $i \in \{0,\dots,n\}$, $x \in [q_i,q_{i+1}]$.

\end{enumerate}
\end{lemma}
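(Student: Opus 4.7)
The plan is to apply \cref{LemUniqueAffGeneral} to $\NNfct$ on $[0,1]$ with the partition $0 = q_0 < q_1 < \dots < q_{n+1} = 1$; the required integral conditions $\int_{q_i}^{q_{i+1}}(\NNfct(x) - x)\,dx = 0$ are supplied by \cref{LemZeroInt}. Writing $A_i, B_i$ for the slope and intercept of $\NNfct$ on $[q_i, q_{i+1}]$ and $a_k := (q_{k+1}-q_k)^2$, the first parts of \cref{LemUniqueAffGeneral} give $A_i - 1 = (-1)^i \frac{q_1}{q_{i+1}-q_i}(A_0-1)$ together with the corresponding formula for $B_i$; and since $\phi$ is not a global minimum, $\NNfct \ne \ID{[0,1]}$, which forces $A_i \ne 1$ for every $i$. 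By \cref{LemCritGen}, $\phi$ has no type-1-active or non-flat semi-active neurons, hence every slope change of $\NNfct$ originates from a type-2-active neuron whose breakpoint matches, and in particular each $K_i$ is nonempty. Part (iii) then follows by contradiction: a type-2-active neuron $j$ with breakpoint $t \in (q_i, q_{i+1})$ would satisfy $\int_{I_j}(\NNfct-x)\,dx = 0$ by \eqref{CritCharacteristic}, which combined with the integrals from \cref{LemZeroInt} would force the integral of $\NNfct-x$ over a strict sub-interval of $[q_i, q_{i+1}]$ to vanish; together with vanishing over the entire $[q_i, q_{i+1}]$ this forces $A_i = 1$, contradicting the above.

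The combinatorial heart of the proof is (i), (ii), and (iv). For $j \in K_i$, the condition $\int_{I_j} x(\NNfct-x)\,dx = 0$ from \eqref{CritCharacteristic}, combined with the last item of \cref{LemUniqueAffGeneral} applied to $\NNfct$ restricted to $[0, q_i]$ (when $w_j < 0$) or $[q_i, 1]$ (when $w_j > 0$), produces the partial alternating sum identity $S_i := \sum_{k=0}^{i-1}(-1)^k a_k = 0$ or $T_i := \sum_{k=i}^n (-1)^k a_k = 0$, respectively. Since $S_1 = a_0 > 0$, no $j \in K_1$ has $w_j < 0$; since $|T_n| = a_n > 0$, no $j \in K_n$ has $w_j > 0$. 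If both signs appeared in a single $K_i$, then $S_i + T_i = 0$ would equal the total alternating sum $\Sigma$, but $\Sigma = S_1 + T_1 = a_0 > 0$, a contradiction; hence each $K_i$ carries a single sign. If two consecutive $K_i, K_{i+1}$ shared the same sign, differencing their vanishing partial sums would yield $a_i = 0$, which is impossible; so signs alternate. Starting with $+$ at $i=1$ and terminating with $-$ at $i=n$ forces $n$ to be even and $\mathrm{sign}(w_j) = (-1)^{i+1}$ for $j \in K_i$, proving (i) and (iv). The combined constraints $S_{2l} = 0$ (even $i$) and $T_{2l-1} = 0$ (odd $i$) then propagate $a_0 = a_1 = \dots = a_n$, so all subintervals have length $1/(n+1)$ and $q_i = i/(n+1)$, proving (ii).

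The remaining items are direct consequences. The slope jump at $q_i$ equals $(-1)^{i+1}\sum_{j \in K_i} v_j w_j$ by the sign information, while by the formula above it also equals $2(-1)^i(A_0-1)$, so $\mu := \sum_{j \in K_i} v_j w_j = -2(A_0-1)$ is independent of $i$. On $[0, q_1]$ exactly the neurons in $\bigcup_{i\ \text{even}} K_i$ are active and each contributes slope $v_j w_j$, so $A_0 = (n/2)\mu$; combining with $\mu = -2(A_0-1)$ yields $A_0 = n/(n+1)$ and $\mu = 2/(n+1)$, proving (v). Substituting these values into the formula for $B_0$ gives $B_0 = 1/(2(n+1)^2)$, and comparing $\NNfct(0) = B_0$ with the direct expression $\NNfct(0) = c - \mu \sum_{i\ \text{even}} q_i$ yields $c = 1/2$, proving (vi). Finally, (vii) is obtained by substituting $A_i = 1 + (-1)^{i+1}/(n+1)$ and the corresponding $B_i$ into $\NNfct(x) = A_i x + B_i$ on $[q_i, q_{i+1}]$. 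The main obstacle is the sign-and-partial-sum bookkeeping in the second paragraph, which simultaneously produces the parity of $n$, the alternating signs of the $w_j$, and the equal spacing of the breakpoints.
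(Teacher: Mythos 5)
Your overall strategy matches the paper's: apply \cref{LemUniqueAffGeneral} on the partition supplied by \cref{LemZeroInt}, extract the signs of the $w_j$, deduce the parity of $n$ and the equidistribution, then solve for the slopes and the outer bias. Your partial-alternating-sum bookkeeping with $S_i$ and $T_i$ is a clean repackaging of the paper's pairwise comparisons, and the numerical identities in your last paragraph all check out. There is, however, a genuine gap in your treatment of \emph{flat} type-2-active neurons.

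The serious instance is in your argument for (iii). You invoke \eqref{CritCharacteristic} for an arbitrary type-2-active neuron $j$ with breakpoint $t\in(q_i,q_{i+1})$, but \eqref{CritCharacteristic} is only derived for \emph{non-flat} non-degenerate neurons: the first two lines of \eqref{LossGradient} carry a factor $v_j$ and are vacuous when $v_j=0$. A flat type-2-active neuron produces no kink in $\NNfct$, so nothing forces its breakpoint onto the grid a priori, and the only condition available for it is the third line of \eqref{LossGradient}, i.e.\ $\int_{I_j}(x-t_j)(\NNfct(x)-x)\,dx=0$ after dividing by $2w_j$ --- a single integral, not the two you use. Ruling out $t_j\notin\{q_1,\dots,q_n\}$ from this single condition requires explicitly evaluating that integral with the formulas \eqref{LemUniqueAffGeneralFormulas} and checking it is nonzero, which is what the paper does; your two-vanishing-means argument does not substitute for this. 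A milder instance of the same issue affects (iv): you apply $\int_{I_j}x(\NNfct(x)-x)\,dx=0$ to every $j\in K_i$, including flat ones, and for those you must first note that $I_j$ is a union of the subintervals $[q_k,q_{k+1}]$, so \cref{LemZeroInt} gives $\int_{I_j}(\NNfct(x)-x)\,dx=0$, whence the third line of \eqref{LossGradient} upgrades this to \eqref{CritCharacteristic} even when $v_j=0$. That second point is a one-sentence fix, but the first requires a computation your proposal does not contain.
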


The proof of this lemma requires a successive application of \cref{LemUniqueAffGeneral}.
We prove the statements of the lemma in a different order than stated.
First of all, \cref{LemUniqueAffGeneral}.\eqref{LemUniqueAffGeneralConsequ1} will enforce the correct sign for each $w_j$, $j \in K_i$.
That $n$ is even will be a consequence of these signs.
It will also follow from the signs together with \cref{LemUniqueAffGeneral}.\eqref{LemUniqueAffGeneralConsequSum} that $q_i = \frac{i}{n+1}$.
Afterwards, we use the formulas \eqref{LemUniqueAffGeneralFormulas} from \cref{LemUniqueAffGeneral} to verify that any type-2-active neuron must have as breakpoint one of $q_1,\dots,q_n$.
Once this has been shown, we obtain a more explicit version of those formulas and deduce $\ssum{k \in K_i}{} v_k w_k = 2/(n+1)$.
That $\NNfct$ takes exactly the form in \cref{LemCritNonAff}.\eqref{LemCritNonAffFct} is a byproduct of the last derivation, and that $\phi$ is centered is shown last.

\begin{figure}
	\centering
	\includegraphics[width=\linewidth , trim = {4.5cm 0cm 3.5cm 0cm} , clip]{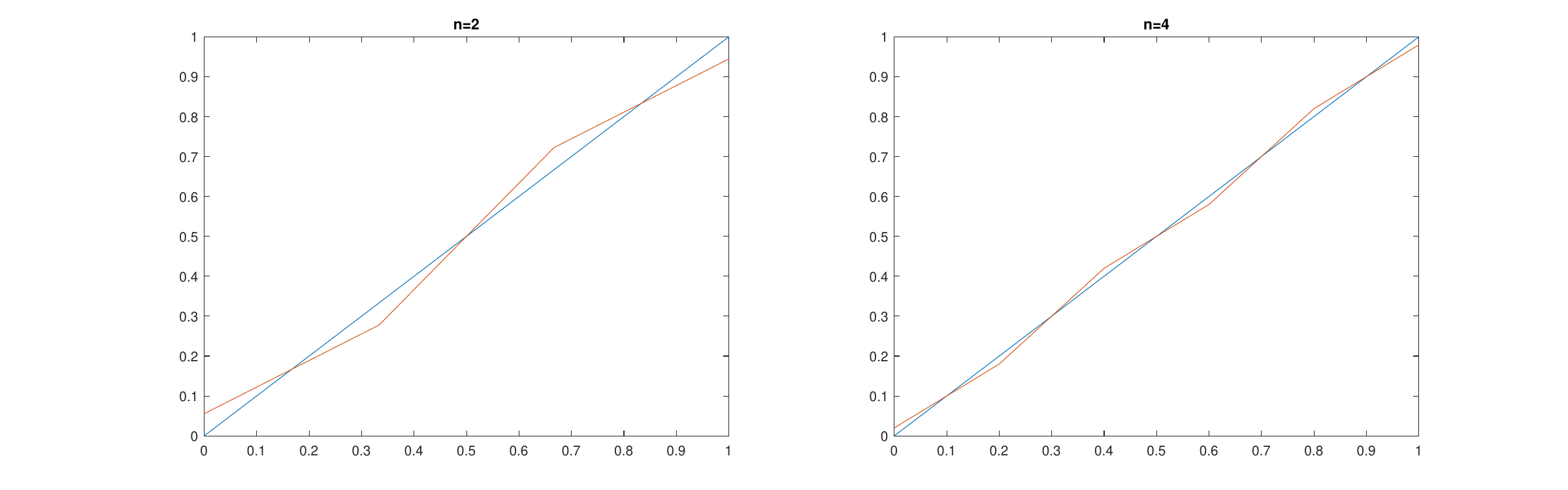}
	\caption{Examples of the network realizations (red) in \cref{LemCritNonAff} for the cases $n=2$ and $n=4$. The blue line is the target function (identity function).}
	\label{fig_plot_realizations}
\end{figure}

\begin{proof}[Proof of \cref{LemCritNonAff}]
	We begin by noting that none of the sets $K_i$, $i \in \{1,\dots,n\}$, can be empty.
Furthermore, the third equation of \eqref{LossGradient} and \cref{LemZeroInt} imply that \eqref{CritCharacteristic} holds for all neurons in $\bigcup_i K_i$ even if they are flat.
Applying \cref{LemUniqueAffGeneral}.\eqref{LemUniqueAffGeneralConsequ1}, which we can do by \cref{LemZeroInt}, ensures that $\NNfct|_{[q_i,q_{i+1}]} \ne \ID{[q_i,q_{i+1}]}$ for all $i \in \{0,\dots,n\}$.
In particular, \eqref{CritCharacteristic} and the argument \eqref{ExampleArgumentMethod} show for all $i \in \{1,\dots,n-1\}$ and $j_0 \in K_i$, $j_1 \in K_{i+1}$ that $\mathrm{sign}(w_{j_0}) \ne \mathrm{sign}(w_{j_1})$ for otherwise we would have $I_{j_0} \backslash I_{j_1} = [q_i,q_{i+1}]$ or $I_{j_1} \backslash I_{j_0} = [q_i,q_{i+1}]$ (depending on the sign) and, hence,
\begin{equation*}
	\int_{q_i}^{q_{i+1}} (\NNfct(x)-x) dx = 0 = \int_{q_i}^{q_{i+1}} x(\NNfct(x)-x) dx.
\end{equation*}%
Likewise, we must have $\int_{0}^{q_1} x(\NNfct(x)-x) dx \ne 0$ and, hence, $w_j > 0$ for any $j \in K_1$.
Combining the previous two arguments establishes $\mathrm{sign}(w_j) = (-1)^{i+1}$ for any $i \in \{1,\dots,n\}$, $j \in K_i$.
Just like $w_j > 0$ for any $j \in K_1$, we must also have $w_j < 0$ for any $j \in K_n$.
Thus, $-1 = \mathrm{sign}(w_j) = (-1)^{n+1}$ for all $j \in K_n$, so $n$ is even.
Now that we know the sign of each parameter $w_j$ for neurons $j \in \bigcup_i K_i$, we can use \eqref{CritCharacteristic} again to find that $\int_{q_i}^{q_{i+2}} x(\NNfct(x)-x) dx = 0$ for all $i \in \{0,\dots,n-1\}$.
Then \cref{LemUniqueAffGeneral}.\eqref{LemUniqueAffGeneralConsequSum} (with the partition $q_i,q_{i+1},q_{i+2}$) tells us
\begin{equation*}
	0 = (q_{i+2}-q_{i+1})^2 - (q_{i+1}-q_i)^2.
\end{equation*}%
This can only hold for all $i \in \{0,\dots,n-1\}$ if the points $q_1,\dots,q_n$ are equidistributed, which means $q_i = i/(n+1)$.
Next, if we denote $\NNfct(x) = A_ix+B_i$ on $[q_i,q_{i+1}]$, then the formulas \eqref{LemUniqueAffGeneralFormulas} must hold for all $i \in \{0,\dots,n\}$.
Since $q_1,\dots,q_n$ are equidistributed, the formulas simplify to 
\begin{equation}
\label{LemCritNonAffPfFormulas}
	A_i-1 = (-1)^i(A_0-1) \quad \text{and} \quad B_i = (-1)^{i+1}\frac{i+\frac{1}{2}}{n+1}(A_0-1)
\end{equation}%
for all $i \in \{0,\dots,n\}$.
Using \eqref{LemCritNonAffPfFormulas}, one can verify that any type-2-active neuron of $\phi$ must have as breakpoint one of the points $q_1,\dots,q_n$.
If this were not the case, say the $j^{th}$ hidden neuron were type-2-active with breakpoint $t_j = -b_j/w_j$, then one could choose $i \in \{0,\dots,n\}$ such that $q_i < t_j < q_{i+1}$.
Using \eqref{CritCharacteristic}, \eqref{LemCritNonAffPfFormulas}, and \cref{LemZeroInt}, the integral from the third line of \eqref{LossGradient} reads (after dividing by $2w_j$)
\begin{equation*}
\begin{split}
	\int_{I_j} (x-t_j)(\NNfct(x)-x) dx &= \int_{[q_i,q_{i+1}] \cap I_j} (x-t_j)(\NNfct(x)-x) dx -
	\begin{cases}
		0 &\text{if } i \text{ is even} \\
		\int_{q_i}^{q_{i+1}} x(\NNfct(x)-x) dx &\text{if } i \text{ is odd}
	\end{cases} \\
	&=
	\begin{cases}
		\frac{1}{6} (A_0-1) (t_j-q_i)^2 (q_{i+1} - t_j + \frac{1}{2(n+1)})
		&\begin{array}{l}
			\text{if } I_j = [0,t_j] \text{ and } i \text{ is even}  \\
			\text{or if } I_j = [t_j,1] \text{ and } i \text{ is odd}
		\end{array} \\
		\frac{1}{6} (A_0-1) (q_{i+1}-t_j)^2 (t_j - q_i + \frac{1}{2(n+1)})
		&\begin{array}{l}
			\text{if } I_j = [0,t_j] \text{ and } i \text{ is odd} \\
			\text{or if } I_j = [t_j,1] \text{ and } i \text{ is even}.
		\end{array}
	\end{cases}
\end{split}
\end{equation*}%
So, the partial derivative of $\Loss$ with respect to $v_j$ does not vanish, yielding a contradiction.
This proves that all type-2-active neurons lie in $\bigcup_i K_i$.
In particular, we can write
\begin{equation*}
	A_l = \ssum{\substack{i=1 \\ i \text{ odd}}}{l} \ssum{j \in K_i}{} v_jw_j + \ssum{\substack{i=l+1 \\ i \text{ even}}}{n} \ssum{j \in K_i}{} v_jw_j
\end{equation*}%
for all $l \in \{0,\dots,n\}$ because $\phi$ does not have any type-1-active neurons by \cref{LemCritGen}.
We can combine this formula with \eqref{LemCritNonAffPfFormulas} to find for all $i \in \{0,\dots,n-1\}$
\begin{equation*}
	-(A_0-1) = (-1)^i (A_{i+1}-1) = (-1)^i (A_i-1) + \ssum{j \in K_{i+1}}{} v_jw_j = A_0 - 1 + \ssum{j \in K_{i+1}}{} v_jw_j.
\end{equation*}%
Thus, the quantity $a := \ssum{j \in K_i}{} v_jw_j$ is independent of $i \in \{1,\dots,n\}$.
Consequently, we obtain $A_i = an/2$ for even $i$ (including $i=0$) and $A_i = a(1+n/2)$ for odd $i$.
The identity $A_1-1 = 1-A_0$ then forces $a = 2/(n+1)$.
That $\phi$ has to be centered follows from $\NNfct(0) = B_0$.
\end{proof}

As our final building block for the proof of \cref{ThrmMain}, we show that the networks from \cref{LemCritNonAff} are saddle points of the loss function.
To achieve this, we will find a set of coordinates in which $\Loss$ is twice differentiable and calculate the determinant of the Hessian of $\Loss$ restricted to these coordinates.
It will turn out to be strictly negative, from which it follows that we deal with a saddle point.

\begin{lemma}
\label{LemCritNonAffSaddle}
	Suppose $\phi \in \R^{3N+1}$ is a critical point or a local extremum of $\Loss$ but not a global minimum and that $\NNfct$ is not affine on $[0,1]$.
Then $\phi$ is a saddle point of $\Loss$.
\end{lemma}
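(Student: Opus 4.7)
The strategy is to show that $\phi$ cannot be a local minimum; combined with \cref{LemNoMaxima} (no local maxima), this forces $\phi$ to be a saddle point. Following the hint given just before the lemma, the plan is to select a subspace of parameters in which $\Loss$ is twice continuously differentiable at $\phi$, compute the Hessian of $\Loss$ restricted to this subspace, and show that its determinant is strictly negative; a strictly negative eigenvalue then exists, and by a second-order Taylor expansion, $\Loss$ strictly decreases along the corresponding eigenvector direction.

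\cref{LemCritNonAff} supplies the essential structural input: $n \geq 2$ is even, the breakpoints are equispaced at $q_i = i/(n+1)$, every class $K_i$ of type-2-active neurons with breakpoint $q_i$ is non-empty with $\ssum{k \in K_i}{} v_kw_k = 2/(n+1)$, and the explicit realization formula in \cref{LemCritNonAff}.\eqref{LemCritNonAffFct} yields the non-vanishing boundary values $\NNfct(q_i) - q_i = (-1)^i/(2(n+1)^2)$ at each interior breakpoint. In particular, each $K_i$ contains a non-flat type-2-active neuron. By \cref{LemTwiceDiff}, $\Loss$ is $C^2$ near $\phi$ in the coordinates $(w_{j_1}, w_{j_2}, b_{j_1}, b_{j_2}, v, c)$ for any two type-2-active neurons $j_1, j_2$. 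Inside this $C^2$-region, the Hessian entries can be computed using the partial derivative formulas from \cref{LemTwiceDiff}, Leibniz' rule (to account for the breakpoint-dependent integration limits $t_j = -b_j/w_j$), the critical-point identities $\int_{I_j}(\NNfct(x) - x) dx = 0 = \int_{I_j} x(\NNfct(x) - x) dx$ from \eqref{CritCharacteristic} (available for non-flat type-2-active neurons), and the explicit piecewise-affine form of $\NNfct$. The non-vanishing boundary contribution $\NNfct(q_i) - q_i \neq 0$ entering through Leibniz' rule is the key mechanism that produces the negative term in the restricted Hessian.

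The main obstacle lies in selecting a subspace on which the restricted Hessian has strictly negative determinant. A naive two-dimensional subspace involving a single non-flat type-2-active neuron $j \in K_i$, say $(b_j, v_j)$, yields (after using the identities above) a determinant of the form $\tfrac{v_jw_j r^3}{3}\bigl[v_jw_j r - \tfrac{2}{(n+1)^2}\bigr]$, with $r \in \{q_i, 1-q_i\}$ depending on $\mathrm{sign}(w_j)$; this is typically non-negative under the constraint $\ssum{k \in K_i}{} v_kw_k = 2/(n+1)$, so a single-neuron subspace does not suffice. One is therefore forced to work with subspaces that combine coordinates of several non-flat type-2-active neurons, possibly from different classes $K_i$ and $K_{i'}$, where the off-diagonal Hessian entries receive additional contributions proportional to the length of the overlap $I_{j_1} \cap I_{j_2}$ of their supports. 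A careful case analysis based on $n$ and on the sizes $|K_i|$ then verifies that some suitable subspace always produces a strictly negative determinant, completing the proof.
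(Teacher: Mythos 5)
Your overall strategy coincides with the paper's: exhibit a coordinate subspace on which $\Loss$ is twice continuously differentiable at $\phi$ (via \cref{LemTwiceDiff}), show the restricted Hessian has strictly negative determinant, and conclude with \cref{LemNoMaxima}. You also correctly diagnose that a two-dimensional subspace built from a single type-2-active neuron fails. However, the proposal stops exactly where the real work begins: the final sentence, ``a careful case analysis based on $n$ and on the sizes $|K_i|$ then verifies that some suitable subspace always produces a strictly negative determinant,'' is an assertion of the conclusion, not a proof of it. Nothing in the proposal identifies which coordinates to take, nor why the resulting determinant is negative, and this step is genuinely delicate.

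For comparison, the paper's choice is quite specific and not of the form you describe. It takes the coordinates $(b_{j_1},\dots,b_{j_l},v_k,c)$, where $j_1 \in K_1$ is a neuron with $v_{j_1}>0$, the indices $j_2,\dots,j_l$ enumerate \emph{all} neurons of $K_1$ with negative outer weight, and $k$ is any neuron of $K_2$; crucially, the output bias $c$ and the coordinate $v_k$ of a neuron from a \emph{different} breakpoint class are included, which your plan (combining $w,b,v$ coordinates of several type-2-active neurons) does not capture. The determinant is then evaluated by writing $H = \frac{1}{\mu}(E+uu^{T})$ with $E$ block diagonal and applying the rank-one update formula, which reduces everything to the constraint $\ssum{i=1}{l}\lambda_i \le 1$ (a consequence of $\ssum{j\in K_1}{}v_jw_j = 2/(n+1)$ and the sign condition on the $w_j$) together with the explicit numerical facts $\Gamma = \frac{32n^2-21n+3}{16n(2n-1)} \in (0,1)$ and $4n(1-\Gamma) = \frac{5n-3}{8n-4} < 1$. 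Without producing such a subspace and such estimates, it is not even clear a priori that a negative-determinant restriction exists at all (indeed, your own computation shows the most natural candidates give non-negative determinants), so the proposal as written has a genuine gap at its decisive step.
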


\begin{proof}
	Take $n \in \N$ satisfying the assumptions of \cref{LemCritNonAff} and let $K_1 \subseteq \{1,\dots,N\}$ denote the set of those type-2-active neurons with breakpoint $1/(n+1)$.
Denote by $K_1^- \subseteq K_1$ the set of all those hidden neurons $j \in K_1$ with $v_j < 0$.
It may happen that $K_1^-$ is empty.
However, the complement $K_1 \backslash K_1^-$ is never empty since $\ssum{j \in K_1}{} v_j w_j = 2/(n+1)$ and $\mathrm{sign}(w_j) = 1$ for all $j \in K_1$ by \cref{LemCritNonAff}.
Let $j_1 \in K_1$ be any hidden neuron with $v_{j_1} > 0$ and denote by $j_2,\dots,j_l$, for $l \in \{1,\dots,N\}$, an enumeration of $K_1^-$.
Moreover, let $k \in \{1,\dots,N\}$ be any type-2-active neuron with breakpoint $t_k = 2/(n+1)$.

We know from \cref{LemTwiceDiff} that $\Loss$ is twice continuously differentiable in the coordinates of type-2-active neurons and in $(v,c)$.
We will show that the Hessian $H$ of $\Loss$ restricted to $(b_{j_1},\dots,b_{j_l},v_k,c)$ has a strictly negative determinant.

In order to compute this determinant, we introduce some shorthand notation.
For $i \in \{1,\dots,l\}$, denote $\lambda_i = \frac{n+1}{2}v_{j_i}w_{j_i}$ so that $\ssum{i=1}{l} \lambda_i \leq 1$ by the choice of neurons in the collection $\{j_1,\dots,j_l\}$.
Define $\mu = \frac{n+1}{2n}$ and the vectors $u_1 = (v_{j_1},\dots,v_{j_l})$, $u_2 = (\frac{-1}{4n^2\mu}w_k,1)$, and $u = (u_1,u_2)$.
Furthermore, let $D$ be the diagonal matrix with entries $-v_{j_i}^2 / (4 \lambda_i n)$, $i \in \{1,\dots,l\}$, let $A$ be the Hessian of $\Loss$ restricted to $(v_k,c)$, let $B = \mu A - u_2 u_2^T$, and let $E$ be the diagonal block matrix with blocks $D$ and $B$.
Then $H = \frac{1}{\mu}(E+uu^T)$ and, hence,
\begin{equation*}
	\det(H) = \mu^{-(l+2)}(1+u^TE^{-1}u) \det(E)
\end{equation*}%
once we verified that $E$ is invertible.
We calculate directly
\begin{equation*}
	\det(A) = \det
	\begin{pmatrix}
		\frac{2}{3(n\mu)^3} w_k^2  & \frac{-1}{(n\mu)^2} w_k \\
		\frac{-1}{(n\mu)^2} w_k & \frac{n+1}{n\mu}
	\end{pmatrix}
	= \frac{2n-1}{3(n\mu)^4} w_k^2 > 0.
\end{equation*}%
Next, we compute
\begin{equation}
\label{LemCritNonAffSaddleProofGamma}
	\Gamma := \frac{1}{\mu} u_2^T A^{-1} u_2 = \frac{32n^2-21n+3}{16n(2n-1)} \in (0,1).
\end{equation}%
Using $\Gamma$, we obtain $\det(B) = \mu^2(1-\Gamma)\det(A) > 0$ and $B^{-1} = \frac{1}{\mu} A^{-1} + \frac{1}{\mu^2(1-\Gamma)} A^{-1} u_2 u_2^T A^{-1}$.
In particular, $E$ is invertible.
Using $u_2^T B^{-1} u_2 = \frac{\Gamma}{1-\Gamma}$, we can write
\begin{equation*}
	u^TE^{-1}u = u_1^T D^{-1} u_1 + u_2^T B^{-1} u_2 = - 4n \ssum{i=1}{l} \lambda_i + \frac{\Gamma}{1-\Gamma}.
\end{equation*}%
The determinant of $D$ is $-(4n)^{-l} \prod_{i=1}^{l} v_{j_i}^2 |\lambda_i|^{-1} < 0$ so that
\begin{equation*}
	\Delta := -\mu^{-(l+2)}(1-\Gamma)^{-1}\det(D)\det(B)
\end{equation*}%
is strictly positive.
Summing up, we obtain that the determinant of $H$ is
\begin{equation*}
	\det(H) = \Delta \Big( 4n(1-\Gamma) \ssum{i=1}{l} \lambda_i - 1 \Big).
\end{equation*}%
We already mentioned that $\ssum{i=1}{l} \lambda_i \leq 1$.
Finally, we compute $4n(1-\Gamma) = \frac{5n-3}{8n-4} < 1$ to conclude $\det(H) < 0$, which finishes the proof.
\end{proof}

We now have constructed all the tools needed to prove \cref{ThrmMain} in the special case in which the target function is the identity on $[0,1]$.
This will be done in the next section.


\subsection{Classification of the critical points if the target function is the identity}
\label{section_main_proof_ID_target}

In this section, we gather the results of the previous two sections to prove the main theorem in the case where the target function is the identity on $[0,1]$.

\begin{proposition}
\label{PropIDtarget}
	Let $\phi = (w,b,v,c) \in \R^{3N+1}$.
Then the following hold:
\begin{enumerate}[\rm (I)]\itemsep = 0em

\item\label{PropMainLocMax} $\phi$ is not a local maximum of $\Loss$.

\item\label{PropMainDiff} If $\phi$ is a critical point or a local extremum of $\Loss$, then $\Loss$ is differentiable at $\phi$ with gradient $\nabla \Loss (\phi) = 0$.

\item\label{PropMainLocMin} $\phi$ is a non-global local minimum of $\Loss$ if and only if $\phi$ is centered and, for all $j \in \{1,\dots,N\}$, the $j^{th}$ hidden neuron of $\phi$ is
\begin{enumerate}[\rm (a)]\itemsep = 0em

\item\label{PropMainLocMinInact} inactive,

\item\label{PropMainLocMinSemiIn1} semi-inactive with $I_j = \{0\}$ and $v_j>0$, or

\item\label{PropMainLocMinSemiIn2} semi-inactive with $I_j = \{1\}$ and $v_j<0$.

\end{enumerate}

\item\label{PropMainSaddle} $\phi$ is a saddle point of $\Loss$ if and only if $\phi$ is centered, $\phi$ does not have any type-1-active neurons, $\phi$ does not have any non-flat semi-active neurons, $\phi$ does not have any non-flat degenerate neurons, and exactly one of the following two items holds:
\begin{enumerate}[\rm (a)]\itemsep = 0em

\item\label{PropMainSaddleTriv} $\phi$ does not have any type-2-active neurons and there exists $j \in \{1,\dots,N\}$ such that the $j^{th}$ hidden neuron of $\phi$ is
\begin{enumerate}[\rm (i)]\itemsep = 0em
\item\label{PropMainSaddleSemiAc} flat semi-active,

\item\label{PropMainSaddleSemiIn1} semi-inactive with $I_j = \{0\}$ and $v_j \leq 0$,

\item\label{PropMainSaddleSemiIn2} semi-inactive with $I_j = \{1\}$ and $v_j \geq 0$, or

\item\label{PropMainSaddleDeg} flat degenerate.

\end{enumerate}

\item\label{PropMainSaddleNonTriv} There exists $n \in \{2,4,6,\dots\}$ such that $(\bigcup_{j \in \{1,\dots,N\},\, w_j \ne 0} \{-\frac{b_j}{w_j}\}) \cap (0,1) = \bigcup_{i=1}^{n} \{\frac{i}{n+1}\}$ and, for all $j \in \{1,\dots,N\}$, $i \in \{1,\dots,n\}$ with $w_j \ne 0 = b_j + \frac{i w_j}{n+1}$, it holds that $\mathrm{sign}(w_j) = (-1)^{i+1}$ and $\ssum{k \in \{1,\dots,N\},\, w_k \ne 0 = b_k +\frac{i w_k}{n+1}}{} v_k w_k = \frac{2}{n+1}$.

\end{enumerate}

\item\label{PropMainFctTriv} If $\phi$ is a non-global local minimum of $\Loss$ or a saddle point of $\Loss$ without type-2-active neurons, then $\NNfct(x) = 1/2$ for all $x \in [0,1]$.

\item\label{PropMainFctNonTriv} If $\phi$ is a saddle point of $\Loss$ with at least one type-2-active neuron, then there exists $n \in \{2,4,6,\dots\}$ such that $n \leq N$ and, for all $i \in \{0,\dots,n\}$, $x \in [\frac{i}{n+1},\frac{i+1}{n+1}]$, one has
\begin{equation}
\label{PropMainFctNonTrivFormula}
	\NNfct(x) = x - \frac{(-1)^i}{n+1} \Big( x - \frac{i+\frac{1}{2}}{n+1} \Big).
\end{equation}%

\end{enumerate}
\end{proposition}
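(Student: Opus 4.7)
The plan is to organize the proof of Proposition \ref{PropIDtarget} as a synthesis of the preceding building blocks, using the dichotomy between affine and non-affine realizations. Item \eqref{PropMainLocMax} is an immediate restatement of Lemma \ref{LemNoMaxima}.

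For the core analysis, I would fix a critical point or local extremum $\phi$ that is not a global minimum. Lemma \ref{LemCritGen} rules out type-1-active and non-flat semi-active neurons unconditionally. If $\NNfct$ is affine on $[0,1]$, Lemma \ref{LemCritAff} gives centeredness, $\NNfct \equiv \frac{1}{2}$, and the absence of any active neuron, while Lemma \ref{LemCritSaddleT} shows that the presence of a flat semi-active, flat degenerate, non-flat degenerate, or ``wrong-sided'' semi-inactive neuron forces $\phi$ to be a saddle. Combined with the ``if'' direction from Lemma \ref{LemCritLocMin}, this yields the equivalence \eqref{PropMainLocMin} and the branch \eqref{PropMainSaddleTriv} of \eqref{PropMainSaddle}. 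If instead $\NNfct$ is not affine, Lemma \ref{LemCritNonAff} pins down the combinatorial structure in \eqref{PropMainSaddleNonTriv} together with the explicit realization \eqref{PropMainFctNonTrivFormula}, and Lemma \ref{LemCritNonAffSaddle} concludes that $\phi$ is a saddle. Items \eqref{PropMainFctTriv} and \eqref{PropMainFctNonTriv} are byproducts: a non-global local minimum must be affine (else Lemma \ref{LemCritNonAffSaddle} would make it a saddle, contradiction), so Lemma \ref{LemCritAff} gives $\NNfct \equiv \frac{1}{2}$; a saddle with no type-2-active neurons must also be affine, since non-affine critical points have such a neuron by Lemma \ref{LemCritNonAff}; and a saddle with a type-2-active neuron must be non-affine (Lemma \ref{LemCritAff} rules out active neurons in the affine case), so Lemma \ref{LemCritNonAff}.\eqref{LemCritNonAffFct} yields \eqref{PropMainFctNonTrivFormula}.

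To complete \eqref{PropMainDiff}, I still need to exclude non-flat degenerate neurons in the non-affine case. I would insert the explicit formula \eqref{PropMainFctNonTrivFormula} into the right-hand derivative expression of Lemma \ref{LemLossDiff} and compute $\int_0^1 x(\NNfct(x)-x)\,dx = -\frac{1}{12(n+1)^4} \ne 0$, which contradicts $\partial^+ \Loss(\phi)/\partial w_j = 0$ for any such hypothetical neuron (for which $I_j = [0,1]$). Combined with Lemma \ref{LemLossContDiff} and Lemma \ref{LemDiffGlobalMinima}, this gives differentiability at every critical point or local extremum. Finally, the ``if'' direction of \eqref{PropMainSaddleNonTriv} requires verifying that a $\phi$ satisfying the stated conditions actually satisfies $\Grad(\phi) = 0$: centeredness together with the slope condition $\sum_{k} v_k w_k = \frac{2}{n+1}$ (summed over type-2-active neurons with breakpoint $\frac{i}{n+1}$) force $\NNfct$ to match \eqref{PropMainFctNonTrivFormula}, after which a direct calculation reduces each relevant partial derivative on a piece $[\frac{i}{n+1},\frac{i+1}{n+1}]$ to a contribution of magnitude $\frac{1}{12(n+1)^4}$ with alternating sign in $i$, and these cancel over the interval $I_j$ of each type-2-active neuron. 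Lemma \ref{LemCritNonAffSaddle} then supplies the saddle conclusion.

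The main obstacle, I expect, is this last bookkeeping for the ``if'' direction of \eqref{PropMainSaddleNonTriv}: unlike the ``only if'' direction (which is Lemma \ref{LemCritNonAff}), extracting $\Grad(\phi) = 0$ from the combinatorial hypotheses requires a careful sign-pairing of contributions over consecutive intervals, an accounting step that is not packaged in any prior lemma.
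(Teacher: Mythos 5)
Your proposal is correct and follows essentially the same route as the paper: Lemma \ref{LemNoMaxima} for \eqref{PropMainLocMax}, the affine/non-affine dichotomy resolved via Lemmas \ref{LemCritAff}, \ref{LemCritSaddleT}, \ref{LemCritLocMin} on one side and Lemmas \ref{LemCritNonAff}, \ref{LemCritNonAffSaddle} on the other, the exclusion of non-flat degenerate neurons in the non-affine case via the nonvanishing of $\int_0^1 x(\NNfct(x)-x)\,dx$ (your explicit value $-\tfrac{1}{12(n+1)^4}$ is the same fact the paper extracts from Lemma \ref{LemUniqueAffGeneral}), and the direct verification of $\Grad(\phi)=0$ for the `if' direction of \eqref{PropMainSaddleNonTriv} by sign-cancellation over $I_j$, which is exactly the unpackaged bookkeeping the paper also performs in the proof itself.
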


\begin{proof}
	Statement \eqref{PropMainLocMax} follows from \cref{LemNoMaxima} and the `if' part of the `if and only if' statement in \eqref{PropMainLocMin} is the content of \cref{LemCritLocMin}.
Moreover, if $\phi$ is as in \eqref{PropMainSaddleTriv}, then it is a critical point because it satisfies \eqref{LossGradient} and it is a saddle point by \cref{LemCritSaddleT}.
Next, denote $q_i = i/(n+1)$ for all $i \in \{0,\dots,n+1\}$.
If $\phi$ is as in \eqref{PropMainSaddleNonTriv}, then its realization on $[0,1]$ is given by
\begin{equation}
\label{PropIDtarget_ProofFormula}
	\NNfct(x) = \frac{1}{2} + \frac{2}{n+1} \ssum{i=1}{n} (-1)^{i+1} \ReLU{(-1)^{i+1}(x-q_i)}.
\end{equation}%
which coincides with the formula \eqref{PropMainFctNonTrivFormula}.
In particular, we have $\int_{q_i}^{q_{i+1}} (\NNfct(x)-x) dx = 0$ for all $i \in \{0,\dots,n\}$ and $\int_{q_i}^{q_{i+2}} x(\NNfct(x)-x) dx = 0$ for all $i \in \{0,\dots,n-1\}$.
The latter asserts that $\int_{q_i}^{1} x(\NNfct(x)-x) dx = 0$ for odd $i$ and $\int_{0}^{q_i} x(\NNfct(x)-x) dx = 0$ for even $i$.
Thus, $\phi$ satisfies \eqref{LossGradient} and, hence, is a critical point.
Furthermore, it is a saddle point by \cref{LemCritNonAffSaddle}.
This proves the `if' part of the `if and only if' statement in \eqref{PropMainSaddle}.

Now, suppose $\phi$ is a non-global local minimum.
Then $\NNfct$ is affine by \cref{LemCritNonAffSaddle}.
\cref{LemCritAff} asserts that $\phi$ is centered and does not have any active or non-flat semi-active neurons.
Furthermore, for each hidden neuron, \cref{LemCritSaddleT} rules out all possibilities except \eqref{PropMainLocMinInact}-\eqref{PropMainLocMinSemiIn2}.
This proves the `only if' part of \eqref{PropMainLocMin}.

Next, suppose $\phi$ is a saddle point.
If $\NNfct$ is affine, then $\phi$ is centered and does not have any active, non-flat semi-active, or non-flat degenerate neurons by \cref{LemCritAff}.
If there is no hidden neuron as in \eqref{PropMainSaddleSemiAc}-\eqref{PropMainSaddleDeg}, then all hidden neurons satisfy one of the conditions in \eqref{PropMainLocMinInact}-\eqref{PropMainLocMinSemiIn2}.
But this contradicts \cref{LemCritLocMin}.
This proves \eqref{PropMainSaddleTriv}.
If $\NNfct$ is not affine, then it still does not admit any type-1-active or non-flat semi-active neurons by \cref{LemCritGen}.
Moreover, \cref{LemCritNonAff} shows that $\phi$ is centered and its type-2-active neurons satisfy \eqref{PropMainSaddleNonTriv}.
We need to argue that $\phi$ does not have any non-flat degenerate neurons in this case either.
If there were a non-flat degenerate neuron, then $\Grad(\phi) = 0$ implies $0 = \int_0^1 x(\NNfct(x)-x)dx$.
But \cref{LemUniqueAffGeneral}.\eqref{LemUniqueAffGeneralConsequSum} and \cref{LemCritNonAff} ensure that this integral is different from zero.
This finishes the proof of the `only if' part of \eqref{PropMainSaddle}.

Next, we prove \eqref{PropMainDiff}.
If $\phi$ is a saddle point, then it does not have any non-flat degenerate neurons by \eqref{PropMainSaddle}.
If $\phi$ is a non-global local extremum, then \eqref{PropMainLocMax} and \eqref{PropMainLocMin} imply that $\phi$ does not have any non-flat degenerate neurons either.
Thus, $\Loss$ is differentiable at $\phi$ by \cref{LemLossContDiff}.
If $\phi$ is a global minimum, then $\phi$ is point of differentiability by \cref{LemDiffGlobalMinima}.

Statement \eqref{PropMainFctTriv} follows immediately from \eqref{PropMainLocMin} and \eqref{PropMainSaddleTriv}.
The remaining statement \eqref{PropMainFctNonTriv} is implied by \eqref{PropMainSaddleNonTriv} and \eqref{PropIDtarget_ProofFormula}.
\end{proof}


\subsection{Completion of the proof of Theorem \ref{ThrmMain}}
\label{section_main_proof}

In this section, we show that \cref{ThrmMain} can always be reduced to its special case, \cref{PropIDtarget}, by employing a transformation of the parameter space.

\begin{proof}[Proof of \cref{ThrmMain}]
	First, we assume that $T = (0,1)$.
Consider the transformation $P \colon \R^{3N+1} \rightarrow \R^{3N+1}$ of the parameter space given by $P(w,b,v,c) = (w,b,\frac{v}{\alpha},\frac{c-\beta}{\alpha})$.
We then have $\LossFull(\phi) = \alpha^2 \Loss \circ P(\phi)$ for all $\phi \in \R^{3N+1}$.
Since the coordinates $w$ and $b$ remain unchanged and the vector $v$ only gets scaled under the transformation $P$, the transformation $P$ does not change the types of the hidden neurons.
Moreover, a network $\phi \in \R^{3N+1}$ is $(T,\mathcal{A})$-centered if and only if $P(\phi)$ is centered.
The map $P$ clearly is a smooth diffeomorphism and, hence, \cref{ThrmMain} with $T = (0,1)$ is exactly what we obtain from \cref{PropIDtarget} under the transformation $P$.

Now, we deduce \cref{ThrmMain} for general $T$.
This time, set $\mathcal{B} = (\alpha (T_1-T_0), \alpha T_0 + \beta)$ and denote by $Q \colon \R^{3N+1} \rightarrow \R^{3N+1}$ the transformation $Q(w,b,v,c) = ((T_1-T_0)w , T_0w+b ,v,c)$.
Then $\LossFull(\phi) = (T_1-T_0) \Loss_{N,(0,1),\mathcal{B}} \circ Q(\phi)$ for any $\phi \in \R^{3N+1}$.
As above, the transformation $Q$ does not change the types of the hidden neurons.
Note for the breakpoints that
\begin{equation*}
	-\frac{b_j}{w_j} = T_0 + \frac{i(T_1-T_0)}{n+1} \iff -\frac{T_0w_j + b_j}{(T_1-T_0)w_j} = \frac{i}{n+1}.
\end{equation*}%
Also, $\phi \in \R^{3N+1}$ is $(T,\mathcal{A})$-centered if and only if $Q(\phi)$ is $((0,1),\mathcal{B})$-centered.
Since we have shown the theorem to hold for $T = (0,1)$, the smooth diffeomorphism $Q$ yields \cref{ThrmMain} in the general case.
\end{proof}


\section{From ReLU to leaky ReLU}
\label{section_leaky}

In this section, we attempt to derive \cref{ThrmMain} for leaky ReLU activation, given by $x \mapsto \max\{x,\gamma x\}$ for a parameter $\gamma \in (0,1)$.
We denote the realization $\NNfctLeaky \in C(\R,\R)$ of a network $\phi = (w,b,v,c) \in \R^{3N+1}$ with this activation by
\begin{equation*}
	\NNfctLeaky(x) = c + \ssum{j=1}{N} v_j \max\{w_jx+b_j,\gamma (w_jx+b_j)\}.
\end{equation*}%
Analogously to the ReLU case, given $\mathcal{A}=(\alpha,\beta) \in \R^2$ and $T=(T_0,T_1) \in \R^2$, the loss function $\LossLeakyFull \in C(\R^{3N+1},\R)$ is the $L^2$-loss given by
\begin{equation*}
	\LossLeakyFull(\phi) = \int_{T_0}^{T_1} ( \NNfctLeaky(x) - \alpha x - \beta )^2 \, dx.
\end{equation*}%
Again, we call a point a critical point of $\LossLeakyFull$ if it is a zero of the generalized gradient defined by right-hand partial derivatives.
The notions about types of neurons remain the same as in \cref{def_types_of_neurons}.
Strictly speaking, the notions `inactive' and `semi-inactive' are no longer suitable for leaky ReLU activation, but it is convenient to stick to the same terminology.
We will deduce the classification for leaky ReLU by reducing it to the ReLU case in some instances and deal with other instances directly.


\subsection{Partial reduction to the ReLU case}

As before, we first consider the special case where the target function is the identity on $[0,1]$.
Let us abbreviate $\LossLeaky = \LossLeaky_{N,(0,1),(1,0)}$ and $\Loss = \Loss_{2N,(0,1),(1,0)}$.
Let $P \colon \R^{3N+1} \rightarrow \R^{6N+1}$ denote the smooth map $P(w,b,v,c) = (w,-w,b,-b,v,-\gamma v,c)$.
Then, $\NNfctLeaky = \NNfctTransf$ and $\LossLeaky = \Loss \circ P$.
Hence, if $\Loss$ is differentiable at $P(\phi)$, then $\LossLeaky$ is differentiable at $\phi$, so differentiability properties of $\Loss$ convert to $\LossLeaky$.
The partial derivatives of $\LossLeaky$ at any network $\phi$ and any non-degenerate or flat degenerate neuron $j$ are given by
\begin{equation*}
\begin{split}
	\frac{\partial}{\partial w_j} \LossLeaky(\phi) &= \Big( \frac{\partial}{\partial w_j} \Loss \Big) (P(\phi)) - \Big( \frac{\partial}{\partial w_{j+N}} \Loss \Big) (P(\phi)), \\
	\frac{\partial}{\partial b_j} \LossLeaky(\phi) &= \Big( \frac{\partial}{\partial b_j} \Loss \Big) (P(\phi)) - \Big( \frac{\partial}{\partial b_{j+N}} \Loss \Big) (P(\phi)), \\
	\frac{\partial}{\partial v_j} \LossLeaky(\phi) &= \Big( \frac{\partial}{\partial v_j} \Loss \Big) (P(\phi)) - \gamma \Big( \frac{\partial}{\partial v_{j+N}} \Loss \Big) (P(\phi)), \\
	\frac{\partial}{\partial c} \LossLeaky(\phi) &= \Big( \frac{\partial}{\partial c} \Loss \Big) (P(\phi)).
\end{split}
\end{equation*}%
We can also write these in explicit formulas.
To do so, we complement the notation $I_j$ by the intervals $\hat{I}_j = \{x \in [0,1] \colon w_jx + b_j < 0\} = [0,1] \backslash I_j$.
Then,
\begin{equation*}
\begin{split}
	\frac{\partial}{\partial w_j} \LossLeaky(\phi) &= 2 v_j \int_{I_j} x(\NNfctLeaky(x)- x) dx + 2 \gamma v_j \int_{\hat{I}_j} x(\NNfctLeaky(x)- x) dx, \\
	\frac{\partial}{\partial b_j} \LossLeaky(\phi) &= 2 v_j \int_{I_j} (\NNfctLeaky(x)- x) dx + 2 \gamma v_j \int_{\hat{I}_j} (\NNfctLeaky(x)- x) dx, \\
	\frac{\partial}{\partial v_j} \LossLeaky(\phi) &= 2 \int_{I_j} (w_jx+b_j)(\NNfctLeaky(x)- x) dx + 2 \gamma \int_{\hat{I}_j} (w_jx+b_j)(\NNfctLeaky(x)- x) dx, \\
	\frac{\partial}{\partial c} \LossLeaky(\phi) &= 2 \int_0^1 (\NNfctLeaky(x)- x) dx.
\end{split}
\end{equation*}%
This notation allows to treat non-flat degenerate neurons.
For such neurons, the right-hand partial derivatives of $\LossLeaky$ are also given by the above formulas.
We now show how the reduction to the ReLU case works.

\begin{lemma}
\label{LeakyTransfCritPt}
	Suppose $\phi \in \R^{3N+1}$ is a critical point or a local extremum of $\LossLeaky$ but not a global minimum and that $\int_0^1 x(\NNfctLeaky(x)-x) dx = 0$.
Then all neurons of $\phi$ are flat semi-active, flat inactive with $w_j = 0$, or flat degenerate.
\end{lemma}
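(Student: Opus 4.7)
The plan is to reduce the assertion to the ReLU classification in \cref{PropIDtarget} by means of the map $P(w,b,v,c)=(w,-w,b,-b,v,-\gamma v,c)$ from $\R^{3N+1}$ to $\R^{6N+1}$ introduced in this section, which satisfies $\LossLeaky=\Loss\circ P$ with $\Loss=\Loss_{2N,(0,1),(1,0)}$ and $\NNfctLeaky=\NNfctTransf$. The heart of the argument will be to show that $P(\phi)$ is itself a critical point of $\Loss$, so that the ReLU classification applies to it, and then to read off the structure of $\phi$ from the strong restrictions that the ReLU classification imposes on the paired neurons $(j,j+N)$ of $P(\phi)$.

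First I would use the critical point condition $\partial\LossLeaky/\partial c(\phi)=0$ to obtain $\int_0^1(\NNfctLeaky(x)-x)\,dx=0$. Combined with the hypothesis $\int_0^1 x(\NNfctLeaky(x)-x)\,dx=0$, the integrals over $\hat I_j=[0,1]\setminus I_j$ become the negatives of the integrals over $I_j$, so the equations $\partial\LossLeaky/\partial w_j(\phi)=0$ and $\partial\LossLeaky/\partial b_j(\phi)=0$ collapse to $(1-\gamma)v_j\int_{I_j}x(\NNfctLeaky(x)-x)\,dx=0$ and $(1-\gamma)v_j\int_{I_j}(\NNfctLeaky(x)-x)\,dx=0$. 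Since $\gamma<1$, for each non-flat non-degenerate neuron $j$ of $\phi$ one gets
\begin{equation*}
\int_{I_j}(\NNfctLeaky(x)-x)\,dx=0=\int_{I_j}x(\NNfctLeaky(x)-x)\,dx,
\end{equation*}
and the same on $\hat I_j$ by complementation. Using these identities together with the formulas of \cref{LemLossDiff}, I would check case by case over flat, non-flat non-degenerate, and non-flat degenerate neurons of $\phi$ that every one of the $6N+1$ right-hand partial derivatives of $\Loss$ at $P(\phi)$ vanishes. Since $\Loss(P(\phi))=\LossLeaky(\phi)>0$ because $\phi$ is not a global minimum, \cref{PropIDtarget} then forces $P(\phi)$ to be either a non-global local minimum or a saddle point of $\Loss$.

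Next I would rule out the local minimum case. The paired intervals always satisfy $I_j\cup I_{j+N}=[0,1]$, so at least one of them must contain a full subinterval of positive length; the corresponding paired neuron of $P(\phi)$ is then type-1-active, type-2-active, or semi-active, all of which are incompatible with \cref{PropIDtarget}.\eqref{PropMainLocMin}. Hence $P(\phi)$ is a saddle point, and \cref{PropIDtarget}.\eqref{PropMainSaddle} rules out type-1-active, non-flat semi-active, and non-flat degenerate neurons in $P(\phi)$. A non-flat type-2-active neuron of $\phi$ would produce two type-2-active neurons of $P(\phi)$ sharing the same breakpoint but with opposite signs of $w$, contradicting \cref{PropIDtarget}.\eqref{PropMainSaddleNonTriv}; and $P(\phi)$ cannot land in \cref{PropIDtarget}.\eqref{PropMainSaddleNonTriv} with only flat type-2-active neurons either, because then $\ssum{k\in K_i}{}v_k w_k=0\ne\tfrac{2}{n+1}$. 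So $P(\phi)$ falls under \cref{PropIDtarget}.\eqref{PropMainSaddleTriv} and has no type-2-active neurons at all.

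Finally I would do the combinatorial case check. Any neuron of $\phi$ with $w_j\ne 0$ (whether flat or not, active or semi-inactive or inactive) pairs to a $P(\phi)$-neuron whose interval $I_{j+N}$ covers $[0,1]$ except possibly an endpoint, making it type-1-active; similarly, a non-flat neuron of $\phi$ with $w_j=0$ pairs to a non-flat semi-active or non-flat degenerate $P(\phi)$-neuron. All of these are excluded by the saddle classification, so the only surviving configurations are $v_j=w_j=0$, which by the sign of $b_j$ are precisely flat semi-active, flat inactive with $w_j=0$, or flat degenerate, as claimed. The main obstacle is the first step: accurately verifying that each of the $6N+1$ right-hand partials of $\Loss$ at $P(\phi)$ vanishes across every neuron type of $\phi$, especially at non-flat degenerate neurons where the partials are only one-sided; once this is done, the rigid parity $(w_{j+N},b_{j+N},v_{j+N})=(-w_j,-b_j,-\gamma v_j)$ makes the remaining case analysis routine.
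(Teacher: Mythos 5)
Your proposal follows the paper's proof essentially verbatim: both show that $P(\phi)$ is a critical point of the $2N$-neuron ReLU loss $\Loss$ by combining $\frac{\partial}{\partial c}\LossLeaky(\phi)=0$ with the hypothesis $\int_0^1 x(\NNfctLeaky(x)-x)\,dx=0$ so that all integrals over $\hat I_j$ become negatives of those over $I_j$, then apply the ReLU classification to $P(\phi)$ and translate the exclusions back through the pairing $(j,j+N)$, including the same opposite-sign argument against type-2-active neurons. The two cosmetic slips (your enumeration when ruling out the local-minimum alternative omits the degenerate case, which is likewise incompatible with the local-minimum classification, and for a type-1-active neuron of $\phi$ it is $I_j$ rather than $I_{j+N}$ that covers $[0,1]$) do not affect the argument.
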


\begin{proof}
	We first show that $P(\phi)$ is a critical point of $\Loss$ and then apply \cref{ThrmMain} to $P(\phi)$.
Since the partial derivative of $\LossLeaky$ with respect to $c$ exists and must be zero, we have
\begin{equation*}
	\frac{1}{2} \frac{\partial}{\partial c} \LossLeaky(\phi) = \int_0^1 (\NNfctTransf(x)-x)dx = 0 = \int_0^1 x(\NNfctTransf(x)-x)dx.
\end{equation*}%
This shows that the (right-hand) partial derivatives of $\Loss$ are zero at $P(\phi)$ with respect to coordinates corresponding to inactive, semi-inactive, semi-active, type-1-active, and degenerate neurons.
We need to verify that also partial derivatives of $\Loss$ with respect to type-2-active neurons vanish at $P(\phi)$.
To see this, note that, for a type-2-active neuron $j$ of $\phi$, the partial derivative of $\LossLeaky$ with respect to $w_j$ exists at $\phi$ and
\begin{equation*}
	0 = \frac{\partial}{\partial w_j} \LossLeaky(\phi) = 2 (1-\gamma) v_j \int_{I_j} x(\NNfctLeaky(x)- x) dx.
\end{equation*}%
Thus,
\begin{equation*}
\begin{split}
	0 &= 2 v_j \int_{I_j} x(\NNfctLeaky(x)- x) dx = \Big( \frac{\partial}{\partial w_j} \Loss \Big)(P(\phi)), \\
	0 &= - 2 \gamma v_j \int_{\hat{I}_j} x(\NNfctLeaky(x)- x) dx = \Big( \frac{\partial}{\partial w_{j+N}} \Loss \Big)(P(\phi)),
\end{split}
\end{equation*}%
and analogously for the coordinates $b_j,b_{j+N},v_j,v_{j+N}$.
This concludes that $P(\phi)$ is a critical point of $\Loss$.
By \cref{ThrmMain}, $P(\phi)$ does not have any type-1-active, non-flat semi-active, or non-flat degenerate neurons.
By definition of the map $P$, it follows that $\phi$ does not have any type-1-active, non-flat semi-active, or non-flat degenerate neurons, nor does it have any semi-inactive, non-flat inactive, or inactive neurons with $w_j \ne 0$ for otherwise $P(\phi)$ would have one of the former types.
Further, by definition of $P$, any type-2-active neuron of $\phi$ gives rise to two type-2-active neurons of $P(\phi)$ with the same breakpoint but with opposite signs of the $w$-coordinate.
This is not possible by \eqref{ThrmMainSaddleNonTriv} of \cref{ThrmMain}, so $\phi$ cannot have any type-2-active neurons.
In summary, $\phi$ can only have flat semi-active, flat degenerate, or flat inactive neurons with $w_j = 0$.
\end{proof}

The condition $\int_0^1 x(\NNfctLeaky(x)-x) dx = 0$ in the previous lemma is easily converted into a condition about existence of certain types of neurons.
This is done in the first part of the next lemma.
For the second part, we recycle some arguments we learned from the ReLU case.

\begin{lemma}
\label{LemLeakySaddle}
	Suppose $\phi \in \R^{3N+1}$ is a critical point or a local extremum of $\LossLeaky$ but not a global minimum.
Then all neurons of $\phi$ are flat semi-active, flat inactive with $w_j=0$, degenerate, or type-2-active.
Moreover, if $\phi$ does not have any non-flat type-2-active neurons, then $\phi$ is a saddle point and it also does not have any flat type-2-active or non-flat degenerate neurons.
\end{lemma}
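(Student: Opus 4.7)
For the first statement, I would follow the strategy of systematically ruling out each forbidden neuron type by reducing to \cref{LeakyTransfCritPt}. Concretely, every neuron type not in the allowed list (flat semi-active, flat inactive with $w_j = 0$, degenerate, or type-2-active) forces the integral identity $\int_0^1 x(\NNfctLeaky(x) - x)\,dx = 0$. For a non-flat neuron that is not type-2-active and not degenerate---namely non-flat type-1-active, non-flat semi-active, non-flat semi-inactive, or non-flat inactive---one has $I_j = [0,1]$ (in the first two cases) or $I_j$ of measure zero (in the last two), so the equation $\partial/\partial w_j \LossLeaky(\phi) = 0$ reduces, after dividing by the nonzero factor $2v_j$ and (where relevant) $\gamma$, to precisely this identity. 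For a flat forbidden neuron (flat type-1-active, flat semi-inactive, or flat inactive with $w_j \ne 0$), I instead use $\partial/\partial v_j \LossLeaky(\phi) = 0$: expanding $w_j x + b_j$ in the defining integral and cancelling the constant term via $\partial/\partial c \LossLeaky(\phi) = 0$ leaves a multiple of $w_j \int_0^1 x(\NNfctLeaky(x) - x)\,dx$, which again gives the identity since $w_j \ne 0$ for each of these types. Once the identity is established, \cref{LeakyTransfCritPt} forces every neuron of $\phi$ to be flat semi-active, flat inactive with $w_j = 0$, or flat degenerate, contradicting the presumed presence of the forbidden neuron.

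For the second statement, assume additionally that $\phi$ has no non-flat type-2-active neurons. Combining this with the first statement, every neuron is flat semi-active, flat inactive with $w_j = 0$, degenerate, or flat type-2-active, and since each of these contributes zero to the realization one obtains $\NNfctLeaky \equiv c$; the vanishing of $\partial/\partial c \LossLeaky(\phi)$ then yields $c = 1/2$. I next eliminate flat type-2-active and non-flat degenerate neurons separately. For a flat type-2-active neuron with breakpoint $t_j \in (0,1)$, substituting $w_j x + b_j = w_j(x - t_j)$ and $\NNfctLeaky \equiv 1/2$ into $\partial/\partial v_j \LossLeaky(\phi)$ produces $2 w_j [(1 - \gamma) A(t_j) - \gamma/12]$, where $A(t_j) = \int_{I_j} (x - t_j)(1/2 - x)\,dx$ is strictly negative on $(0,1)$ (a direct computation yields $A(t_j) = -t_j^2(3 - 2t_j)/12$ or $-(1-t_j)^2(2t_j+1)/12$ depending on whether $I_j = [0,t_j]$ or $[t_j,1]$); since $w_j \ne 0$ and $\gamma \in (0,1)$, the resulting expression is nonzero, a contradiction. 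For a non-flat degenerate neuron, perturbing $w_j \mapsto h > 0$ (keeping $b_j = 0$ and $v_j$ fixed) yields the new realization $1/2 + v_j h x$, and a direct expansion gives $\partial^+/\partial w_j \LossLeaky(\phi) = -v_j/6 \ne 0$, again a contradiction.

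Finally, for the saddle point claim, every remaining neuron is flat semi-active, flat inactive with $w_j = 0$, or flat degenerate, and at least one exists since $N \ge 1$. For a flat semi-active or flat inactive with $w_j = 0$ neuron, small perturbations keep the neuron in the same leaky regime and $\LossLeaky$ is smooth in $(w_j, v_j)$ near $\phi$; the Hessian block restricted to these two coordinates equals $\bigl(\begin{smallmatrix} 0 & R \\ R & S \end{smallmatrix}\bigr)$ with $R = 2\int_0^1 x(1/2 - x)\,dx = -1/6$ in the semi-active case and $R = -\gamma/6$ in the inactive case, so the determinant $-R^2$ is strictly negative and the loss admits a strict descent direction, precisely mirroring the flat semi-active argument in \cref{LemCritSaddleT}. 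For a flat degenerate neuron, the Hessian approach fails due to non-differentiability, but the curve $\phi_s$ defined by $w_j = v_j = s$ and $b_j = 0$ produces the realization $1/2 + s^2 x$ for $s > 0$ (respectively $1/2 + \gamma s^2 x$ for $s < 0$), and a direct expansion yields $\LossLeaky(\phi_s) = 1/12 - s^2/6 + O(s^4)$ (respectively $1/12 - \gamma s^2/6 + O(s^4)$), strictly below $\LossLeaky(\phi) = 1/12$ for all sufficiently small $s \ne 0$. Combined with the leaky analogue of \cref{LemNoMaxima} (whose proof uses only strict convexity of the loss in $c$), this shows $\phi$ is a saddle point. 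The main obstacle is the flat degenerate case: since the loss is not twice differentiable there, the argument requires the nonstandard curve above in which $w_j$ and $v_j$ are activated simultaneously so that the realization changes at quadratic order in $s$.
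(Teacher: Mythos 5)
Your proposal follows essentially the same route as the paper: the first claim is reduced to \cref{LeakyTransfCritPt} by extracting $\int_0^1 x(\NNfctLeaky(x)-x)\,dx=0$ from the vanishing partial derivatives (via $\partial/\partial w_j$ for non-flat and $\partial/\partial v_j$ for flat forbidden neurons); then, absent non-flat type-2-active neurons, one gets $\NNfctLeaky\equiv 1/2$, kills flat type-2-active neurons with the $v_j$-derivative computation, kills non-flat degenerate neurons via $\partial^{+}\LossLeaky/\partial w_j=-v_j/6$, and produces descent directions (a $2\times 2$ Hessian block with determinant $-R^2<0$ for flat semi-active and flat inactive neurons with $w_j=0$, an explicit curve for degenerate ones). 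All of your computations check out, and the differences from the paper (your factorization $2w_j[(1-\gamma)A(t_j)-\gamma/12]$ versus the paper's $\tfrac{1}{12}(-1+(1-\gamma^{\tau})(3-2t_j)t_j^2)$, and your curve $w_j=v_j=s$, $b_j=0$ for flat degenerate neurons versus the paper's $w_j^s=\tau s$, $b_j^s=-\tau s^2$, $v_j^s=v_j+\tau s$) are cosmetic.

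One genuine, though easily repaired, gap: you eliminate non-flat degenerate neurons \emph{before} establishing that $\phi$ is a saddle point, declaring $\partial^{+}\LossLeaky(\phi)/\partial w_j=-v_j/6\ne 0$ a contradiction. That only contradicts the hypothesis in the critical-point branch; the lemma also allows $\phi$ to be merely a local extremum, and at a non-flat degenerate neuron $\LossLeaky$ is not differentiable in $w_j$, so a local minimum only yields $\partial^{+}\LossLeaky(\phi)/\partial w_j\ge 0$, i.e.\ $v_j\le 0$ --- not yet a contradiction. The paper sidesteps this by first proving $\phi$ is a saddle point using a descent curve that works for arbitrary $v_j$; being a saddle point forces $\phi$ to be a critical point, and only then is $v_j=0$ read off from the right-hand derivative. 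Your argument is patched either by adopting that order, or by also invoking the left-hand derivative, which at a local minimum gives $-\gamma v_j/6\le 0$, hence $v_j\ge 0$ and so $v_j=0$ (local maxima being excluded as in \cref{RemNoMaxima}).
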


\begin{proof}
	Suppose $\phi$ had a neuron of a different type than in the first statement of this lemma, say the $j^{th}$.
Note that one of the intervals $I_j$ and $\hat{I}_j$ is empty and the other one is $[0,1]$ (up to possibly a singleton).
Since the $j^{th}$ neuron is non-degenerate, $\LossLeaky$ is differentiable with respect to the coordinates of the $j^{th}$ neuron, so $\int_0^1 x(\NNfctLeaky(x)-x)dx = 0$.
This contradicts \cref{LeakyTransfCritPt}.

The remainder of the proof is similar to the ones of \cref{LemCritAff,LemCritSaddleT}.
Assume $\phi$ does not have any non-flat type-2-active neurons.
Then $\NNfctLeaky$ is constant on $[0,1]$, and this constant is $1/2$ since $\frac{\partial}{\partial c} \LossLeaky (\phi) = 0$.
We claim that $\phi$ cannot have any flat type-2-active neurons.
Suppose for contradiction the $j^{th}$ neuron were that.
Let $\tau = \mathrm{sign}(w_j)$ and $t_j = -b_j/w_j \in (0,1)$.
Then $\frac{\partial}{\partial v_j} \LossLeaky (\phi) = 0$ implies
\begin{equation*}
	0 = \int_{t_j}^1 (x-t_j)(\tfrac{1}{2}- x) dx + \gamma^{\tau} \int_0^{t_j} (x-t_j)(\tfrac{1}{2}- x) dx = \frac{1}{12} \big( -1 + (1-\gamma^{\tau}) (3-2t_j)t_j^2 \big).
\end{equation*}%
But, for any $\gamma,t \in (0,1)$, $\tau \in \{-1,1\}$, we have $-1 + (1-\gamma^{\tau}) (3-2t)t^2 < 0$, which is a contradiction.
Thus, all neurons of $\phi$ are flat semi-active, flat inactive with $w_j=0$, or degenerate.
With an argument analogous to the proof of \cref{LemCritSaddleT}, we find that $\phi$ is a saddle point of $\LossLeaky$.
Indeed, if there is a flat semi-active or flat inactive neuron $j$ with $w_j = 0$, then, with $\tau = 1-\mathrm{sign}(b_j)$,
\begin{equation*}
	\det
	\begin{pmatrix}
		\frac{\partial}{\partial w_j} \frac{\partial}{\partial w_j} \LossLeaky(\phi) & \frac{\partial}{\partial w_j} \frac{\partial}{\partial v_j} \LossLeaky(\phi) \\
		\frac{\partial}{\partial v_j} \frac{\partial}{\partial w_j} \LossLeaky(\phi) & \frac{\partial}{\partial v_j} \frac{\partial}{\partial v_j} \LossLeaky(\phi)
	\end{pmatrix}
	= - \left( 2 \gamma^{\tau/2} \int_0^1 x(\tfrac{1}{2}-x) dx \right)^2 = - \frac{1}{36} \gamma^{\tau} < 0.
\end{equation*}%
Instead, if there is a degenerate neuron $j$, then, for the perturbation $\phi^s$, $s \in [0,1]$, in the coordinates of the $j^{th}$ neuron given by $w_j^s = \tau s$, $b_j^s = -\tau s^2$, and $v_j^s = v_j + \tau s$ with $\tau = 1$ if $v_j \geq 0$ and $\tau = -1$ if $v_j < 0$, we have
\begin{equation*}
\begin{split}
	\LossLeaky(\phi^s) - \LossLeaky(\phi) &= \frac{1}{6} v_j^s w_j^s \gamma^{(1-\tau)/2} \big( -1 + (1-\gamma^{\tau}) (3-2s)s^2 \big) + \frac{1}{3} (v_j^s w_j^s)^2 \gamma^{1-\tau} \big( (1-s)^3 + \gamma^{2\tau} s^3 \big) \\
	&= -\frac{1}{6} s(|v_j|+s) \gamma^{(1-\tau)/2} + \frac{1}{3}|v_j|^2 s^2 \gamma^{1-\tau} + \mathcal{O}(s^3),
\end{split}
\end{equation*}%
which is strictly negative for small $s > 0$.
This concludes that $\phi$ is a saddle point.
In particular, any degenerate neuron $j$ must be flat because
\begin{equation*}
	0 = \frac{\partial^+}{\partial w_j} \LossLeaky(\phi) = 2 v_j \int_0^1 x(\tfrac{1}{2} - x) dx = - \frac{v_j}{6}.
\end{equation*}%
\end{proof}

We finished dealing with critical points of $\LossLeaky$ that have a constant realization function.
In the next section, we find saddle points of $\LossLeaky$ analogous to the ones in \cref{ThrmMain}.\eqref{ThrmMainSaddleNonTriv}.
For these, we cannot reduce the analysis entirely to the known ReLU case.
However, the arguments are analogous to the ones developed in \cref{LemCritNonAff,LemCritNonAffSaddle}, and we can use a shortcut for small $\gamma$ by arguing that we approximate the ReLU case in a suitable sense.


\subsection{Explicit analysis for leaky ReLU}

The following is the analog of \cref{LemCritNonAff} in the leaky ReLU case.
Informally, one recovers \cref{LemCritNonAff} from \cref{LemLeakyCritNonAff} in the limit $\gamma \rightarrow 0$.
We will discuss this in more detail after having proved the lemma.

\begin{lemma}
\label{LemLeakyCritNonAff}
	Suppose $\phi \in \R^{3N+1}$ is a critical point or a local extremum of $\LossLeaky$ but not a global minimum and that $\phi$ has a type-2-active neuron.
Denote by $0 = q_0 < q_1 < \dots < q_n < q_{n+1} = 1$, for $n \in \N_0$, the roughest partition such that $\NNfctLeaky$ is affine on all subintervals $[q_i,q_{i+1}]$, and denote by $K_i \subseteq \{1,\dots,N\}$ the set of all type-2-active neurons of $\phi$ whose breakpoint is $q_i$.
Then $n \geq 1$ and there exists $\sigma \in \{-1,1\}$ such that, abbreviating
\begin{equation*}
	\delta = \gamma^{(1-\sigma)/4}  + \gamma^{(1-\sigma(-1)^n)/4} + (n-1) \sqrt{1+\gamma},
\end{equation*}%
the following hold:
\begin{enumerate}[\rm (i)]\itemsep = 0em

\item
\begin{enumerate}[\rm (a)]\itemsep = 0em
\item\label{LemLeakyCritNonAffEquidistr} $q_i = q_1 + \frac{(i-1)(q_n-q_1)}{n-1}$ for all $i \in \{2,\dots,n-1\}$,

\item\label{LemLeakyCritNonAffQ1n} $q_1 = \delta^{-1} \gamma^{(1-\sigma)/4}$, and $q_n = 1 - \delta^{-1} \gamma^{(1-\sigma(-1)^n)/4}$, and $q_n-q_1 = \delta^{-1} (n-1) \sqrt{1+\gamma}$,
\end{enumerate}

\item\label{LemLeakyCritNonAffBreakpoints} $-b_j/w_j \in \{q_1,\dots,q_n\}$ for all type-2-active neurons $j \in \{1,\dots,N\}$ of $\phi$,

\item\label{LemLeakyCritNonAffSignW} $\mathrm{sign}(w_j) = \sigma (-1)^{i+1}$ for all $i \in \{1,\dots,n\}$, $j \in K_i$,

\item\label{LemLeakyCritNonAffSlope}
\begin{enumerate}[\rm (a)]\itemsep = 0em
\item $\ssum{j \in K_i}{} v_j w_j =
\begin{cases}
	\gamma^{-1/2} &\text{if } i = 1 = n, \\
	\frac{1}{\delta} \big( \frac{1}{\sqrt{1+\gamma}} + \frac{1}{\gamma^{(1-\sigma)/4}} \big) &\text{if } i=1 \ne n, \\
	\frac{1}{\delta} \frac{2}{\sqrt{1+\gamma}} &\text{if } 2 \leq i \leq n-1, \\
	\frac{1}{\delta} \big( \frac{1}{\sqrt{1+\gamma}} + \frac{1}{\gamma^{(1-\sigma(-1)^n)/4}} \big) &\text{if } i=n \ne 1,
\end{cases}$
\end{enumerate}

\item\label{LemLeakyCritNonAffCentered} $\phi$ is centered,

\item\label{LemLeakyCritNonAffFct} $\displaystyle{\NNfctLeaky(x)-x = \frac{-\sigma (-1)^i (1-\gamma)}{\delta} \cdot
	\begin{cases}
		\frac{x}{\gamma^{(1-\sigma)/4}} - \frac{1}{2\delta} &\text{if } i = 0, \\
		 \frac{x}{\sqrt{1+\gamma}} - \frac{i-1/2}{\delta} - \frac{\gamma^{(1-\sigma)/4}}{\delta \sqrt{1+\gamma}} &\text{if } 1 \leq i \leq n-1, \\
		\frac{x}{\gamma^{(1-\sigma(-1)^n)/4}} + \frac{1}{2\delta} - \frac{1}{\gamma^{(1-\sigma(-1)^n)/4}} &\text{if } i = n
	\end{cases}}$

for all $i \in \{0,\dots,n\}$, $x \in [q_i,q_{i+1}]$.
\end{enumerate}
\end{lemma}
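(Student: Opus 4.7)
My approach will mirror the proof of \cref{LemCritNonAff}, tracking the extra $\gamma$-contributions from the complementary sets $\hat{I}_j = [0,1] \setminus I_j$ that the leaky activation introduces. The starting point is the leaky analogs of the integral identities in \eqref{CritCharacteristic} and \cref{LemZeroInt}. The vanishing of $\partial_c \LossLeaky(\phi)$ gives $\int_0^1 (\NNfctLeaky(x) - x)\,dx = 0$, and \cref{LemLeakySaddle} ensures that every breakpoint in $(0,1)$ comes from a type-2-active neuron, so in particular $n \geq 1$. For a non-flat non-degenerate type-2-active neuron $j$, the partial derivatives with respect to $b_j$, $w_j$, and $v_j$ exist and vanish. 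Combining the equation for $\partial_{b_j}\LossLeaky(\phi)$ with that for $\partial_c\LossLeaky(\phi)$ and using that the integrals over $I_j$ and $\hat{I}_j$ sum to the integral over $[0,1]$, one obtains $(1-\gamma) v_j \int_{I_j} (\NNfctLeaky(x) - x)\,dx = 0$, hence $\int_{I_j}(\NNfctLeaky(x)-x)\,dx = 0$ and likewise for $\hat{I}_j$. Differencing such integrals over consecutive breakpoints yields $\int_{q_i}^{q_{i+1}}(\NNfctLeaky(x)-x)\,dx = 0$ for every $i \in \{0,\dots,n\}$. The same manipulation applied to $\partial_{w_j}\LossLeaky(\phi) = 0$ gives $(1-\gamma)\int_{I_j} x(\NNfctLeaky(x)-x)\,dx = -\gamma R$, where $R := \int_0^1 x(\NNfctLeaky(x)-x)\,dx$; in particular, $\int_{I_j} x(\NNfctLeaky(x)-x)\,dx$ takes exactly one of two possible values according to whether $I_j = [0, q_i]$ or $I_j = [q_i, 1]$.

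Since $\int_0^{q_i} x(\NNfctLeaky(x)-x)\,dx$ is then a fixed number for each breakpoint $q_i$, all neurons in a given $K_i$ must share the sign of $w_j$. The sign-alternation argument from the proof of \cref{LemCritNonAff} carries over essentially verbatim (using that $\NNfctLeaky \equiv \ID{[0,1]}$ whenever $R = 0$, which is ruled out by the non-global-minimum hypothesis), forcing signs to alternate between consecutive $K_i$'s and giving \eqref{LemLeakyCritNonAffSignW} with $\sigma := \mathrm{sign}(w_j)$ for $j \in K_1$. Writing $\NNfctLeaky(x) = A_ix + B_i$ on $[q_i, q_{i+1}]$, the constraint $\int_{q_i}^{q_{i+1}}(\NNfctLeaky(x)-x)\,dx = 0$ together with continuity at the breakpoints produces the recurrence $A_{i+1} - 1 = -(q_{i+1}-q_i)(q_{i+2}-q_{i+1})^{-1}(A_i - 1)$ from \cref{LemUniqueAffGeneral}. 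Ruling out type-2-active neurons whose breakpoint lies strictly between two $q_i$'s proceeds exactly as in \cref{LemCritNonAff} using the explicit piecewise-affine form of $\NNfctLeaky$, which gives \eqref{LemLeakyCritNonAffBreakpoints}. The positions of $q_1, \dots, q_n$ are then determined by substituting the recurrence into the values of $\int_0^{q_i} x(\NNfctLeaky(x)-x)\,dx$ derived above. For interior indices $i \in \{2, \dots, n-1\}$ these reduce to $(q_{i+1}-q_i)^2 = (q_i - q_{i-1})^2$, forcing equidistribution and yielding \eqref{LemLeakyCritNonAffEquidistr}; for $i \in \{1, n\}$, however, the leaky $\gamma$-weighting replaces this clean equality by an asymmetric, $\sigma$- and parity-dependent equation that produces the $\gamma^{(1-\sigma)/4}$ and $\gamma^{(1-\sigma(-1)^n)/4}$ boundary factors of \eqref{LemLeakyCritNonAffQ1n}, with the $\delta$-normalization fixed by $q_1 + (q_n - q_1) + (1 - q_n) = 1$.

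Once the breakpoints are in hand, the slope values \eqref{LemLeakyCritNonAffSlope} follow from the recurrence for $A_i$ together with the $\gamma$-weighted decomposition of $A_l$ as a sum of $v_j w_j$ over type-2-active neurons $j$ for which $[q_l,q_{l+1}] \subseteq I_j$ plus $\gamma$ times the analogous sum over those with $[q_l,q_{l+1}] \subseteq \hat{I}_j$. The formula \eqref{LemLeakyCritNonAffFct} is then a direct substitution into $A_i x + B_i$, and centeredness \eqref{LemLeakyCritNonAffCentered} follows from $c = \NNfctLeaky(0) = B_0$ combined with the explicit value of $B_0$. I expect the main obstacle to be the asymmetric algebra at the outer intervals $[0, q_1]$ and $[q_n, 1]$: in the ReLU case \eqref{CritCharacteristic} forces the $x$-weighted integrals to vanish uniformly on every $I_j$, whereas in the leaky case the $\gamma$-term distorts those boundary conditions in a way that depends on $\sigma$ and on the parity of $n$. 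Once the resulting system is set up correctly, it admits the closed-form solution \eqref{LemLeakyCritNonAffQ1n}, and everything else reduces to substitution and routine manipulation of the recurrence.
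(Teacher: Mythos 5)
Your strategy coincides with the paper's: derive the $\gamma$-weighted analogs of \eqref{CritCharacteristic}, force alternating signs, equidistribute the interior breakpoints via \cref{LemUniqueAffGeneral}.\eqref{LemUniqueAffGeneralConsequSum}, and solve the asymmetric boundary equations for $q_1$ and $q_n$. Two steps, however, do not go through as written. First, your justification of $n \geq 1$ is a non sequitur: knowing that every kink of $\NNfctLeaky$ comes from a type-2-active neuron says nothing about whether a kink exists, and even the presence of a non-flat type-2-active neuron does not by itself produce one, since contributions at a common breakpoint could cancel. The paper instead argues by contradiction: by \cref{LemLeakySaddle} there is a non-flat type-2-active neuron $j$; if $\NNfctLeaky$ were affine, the identities $\int_{I_j}(\NNfctLeaky(x)-x)\,dx = \int_{\hat{I}_j}(\NNfctLeaky(x)-x)\,dx = 0$ together with \cref{LemUniqueAffGeneral} applied to the partition $0 < t_j < 1$ would force $\NNfctLeaky = \ID{[0,1]}$, contradicting non-global-minimality. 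You have all the ingredients for this, but the step needs to be made.

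Second, the claim that excluding spurious breakpoints ``proceeds exactly as in \cref{LemCritNonAff}'' understates the matter considerably. In the leaky case the candidate breakpoint $t_j$ may fall in one of the outer intervals $[0,q_1]$ or $[q_n,1]$, whose lengths differ from the interior spacing by the factors $\gamma^{(1-\sigma)/4}$ and $\gamma^{(1-\sigma(-1)^n)/4}$, and the condition $\frac{\partial}{\partial v_j}\LossLeaky(\phi)=0$ carries an extra $\gamma^{\tau}$-weighted tail integral. The paper's treatment splits into six cases according to the interval index, $\mathrm{sign}(w_j)$, and the parity of $n$; the interior cases do reduce to the ReLU computation, but the boundary cases require showing that certain explicit cubics in $t_j$ have no admissible roots after substituting the formulas for $q_1$ and $q_2$. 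This is genuinely new work, not a verbatim transfer. The rest of your outline --- the even/odd dichotomy leading to \eqref{LemLeakyCritNonAffQ1n}, the slope formulas, and centeredness --- matches the paper's route, though each is a nontrivial computation you have only gestured at.
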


\begin{proof}
	First, note that $\phi$ must have at least one non-flat type-2-active neuron by \cref{LemLeakySaddle}.
For any such neuron $j$,
\begin{equation*}
	0 = \frac{1}{2 v_j} \frac{\partial}{\partial w_j} \LossLeaky(\phi) = (1-\gamma) \int_{I_j} x(\NNfctLeaky(x)- x) dx + \gamma \int_0^1 x(\NNfctLeaky(x)- x) dx,
\end{equation*}%
so the two integrals
\begin{equation}
\label{LemLeakyCritNonAffProofEqu}
\begin{split}
	\int_{I_j} x(\NNfctLeaky(x)- x) dx &= \frac{-\gamma}{1-\gamma} \int_0^1 x(\NNfctLeaky(x)- x) dx, \\
	\int_{\hat{I}_j} x(\NNfctLeaky(x)- x) dx &= \frac{1}{1-\gamma} \int_0^1 x(\NNfctLeaky(x)- x) dx
\end{split}
\end{equation}%
are independent of the non-flat type-2-active neuron $j$.
Doing the same with the coordinate $b_j$ and using that $2\int_0^1 (\NNfctLeaky(x)-x)dx = \frac{\partial}{\partial c}\LossLeaky(\phi) = 0$, we find
\begin{equation}
\label{LemLeakyCritNonAffProofEqu0}
	\int_{I_j} (\NNfctLeaky(x)- x) dx = 0 = \int_{\hat{I}_j} (\NNfctLeaky(x)- x) dx.
\end{equation}%
The function $\NNfctLeaky$ cannot be affine for otherwise we could apply \cref{LemUniqueAffGeneral} with the partition $0 < t_j < 1$ for the breakpoint $t_j$ of any non-flat type-2-active neuron $j$ and obtain a contradiction with $\phi$ not being a global minimum.
In other words, $n \ne 0$.
Moreover, since each $K_i$, $i \in \{1,\dots,n\}$, must contain a non-flat neuron, we deduce from \eqref{LemLeakyCritNonAffProofEqu0} that $\int_{q_i}^{q_{i+1}} (\NNfctLeaky(x)-x)dx = 0$ for all $i \in \{0,\dots,n\}$.
It follows from this and $\frac{\partial}{\partial v} \LossLeaky(\phi) = 0$ that \eqref{LemLeakyCritNonAffProofEqu} holds even for flat neurons $j \in \bigcup_i K_i$.
Also, \cref{LemUniqueAffGeneral} implies that the two integrals in \eqref{LemLeakyCritNonAffProofEqu} are not zero.
In particular,
\begin{equation*}
	\int_{I_j} x(\NNfctLeaky(x)- x) dx \ne \int_{\hat{I}_j} x(\NNfctLeaky(x)- x) dx
\end{equation*}%
for any $j \in \bigcup_i K_i$ and, hence, $\mathrm{sign}(w_{j_0}) = \mathrm{sign}(w_{j_1})$ if $j_0$ and $j_1$ belong to the same set $K_i$.
Furthermore, we find from \eqref{LemLeakyCritNonAffProofEqu} that $\mathrm{sign}(w_{j_0}) \ne \mathrm{sign}(w_{j_1})$ for all $i \in \{1,\dots,n-1\}$ and $j_0 \in K_i$, $j_1 \in K_{i+1}$ by taking differences of the integrals $\int_{I_j} x(\NNfctLeaky(x)- x) dx$ for different $j$.
This establishes item \eqref{LemLeakyCritNonAffSignW}.
Consequently, we obtain from \cref{LemUniqueAffGeneral}.\eqref{LemUniqueAffGeneralConsequSum} (with the partition $q_i,q_{i+1},q_{i+2}$) that
\begin{equation*}
	0 = (q_{i+2}-q_{i+1})^2 - (q_{i+1}-q_i)^2,
\end{equation*}%
for all $i \in \{1,\dots,n-2\}$ (note that we do not obtain this equality for $i=0$ or $i=n-1$).
Thus, the points $q_1,\dots,q_n$ are equidistributed in $[q_1,q_n]$ (but not necessarily in $[0,1]$), which is exactly item \eqref{LemLeakyCritNonAffEquidistr}.
Next, we prove item \eqref{LemLeakyCritNonAffQ1n}.
To do so, we distinguish between even $n$ and odd $n$.
In the former case, $\mathrm{sign}(w_{j_1}) \ne \mathrm{sign}(w_{j_n})$ for all $j_1 \in K_1$, $j_n \in K_n$ by item \eqref{LemLeakyCritNonAffSignW} and, hence, by \eqref{LemLeakyCritNonAffProofEqu},
\begin{equation*}
	\int_0^{q_1} x(\NNfctLeaky(x)-x)dx = \int_{q_n}^1 x(\NNfctLeaky(x)-x)dx.
\end{equation*}%
Write $\NNfctLeaky(x) = A_ix+B_i$ on $[q_i,q_{i+1}]$, for all $i \in \{0,\dots,n\}$, so that the formulas in \eqref{LemUniqueAffGeneralFormulas} hold.
We compute
\begin{equation*}
	\frac{1}{12}(A_0-1)q_1^3 = \int_0^{q_1} x(\NNfctLeaky(x)-x)dx = \int_{q_n}^1 x(\NNfctLeaky(x)-x)dx = \frac{(-1)^n}{12}(A_0-1)q_1(1-q_n)^2.
\end{equation*}%
Thus, $q_1 = 1-q_n$ and, by \eqref{LemLeakyCritNonAffEquidistr},
\begin{equation*}
	\int_0^1 x(\NNfctLeaky(x)-x)dx = \frac{1}{12} (A_0-1) q_1 \sum_{k=0}^{n} (-1)^k (q_{k+1}-q_k)^2 = \frac{1}{12} (A_0-1) q_1 \left( 2q_1^2 - \left(\frac{1-2q_1}{n-1}\right)^2 \right).
\end{equation*}%
Hence, it follows from \eqref{LemLeakyCritNonAffProofEqu} and item \eqref{LemLeakyCritNonAffSignW} that
\begin{equation*}
	q_1^2 = \frac{\sigma\gamma^{(1-\sigma)/2}}{1-\gamma} \left( 2q_1^2 - \left(\frac{1-2q_1}{n-1}\right)^2 \right).
\end{equation*}%
Solving this as a quadratic equation in $q_1$ under the constraint $q_1 \in (0,1/2)$ yields $q_1 = \delta^{-1} \gamma^{(1-\sigma)/4}$.
Now, assume $n$ is odd.
Recall that $\int_{q_i}^{q_{i+2}} x(\NNfctLeaky(x)-x)dx = 0$ for all $i \in \{1,\dots,n-2\}$.
In particular, $\int_{q_1}^{q_n} x(\NNfctLeaky(x)-x)dx = 0$.
Note that $\sigma$ is already determined as the sign of $w_j$ for any $j \in K_1$.
The partial derivative with respect to $w_j$ being zero for a non-flat neuron $j \in K_1$ implies
\begin{equation*}
	0 = \int_{q_n}^1 x(\NNfctLeaky(x)-x) dx + \gamma^{\sigma} \int_0^{q_1} x(\NNfctLeaky(x)-x) dx = -\frac{1}{12} (A_0-1) q_1 ((1-q_n)^2-\gamma^{\sigma}q_1^2).
\end{equation*}%
Thus, $1-q_n = \gamma^{\sigma/2}q_1$.
From this, the formula for $q_1$ follows in the case $n=1$.
If $n \ne 1$, then we use that the partial derivative with respect to $w_j$ for a non-flat neuron $j \in K_2$ is zero to calculate
\begin{equation*}
\begin{split}
	0 &= \int_{q_2}^1 x(\NNfctLeaky(x)-x)dx + \gamma^{-\sigma} \int_0^{q_2} x(\NNfctLeaky(x)-x)dx \\
	&= \int_{q_{n-1}}^1 x(\NNfctLeaky(x)-x)dx + \gamma^{-\sigma} \int_0^{q_2} x(\NNfctLeaky(x)-x)dx \\
	&= \frac{1}{12}(A_0-1)q_1 \left[ \gamma^{-\sigma} q_1^2 - (1-q_n)^2 + (1-\gamma^{-\sigma}) \left( \frac{q_n-q_1}{n-1} \right)^2 \right].
\end{split}
\end{equation*}%
Using $1-q_n = \gamma^{\sigma/2}q_1$, the term in the rectangular brackets becomes a quadratic polynomial in $q_1$, and solving for $q_1$ leads to $q_1 = \delta^{-1} \gamma^{(1-\sigma)/4}$.
This finishes item \eqref{LemLeakyCritNonAffQ1n}.
From here on, we no longer treat even $n$ and odd $n$ separately.
Next, we show item \eqref{LemLeakyCritNonAffBreakpoints}.
Given any type-2-active neuron $j \in \{1,\dots,N\}$, take $i \in \{0,\dots,n\}$ with $q_i \leq t_j \leq q_{i+1}$ and denote $\tau = \mathrm{sign}(w_j)$.
Then, $\frac{\partial}{\partial v_j} \Loss(\phi) = 0$ implies
\begin{equation}
\label{LemLeakyCritNonAffProofEqu2}
\begin{split}
	0 &= \int_{t_j}^{q_{i+1}} (x-t_j) (\NNfctLeaky(x)-x) dx + \gamma^{\tau} \int_{q_i}^{t_j} (x-t_j) (\NNfctLeaky(x)-x) dx \\
	&\quad+ \int_{q_{i+1}}^1 x(\NNfctLeaky(x)-x) dx + \gamma^{\tau} \int_0^{q_i} x(\NNfctLeaky(x)-x) dx.
\end{split}
\end{equation}%
A direct computation with the formulas in \eqref{LemUniqueAffGeneralFormulas} yields
\begin{equation}
\label{LemLeakyCritNonAffProofEqu3}
\begin{split}
	&\int_{t_j}^{q_{i+1}} (x-t_j)(\NNfctLeaky(x)-x) dx + \gamma^{\tau} \int_{q_i}^{t_j} (x-t_j)(\NNfctLeaky(x)-x) dx \\
	&= \frac{(-1)^i}{12} (A_0-1) \frac{q_1}{q_{i+1}-q_i} \Big[ (q_{i+1}-q_i)^3 - (1-\gamma^{\tau}) (t_j-q_i)^2 (3q_{i+1}-2t_j-q_i) \Big].
\end{split}
\end{equation}%
Furthermore, if $i \ne 0$ and $\tau = \sigma (-1)^{i+1}$, then
\begin{equation*}
\begin{split}
	&\int_{q_{i+1}}^1 x(\NNfctLeaky(x)-x) dx + \gamma^{\tau} \int_0^{q_i} x(\NNfctLeaky(x)-x) dx \\
	&= -\int_{q_i}^{q_{i+1}} x(\NNfctLeaky(x)-x) dx + \int_{q_i}^1 x(\NNfctLeaky(x)-x) dx + \gamma^{\tau} \int_0^{q_i} x(\NNfctLeaky(x)-x) dx \\
	&= -\int_{q_i}^{q_{i+1}} x(\NNfctLeaky(x)-x) dx = -\frac{(-1)^i}{12} (A_0-1) q_1 (q_{i+1}-q_i)^2,
\end{split}
\end{equation*}%
where the second-last equality is implied by $\frac{\partial}{\partial w_k} \Loss(\phi) = 0$ for a non-flat type-2-active neuron $k \in K_i$.
Similarly, if $i \ne n$ and $\tau = \sigma (-1)^{i+2}$, then
\begin{equation*}
\begin{split}
	\int_{q_{i+1}}^1 x(\NNfctLeaky(x)-x) dx + \gamma^{\tau} \int_0^{q_i} x(\NNfctLeaky(x)-x) dx &= - \gamma^{\tau} \int_{q_i}^{q_{i+1}} x(\NNfctLeaky(x)-x) dx \\
	&= - \gamma^{\tau} \frac{(-1)^i}{12} (A_0-1) q_1 (q_{i+1}-q_i)^2.
\end{split}
\end{equation*}%
The remaining cases are $i \in \{0,n\}$ with $\tau = -\sigma$, respectively $\tau = \sigma (-1)^n$, for which
\begin{equation*}
	\int_{q_{i+1}}^1 x(\NNfctLeaky(x)-x) dx + \gamma^{\tau} \int_0^{q_i} x(\NNfctLeaky(x)-x) dx =
	\frac{(-1)^{n-i}}{12}(A_0-1)\gamma^{i\tau/n}q_1 \cdot
	\begin{cases}
		\gamma^{(n-i)\sigma/n}q_1^2 &\text{if } n \text{ is odd}, \\
		q_1^2 - (q_2-q_1)^2 &\text{if } n \text{ is even}.
	\end{cases}
\end{equation*}%
In conclusion, we obtain from \eqref{LemLeakyCritNonAffProofEqu2} and \eqref{LemLeakyCritNonAffProofEqu3} that
\begin{equation*}
\begin{split}
	0 =
	\begin{cases}
		-(t_j-q_i)^2 (3q_{i+1}-2t_j-q_i) &\text{if } i \ne 0 \text{ and } \tau = \sigma (-1)^{i+1}, \\
		(q_{i+1}-q_i)^3 - (t_j-q_i)^2 (3q_{i+1}-2t_j-q_i) &\text{if } i \ne n \text{ and } \tau = \sigma (-1)^{i+2}, \\
		(1-\gamma^{\sigma}) q_1^3 - (1-\gamma^{-\sigma})t_j^2(3q_1-2t_j) &\text{if } n \text{ is odd}, i=0, \text{ and } \tau = -\sigma, \\
		(1+\gamma^{\sigma})(1-q_n)q_1^2 - (t_j-q_n)^2(3-2t_j-q_n) &\text{if } n \text{ is odd}, i=n, \text{ and } \tau = -\sigma, \\
		2q_1^3 - q_1(q_2-q_1)^2 - (1-\gamma^{-\sigma})t_j^2(3q_1-2t_j) &\text{if } n \text{ is even}, i=0, \text{ and } \tau = -\sigma, \\
		(1+\gamma^{\sigma}) q_1^3 - \gamma^{\sigma} q_1 (q_2-q_1)^2 - (1-\gamma^{\sigma})(t_j-q_n)^2(3-2t_j-q_n) &\text{if } n \text{ is even}, i=n, \text{ and } \tau = \sigma.
	\end{cases}
\end{split}
\end{equation*}%
In the first case, we must have $t_j = q_i$.
In the second case, the term can be rewritten as $(q_{i+1}-t_j)^2 (q_{i+1}+2t_j-3q_i)$, so we must have $t_j = q_{i+1}$.
In the third case, the two summands always have opposite signs, so their difference is always strictly positive or strictly negative but not zero.
In the fourth case, the right hand side is lower bounded by $(1-q_n)q_1^2$, so it cannot be zero.
In the fifth case, after plugging in $q_1$ and $q_2$, we find that $t_j$ must satisfy
\begin{equation*}
	0 = \sqrt{\gamma} \gamma^{(1+\sigma)/4} + t_j^2 \delta^2 ( 3 \gamma^{(1-\sigma)/4} - 2 t_j \delta ).
\end{equation*}%
However, there is no solution $t_j$ to this equation with $t_j \in [0,q_1]$.
Lastly, in the sixth case, $1-t_j$ must satisfy the same equation, which is incompatible with $t_j \in [q_n,1]$.
This proves item \eqref{LemLeakyCritNonAffBreakpoints}.
Now, we tend to item \eqref{LemLeakyCritNonAffSlope}.
Since $\bigcup_i K_i$ contains all type-2-active neurons of $\phi$ and there are no type-1-active neurons by \cref{LemLeakySaddle}, we can write the slopes of $\NNfctLeaky$ as
\begin{equation}
\label{LemLeakyCritNonAffProofEqu4}
	A_l = \ssum{i=1}{l} \gamma^{\frac{1+\sigma(-1)^i}{2}} \ssum{j \in K_i}{} v_jw_j + \ssum{i=l+1}{n} \gamma^{\frac{1-\sigma(-1)^i}{2}} \ssum{j \in K_i}{} v_jw_j,
\end{equation}%
for all $l \in \{0,\dots,n\}$, by item \eqref{LemLeakyCritNonAffSignW}.
With this, we find, for all $i \in \{1,\dots,n\}$,
\begin{equation*}
\begin{split}
	-\frac{q_1}{q_{i+1}-q_i}(A_0-1) = (-1)^{i-1} (A_i-1) &= (-1)^{i-1} (A_{i-1}-1) + \sigma (1-\gamma) \ssum{j \in K_i}{} v_jw_j \\
	&= \frac{q_1}{q_i-q_{i-1}}(A_0-1) + \sigma (1-\gamma) \ssum{j \in K_i}{} v_jw_j.
\end{split}
\end{equation*}%
Thus, for all $i \in \{1,\dots,n\}$,
\begin{equation*}
	\ssum{j \in K_i}{} v_jw_j = \frac{-\sigma}{1-\gamma} (A_0-1) q_1 \frac{q_{i+1}-q_{i-1}}{(q_{i+1}-q_i)(q_i-q_{i-1})}.
\end{equation*}%
Combining this with the formula \eqref{LemLeakyCritNonAffProofEqu4} for $A_0$ yields
\begin{equation}
\label{LemLeakyCritNonAffProofEqu5}
	\frac{-\sigma(1-\gamma)}{A_0-1} = \sigma(1-\gamma) + q_1 \sum_{i=1}^{n} \gamma^{\frac{1-\sigma(-1)^i}{2}} \frac{q_{i+1}-q_{i-1}}{(q_{i+1}-q_i)(q_i-q_{i-1})} = \gamma^{(1-\sigma)/4} \delta.
\end{equation}%
Plugging this back into the formula for $\ssum{j \in K_i}{} v_jw_j$, we obtain for $n=1$ that $\ssum{j \in K_1}{} v_jw_j = \gamma^{-1/2}$ and for $n \geq 2$, $i \in \{1,\dots,n\}$ that
\begin{equation*}
	\ssum{j \in K_i}{} v_jw_j = \frac{1}{\delta^2} \frac{q_{i+1}-q_{i-1}}{(q_{i+1}-q_i)(q_i-q_{i-1})} =
	\begin{cases}
		\delta^{-1} \big( (1+\gamma)^{-1/2} + \gamma^{-(1-\sigma)/4} \big) &\text{if } i=1, \\
		2 \delta^{-1} (1+\gamma)^{-1/2} &\text{if } 2 \leq i \leq n-1, \\
		\delta^{-1} \big( (1+\gamma)^{-1/2} + \gamma^{-(1-\sigma(-1)^n)/4} \big) &\text{if } i=n.
	\end{cases}
\end{equation*}%
This establishes item \eqref{LemLeakyCritNonAffSlope}.
By the formulas in \eqref{LemUniqueAffGeneralFormulas} and \eqref{LemLeakyCritNonAffProofEqu5},
\begin{equation*}
	A_i-1 = \sigma (-1)^{i+1} (1-\gamma) \delta^{-1} \cdot
	\begin{cases}
		\gamma^{-(1-\sigma)/4} &\text{if } i=0, \\
		(1+\gamma)^{-1/2} &\text{if } 1 \leq i \leq n-1, \\
		\gamma^{-(1-\sigma(-1)^n)/4} &\text{if } i=n
	\end{cases}
\end{equation*}%
and
\begin{equation*}
	B_i = \frac{1}{2} \sigma (-1)^i (1-\gamma) \delta^{-2} \cdot
	\begin{cases}
		1 &\text{if } i=0, \\
		2i-1 + 2 \gamma^{(1-\sigma)/4} (1+\gamma)^{-1/2} &\text{if } 1 \leq i \leq n-1, \\
		2 \gamma^{-(1-\sigma(-1)^n)/4} \delta - 1 &\text{if } i=n.
	\end{cases}
\end{equation*}%
In particular, item \eqref{LemLeakyCritNonAffFct} holds.
Lastly, we know from \cref{LemLeakySaddle} and item \eqref{LemLeakyCritNonAffSignW} that
\begin{equation*}
	0 = \NNfctLeaky(0) - B_0 = c - \sum_{i=1}^n \gamma^{\frac{1-\sigma(-1)^i}{2}} q_i \ssum{j \in K_i}{} v_jw_j - B_0.
\end{equation*}%
After plugging in the formulas for $B_0$, $\delta$, $q_i$, and $\ssum{j \in K_i}{} v_jw_j$, a lengthy but straight-forward computation results in $c = 1/2$.
Thus, $\phi$ is centered, which concludes the proof.
\end{proof}

We make a few remarks about the relationship between the previous lemma and \cref{LemCritNonAff}.
The quantity $\delta$ in \cref{LemLeakyCritNonAff} replaces the factor $n+1$ that appears throughout \cref{LemCritNonAff}.
In the limit $\gamma \rightarrow 0$,
\begin{equation*}
	\delta \rightarrow
	\begin{cases}
		n &\text{if } n \text{ is odd}, \\
		n+1 &\text{if } n \text{ is even and } \sigma = 1, \\
		n-1 &\text{if } n \text{ is even and } \sigma = -1.
	\end{cases}
\end{equation*}%
Thus, in order to match \cref{LemCritNonAff} with the limit case $\gamma \rightarrow 0$ of \cref{LemLeakyCritNonAff}, one would need to apply the former lemma with
\begin{equation*}
	\tilde{n} =
	\begin{cases}
		n-1 &\text{if } n \text{ is odd}, \\
		n &\text{if } n \text{ is even and } \sigma = 1, \\
		n-2 &\text{if } n \text{ is even and } \sigma = -1
	\end{cases}
\end{equation*}%
in place of $n$ so that $\delta \rightarrow \tilde{n} + 1$.
One would hope that the quantities in \cref{LemLeakyCritNonAff} converge to their counterparts from \cref{LemCritNonAff} with $\tilde{n}$ as $\gamma \rightarrow 0$.
Although the number of breakpoints in each lemma is different in most cases (i.e.\ $n \ne \tilde{n}$), this convergence actually happens:
on the one hand, if $n$ is odd and $\sigma = 1$, then $q_n \rightarrow 1$ `degenerates' into the endpoint of the interval $[0,1]$ and only the $(n-1)$-many breakpoints $q_1,\dots,q_{n-1}$ remain, which converge to $\frac{i}{\tilde{n}+1}$, $i \in \{1,\dots,\tilde{n}\}$.
Similarly, if $n$ is odd and $\sigma = -1$, then $q_1 \rightarrow 0$ degenerates and $q_2,\dots,q_n$ remain and converge to the correct breakpoints $\frac{i}{\tilde{n}+1}$, $i \in \{1,\dots,\tilde{n}\}$.
On the other hand, if $n$ is even and $\sigma = 1$, then none of the breakpoints degenerate and $q_1,\dots,q_n$ remain and converge.
Lastly, if $n$ is even and $\sigma = -1$, then both $q_1 \rightarrow 0$ and $q_n \rightarrow n$, and we are left with $q_2,\dots,q_{n-1}$, which converge.

In addition, note that the parity of the $w$-coordinate of the type-2-active neurons match in each lemma even though these are $\sigma (-1)^{i+1}$ and $(-1)^{i+1}$, respectively.
They match because $q_1$ can only degenerate into 0 if $\sigma = -1$.
Lastly, note that the quantities $\ssum{j \in K_i}{} v_j w_j$ also converge to their counterparts as $\gamma \rightarrow 0$.

\begin{lemma}
\label{LemLeakyCritNonAffSaddle}
		Suppose $\phi \in \R^{3N+1}$ is a critical point or a local extremum of $\LossLeaky$ but not a global minimum and that $\phi$ has a type-2-active neuron.
There exists $\gamma_0 \in (0,1]$ depending only on $N$ such that if $\gamma < \gamma_0$, then $\phi$ is a saddle point of $\LossLeaky$.
\end{lemma}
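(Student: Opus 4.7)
The plan is to mirror the Hessian-determinant argument used in \cref{LemCritNonAffSaddle}, treating leaky ReLU as a $\gamma$-perturbation of ReLU. First, I invoke the leaky-ReLU analog of \cref{LemNoMaxima} (which works verbatim because $\LossLeaky$ is still strictly convex in $c$) to reduce the task to showing that $\phi$ is not a local minimum. Since $\phi$ has a type-2-active neuron, \cref{LemLeakyCritNonAff} applies and fixes the combinatorial data of $\phi$: the number and positions $q_1,\dots,q_n$ of breakpoints (with $n\le N$), the signs $\sigma(-1)^{i+1}$ of $w_j$ for $j\in K_i$, the slope sums $\sum_{j\in K_i} v_j w_j$, and the fact that $\phi$ is centered with explicit realization $\NNfctLeaky$.

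Next, following the ReLU proof, I single out the set $K_1$ of type-2-active neurons with breakpoint $q_1$, pick some $j_1\in K_1$ with $v_{j_1}w_{j_1}>0$ (which exists because $\sum_{j\in K_1} v_jw_j$ is strictly positive by \cref{LemLeakyCritNonAff}\eqref{LemLeakyCritNonAffSlope}), enumerate the neurons $j_2,\dots,j_l\in K_1$ with $v_j w_j<0$, and pick some type-2-active neuron $k\in K_2$ (using $n\ge 2$; the case $n=1$ is handled separately, see below). Since the map $P$ from the start of \cref{section_leaky} sends type-2-active neurons of $\phi$ to type-2-active neurons of $P(\phi)$ and $\LossLeaky=\Loss\circ P$, the twice-continuous-differentiability result \cref{LemTwiceDiff} transfers: $\LossLeaky$ is twice continuously differentiable in the coordinates $(b_{j_1},\dots,b_{j_l},v_k,c)$ in a neighborhood of $\phi$. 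Let $H(\gamma)$ denote the corresponding Hessian at $\phi$.

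The entries of $H(\gamma)$ are integrals of polynomial expressions in $x$ over $I_j$ plus $\gamma$ times the same integrals over $\hat{I}_j=[0,1]\setminus I_j$, as seen from the partial derivative formulas listed at the start of \cref{section_leaky}. Substituting the explicit realization \cref{LemLeakyCritNonAff}\eqref{LemLeakyCritNonAffFct} and the breakpoint locations, each entry of $H(\gamma)$ becomes a rational function of $\gamma$, $v_{j_i}w_{j_i}$, $v_kw_k$, and $w_k$, continuous at $\gamma=0$. Writing $H(\gamma)=H(0)+\gamma\, R(\gamma)$ with $R$ uniformly bounded in $\gamma\in(0,1)$, the key observation is that $H(0)$ is exactly the Hessian considered in \cref{LemCritNonAffSaddle} (because in the limit $\gamma\to 0$ the leaky contributions drop out and the integrals reduce to those over $I_j$). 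By the ReLU calculation at the end of \cref{LemCritNonAffSaddle}, $\det H(0)<0$, with a bound that depends only on $n\le N$ and the ratios $\lambda_i=\tfrac{n+1}{2}v_{j_i}w_{j_i}$ which satisfy $\sum_i\lambda_i\le 1$. Hence there exists $\gamma_0$, depending only on $N$, such that for all $\gamma<\gamma_0$ we still have $\det H(\gamma)<0$, so $H(\gamma)$ possesses a strictly negative eigenvalue; a second order expansion of $\LossLeaky$ in the selected coordinates then rules out $\phi$ being a local minimum.

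The main obstacle I anticipate is making the continuity-in-$\gamma$ step uniform, as the quantities $v_{j_i}w_{j_i}$ are free parameters satisfying only $\sum_i \lambda_i\le 1$ and $\lambda_i<0$ for $i\ge 2$. I would handle this either by noting that $\det H(0)$ as computed in \cref{LemCritNonAffSaddle} equals $\Delta\bigl(4n(1-\Gamma)\sum_i\lambda_i-1\bigr)$ with $4n(1-\Gamma)<1$ (a strict inequality independent of the $\lambda_i$), which gives a strictly negative upper bound on $\det H(0)$ that degrades at most polynomially in $\sum_i|\lambda_i|$ and $|w_k|,|v_k|$; the perturbation term $\gamma R(\gamma)$ admits a matching polynomial upper bound, so a suitable rescaling absorbs the $\lambda_i$'s and yields a uniform $\gamma_0=\gamma_0(N)$. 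A small separate argument is needed for the edge case $n=1$ (no $K_2$), where I would instead restrict to $(w_{j_1},v_{j_1},c)$ or $(b_{j_1},v_{j_1},c)$ and repeat the same $\gamma$-perturbation analysis starting from the $n=1$ specialization of \cref{LemLeakyCritNonAff}.
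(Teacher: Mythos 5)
Your overall strategy---restricting the Hessian of $\LossLeaky$ to coordinates taken from type-2-active neurons at two consecutive breakpoints and showing its determinant is negative by comparing with the ReLU computation of \cref{LemCritNonAffSaddle} in the limit $\gamma \to 0$---is exactly the paper's approach for the generic case. However, there are genuine gaps. The most concrete one is your uniform choice of $K_1$ and $K_2$: by \cref{LemLeakyCritNonAff}.\eqref{LemLeakyCritNonAffQ1n}, when $\sigma=-1$ the first breakpoint $q_1=\delta^{-1}\gamma^{1/2}$ tends to $0$ as $\gamma\to 0$, so the neurons in $K_1$ do not converge to type-2-active neurons of a ReLU critical point; the limit $H(0)$ is then \emph{not} the Hessian of \cref{LemCritNonAffSaddle}, and your key identification fails. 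The paper handles this by working with $K_\tau$ and $K_{\tau+1}$ for $\tau=(3-\sigma)/2$, i.e.\ shifting to $q_2,q_3$ when $\sigma=-1$ (which requires $n\ge 3$ in that case).

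Second, the degenerate cases are more numerous and more delicate than you anticipate. Besides $n=1$, the case $n=2$ with $\sigma=-1$ also breaks the scheme (there is no $K_3$ to shift to, and $q_1\to 0$, $q_2\to 1$, so $\NNfctLeaky$ becomes affine in the limit); you do not mention this case at all. Moreover, for both of these cases the ReLU limit is an \emph{affine} realization, so there is no non-affine ReLU saddle to perturb from and ``repeating the same $\gamma$-perturbation analysis'' cannot work as stated---one must compute the leaky Hessian determinant explicitly as a function of $\gamma$ and check its sign directly, which is what the paper does (using the coordinates $(b_{j_1},\dots,b_{j_l},v_{j_1})$ for $n=1$ and $(w_{j_1},\dots,w_{j_l})$ for $n=2$, $\sigma=-1$). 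Finally, on the uniformity issue you flag: your proposed fix via polynomial bounds on $\gamma R(\gamma)$ is problematic because the Hessian entries depend on the unbounded free parameters $v_{j_i},w_{j_i},w_k$. The paper avoids this entirely by exploiting the exact algebraic structure $\det(H)=\Delta\bigl(C(n,\gamma)\ssum{i=1}{l}\lambda_i-1\bigr)$ with $\Delta>0$: the coefficient $C(n,\gamma)$ depends only on $n$ and $\gamma$ and converges to $\frac{5\tilde n-3}{8\tilde n-4}<1$, so negativity follows from $\ssum{i=1}{l}\lambda_i\le 1$ uniformly in the remaining parameters. You should adopt that structure rather than a crude perturbation bound.
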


Recall that, in the proof of \cref{LemCritNonAffSaddle}, we studied the Hessian of $\LossLeaky$ restricted to a suitable set of coordinates, taken from type-2-active neurons with breakpoints $\frac{i}{n+1}$, $i \in \{1,2\}$.
To prove \cref{LemLeakyCritNonAffSaddle}, we proceed analogously, which works for sufficiently small $\gamma$ by the above observation about \cref{LemCritNonAff,LemLeakyCritNonAff}.
More precisely, if $n \ne 1$ and $\sigma = 1$, then we will be able to work with the same set of coordinates because $q_1 \rightarrow \frac{1}{\tilde{n}+1}$ and $q_2 \rightarrow \frac{2}{\tilde{n}+1}$.
On the other hand, if $n \geq 3$ and $\sigma = -1$, then $q_1 \rightarrow 0$ but $q_2 \rightarrow \frac{1}{\tilde{n}+1}$ and $q_3 \rightarrow \frac{2}{\tilde{n}+1}$.
In this case, we will use the analogous set of coordinates with $q_2$ and $q_3$ in place of $q_1$ and $q_2$.
However, the argument does not work if $n=1$ or if $n=2$ and $\sigma = -1$ because then $q_1 \rightarrow 0$, $q_2 \rightarrow 1$, and $\NNfctLeaky$ becomes an affine function as $\gamma \rightarrow 0$.
We will treat these two cases separately.

\begin{proof}[Proof of \cref{LemLeakyCritNonAffSaddle}]
	Take $n$, $\delta$, $q_1,\dots,q_n$, and $\sigma$ from \cref{LemLeakyCritNonAff}.
First, assume $n = 2$ with $\sigma = 1$ or $n \geq 3$.
Abbreviate $\tau = (3-\sigma)/2 \in \{1,2\}$.
Similar to the proof of \cref{LemCritNonAffSaddle}, let $K_{\tau} \subseteq \{1,\dots,N\}$ denote the set of those type-2-active neurons with breakpoint $q_{\tau}$, and let $K_{\tau}^- \subseteq K_{\tau}$ be the subset of those neurons $j \in K_{\tau}$ with $v_j < 0$.
Let $j_1 \in K_{\tau}$ with $v_{j_1} > 0$, which exists since $a := \ssum{j \in K_{\tau}}{} v_j w_j > 0$ and $w_j > 0$ for all $j \in K_{\tau}$, and let $j_2,\dots,j_l$, for $l \in \{1,\dots,N\}$, be an enumeration of  $K_{\tau}^-$.
Moreover, let $k \in \{1,\dots,N\}$ be any type-2-active neuron with breakpoint $q_{\tau+1}$.
As in the ReLU case, we consider the Hessian $H$ of $\LossLeaky$ restricted to $(b_{j_1},\dots,b_{j_l},v_k,c)$.

We again introduce some shorthand notation.
For all $i \in \{1,\dots,l\}$, denote $\lambda_i = a^{-1} v_{j_i} w_{j_i}$ so that $\ssum{i=1}{l} \lambda_i \leq 1$.
Define $\mu = \frac{1}{2}(1-(1-\gamma^2)q_{\tau})^{-1} > 0$ and the vectors $u_1 = (v_{j_1},\dots,v_{j_l})$,
\begin{equation*}
	u_2 = \mu
	\begin{pmatrix}
	w_k \big( \gamma (1-2q_{\tau+1}) - (1-\gamma) (q_{\tau+1}-q_{\tau})^2 \big) \\
	2(1-(1-\gamma)q_{\tau})
	\end{pmatrix},
\end{equation*}%
and $u = (u_1,u_2)$.
Furthermore, let $D$ be the diagonal matrix with entries $- \mu (1-\gamma)^2 v_{j_i}^2 / ( a \delta^2 \lambda_i )$, $i \in \{1,\dots,l\}$, let $A$ be the Hessian of $\LossLeaky$ restricted to $(v_k,c)$, let $B = \mu A - u_2u_2^T$, and let $E$ be the diagonal block matrix with blocks $D$ and $B$.
Then $H = \frac{1}{\mu}(E+uu^T)$.
The matrix $A$ is
\begin{equation*}
	A =
	\begin{pmatrix}
	\frac{2}{3} w_k^2 \big( q_{\tau+1}^3 + \gamma^2 (1-q_{\tau+1})^3 \big)
	& -w_k \big( q_{\tau+1}^2 - \gamma (1-q_{\tau+1})^2 \big) \\
	-w_k \big( q_{\tau+1}^2 - \gamma (1-q_{\tau+1})^2 \big) & 2
	\end{pmatrix},
\end{equation*}%
of which both the determinant and the upper left entry are strictly positive.
In particular, $A$ is positive definite and, hence, $\Gamma := \frac{1}{\mu} u_2^T A^{-1} u_2$ is strictly positive.
If $\Gamma < 1$, then the same considerations as in the proof of \cref{LemCritNonAffSaddle} show that $B$ and $E$ are invertible and
\begin{equation*}
	\det(H) = \mu^{-(l+2)} (1 + u_1^TD^{-1}u_1 + u_2^TB^{-1}u_2) \det(E) = \Delta \Big( \frac{a}{\mu} \Big(\frac{\delta}{1-\gamma}\Big)^2 (1-\Gamma) \ssum{i=1}{l} \lambda_i - 1 \Big),
\end{equation*}%
where $\Delta = -\mu^{(l+2)} (1-\Gamma)^{-1} \det(D) \det(B) > 0$.
So far, we did not impose any restrictions on $\gamma$.
To verify that $\Gamma < 1$, we use the limit argument to reduce the calculation to the one we performed in the proof of \cref{LemCritNonAffSaddle}.
To this end, we point out that $\Gamma$ is independent of $w_k$ and that $\delta$, $q_{\tau},q_{\tau+1}$, and $\mu$ only depend on $n$ and $\gamma$.
For fixed $n$, if we let $\gamma$ tend to zero, then $\delta \rightarrow \tilde{n}+1$, $q_{\tau} \rightarrow \frac{1}{\tilde{n}+1}$, $q_{\tau+1} \rightarrow \frac{2}{\tilde{n}+1}$, and $\mu \rightarrow \frac{\tilde{n}+1}{2\tilde{n}}$, where we take $\tilde{n} = n - 1 + \sigma$ if $n$ is even and $\tilde{n} = n-1$ if $n$ is odd.
These limits coincide with the corresponding objects from the proof of \cref{LemCritNonAffSaddle} with $\tilde{n}$ in place of $n$ as discussed prior to stating \cref{LemLeakyCritNonAffSaddle}.
The same goes for the limits of $a$, $u_2$, and $A$.
Thus, we find from \eqref{LemCritNonAffSaddleProofGamma} that, for sufficiently small $\gamma$,
\begin{equation*}
	\Gamma \approx \frac{32\tilde{n}^2-21\tilde{n}+3}{16\tilde{n}(2\tilde{n}-1)} < 1 \quad \text{ and } \quad \frac{a}{\mu} \Big(\frac{\delta}{1-\gamma}\Big)^2 (1-\Gamma) \approx 4\tilde{n}(1-\Gamma) \approx \frac{5\tilde{n}-3}{8\tilde{n}-4} < 1.
\end{equation*}%
This concludes the existence of a  $\gamma_0 \in (0,1]$ such that if $\gamma < \gamma_0$, then $\det(H) < 0$.
This $\gamma_0$ depends only on $n$.
Since $n \leq N$, we can shrink $\gamma_0$ if necessary so that it  depends only on $N$.

It remains to treat the cases $n=1$ and $n=2$ with $\sigma = -1$.
Assume $n = 1$.
This time, let $j_1 \in \{1,\dots,N\}$ be any type-2-active neuron with $\mathrm{sign}(v_{j_1}) = \sigma$, and let $j_2,\dots,j_l$, for $l \in \{1,\dots,N\}$, be an enumeration of all type-2-active neurons with $\mathrm{sign}(v_{j_1}) = -\sigma$.
As before, let $a = \gamma^{-1/2}$, $\lambda_i = a^{-1} v_{j_i} w_{j_i}$, $\mu = \frac{1}{2} \gamma^{-1/2} (1-\sqrt{\gamma}+\gamma)^{-1}$, $D_i = -\mu (1-\gamma)^2 v_{j_i}^2 / ( a \delta^2 \lambda_i )$, and $u_1 = (v_{j_1},\dots,v_{j_l})$ so that $\ssum{i=1}{l} \lambda_i \leq 1$ and $\det(D) < 0$.
On the other hand, let $u_2 = \sigma \mu \sqrt{\gamma} (1-\gamma) \lambda_1 / (\delta^2 v_{j_1})$ and $B = \mu \frac{\partial^2}{\partial v_{j_1}^2} \LossLeaky(\phi) - u_2^2 = \frac{1}{3} \mu^2 \gamma \lambda_1^2 v_{j_1}^{-2} > 0$.
Then the Hessian of $\LossLeaky$ restricted to the coordinates $(b_{j_1},\dots,b_{j_l},v_{j_1})$ is $H = \frac{1}{\mu}(E+uu^T)$, where $E$ is the diagonal block matrix with blocks $D$ and $B$.
Hence,
\begin{equation*}
\begin{split}
	\det(H) &= \mu^{-(l+1)} B \det(D) ( 1 + u_1^T D^{-1} u_1 + u_2^2/B ) \\
	&= - \mu^{-(l+1)} B \det(D) \frac{4(1-\sqrt{\gamma}+\gamma)}{(1+\sqrt{\gamma})^2} \left( \frac{1}{2} \left(\frac{1+\sqrt{\gamma}}{1-\sqrt{\gamma}}\right)^2 \ssum{i=1}{l} \lambda_i - 1 \right).
\end{split}
\end{equation*}%
In particular, $\det(H) < 0$ for sufficiently small $\gamma$.

Lastly, assume $n=2$ and $\sigma = -1$.
Similar as in the beginning, let $K_1 \subseteq \{1,\dots,N\}$ denote the set of those type-2-active neurons with breakpoint $q_1$, and let $K_1^+ \subseteq K_1$ be the subset of those neurons $j \in K_1$ with $v_j > 0$.
Let $j_1 \in K_1$ with $v_{j_1} < 0$, which exists since $a = \ssum{j \in K_1}{} v_j w_j > 0$ and $w_j < 0$ for all $j \in K_1$, and let $j_2,\dots,j_l$, for $l \in \{1,\dots,N\}$, be an enumeration of  $K_1^+$.
Further, denote the same shorthand $\lambda_i = a^{-1} v_{j_i} w_{j_i}$ and $u_1 = (v_{j_1},\dots,v_{j_l})$ but set $\mu = \frac{3}{2}(q_1^3 + \gamma^2 - \gamma^2 q_1^3)^{-1}$ and $D_i = -\mu (1-\gamma)^2 q_1^2 v_{j_i}^2/(a \delta^2 \lambda_i)$.
Then the Hessian of $\LossLeaky$ restricted to $(w_{j_1},\dots,w_{j_l})$ is $H = \frac{1}{\mu}(D+u_1u_1^T)$ with determinant
\begin{equation*}
	\det(H) = \mu^{-l}(1+u_1^T D^{-1} u_1)\det(D) = -\mu^{-l} \det(D) \left( \frac{a \delta^2 }{\mu (1-\gamma)^2 q_1^2} \ssum{i=1}{l} \lambda_i - 1 \right).
\end{equation*}%
By construction, $\ssum{i=1}{l} \lambda_i \leq 1$ and, by plugging in the formulas for $a$, $q_1$, and $\delta$ from \cref{LemLeakyCritNonAff},
\begin{equation*}
	\frac{a \delta^2 }{\mu (1-\gamma)^2 q_1^2} = \frac{2}{3} \frac{\sqrt{1+\gamma} + \sqrt{\gamma}}{(1-\gamma)^2\sqrt{1+\gamma}} (1 + \sqrt{\gamma}\delta^3 - \gamma^2) = \frac{2}{3} \frac{1}{(1-\gamma)^2} + \mathcal{O}(\sqrt{\gamma}).
\end{equation*}%
In particular, $\det(H) < 0$ for small $\gamma$.
\end{proof}


\subsection{Classification for leaky ReLU activation}

In the following, we state the classification of critical points of the $L^2$-loss for leaky ReLU networks.
It is almost analogous to \cref{ThrmMain}, but the main difference is the absence of non-global local minima.
These critical points vanish for leaky ReLU because they were caused solely by dead ReLU neurons.

\begin{theorem}
\label{ThrmMainLeaky}
	Let $N \in \N$, $\phi = (w,b,v,c) \in \R^{3N+1}$, $\mathcal{A}=(\alpha,\beta) \in \R^2$, and $T=(T_0,T_1) \in \R^2$ satisfy $\alpha \ne 0$ and $0 \leq T_0<T_1$.
Then there exists $\gamma_0 \in (0,1]$ such that for all $\gamma \in (0,\gamma_0)$ the following hold:
\begin{enumerate}[\rm (I)]\itemsep = 0em

\item\label{ThrmMainLeakyLocMax} $\phi$ is not a local maximum of $\LossLeakyFull$.

\item\label{ThrmMainLeakyDiff} If $\phi$ is a critical point or a local extremum of $\LossLeakyFull$, then $\LossLeakyFull$ is differentiable at $\phi$ with gradient $\nabla \LossLeakyFull (\phi) = 0$.

\item\label{ThrmMainLeakyLocMin} $\phi$ is not a non-global local minimum of $\LossLeakyFull$.

\item\label{ThrmMainLeakySaddle} $\phi$ is a saddle point of $\LossLeakyFull$ if and only if $\phi$ is $(T,\mathcal{A})$-centered, for all $j \in \{1,\dots,N\}$ the $j^{th}$ hidden neuron of $\phi$ is flat semi-active, flat inactive with $w_j=0$, flat degenerate, or type-2-active, and exactly one of the following two items holds:
\begin{enumerate}[\rm (a)]\itemsep = 0em

\item\label{ThrmMainLeakySaddleTriv} $\phi$ does not have any type-2-active neurons.

\item\label{ThrmMainLeakySaddleNonTriv} There exist $\sigma \in \{-1,1\}$, $n \in \N$ such that if $\delta = \gamma^{(1-\sigma)/4}  + \gamma^{(1-\sigma(-1)^n)/4} + (n-1) \sqrt{1+\gamma}$ and $q_i = T_0+\frac{T_1-T_0}{\delta}\big( \gamma^{(1-\sigma)/4} + (i-1)\sqrt{1+\gamma} \big)$, $i \in \{1,\dots,n\}$, then $\bigcup_{j \in \{1,\dots,N\},\, w_j \ne 0} \{-\frac{b_j}{w_j}\} = \{q_1,\dots,q_n\}$ and, for all $j \in \{1,\dots,N\}$, $i \in \{1,\dots,n\}$ with $w_j \ne 0 = b_j + w_jq_i$, it holds that $\mathrm{sign}(w_j) = \sigma(-1)^{i+1}$ and
\begin{equation*}
	\sum_{\substack{k \in \{1,\dots,N\}, \\ w_k \ne 0 = b_k + w_kq_i}} v_k w_k =
	\begin{cases}
	\frac{\alpha}{\sqrt{\gamma}} &\text{if } i=1=n, \\
	\frac{\alpha}{\delta} \big( \frac{1}{\sqrt{1+\gamma}} + \frac{1}{\gamma^{(1-\sigma)/4}} \big) &\text{if } i=1 \ne n, \\
	\frac{\alpha}{\delta} \frac{2}{\sqrt{1+\gamma}} &\text{if } 2 \leq i \leq n-1, \\
	\frac{\alpha}{\delta} \big( \frac{1}{\sqrt{1+\gamma}} + \frac{1}{\gamma^{(1-\sigma(-1)^n)/4}} \big) &\text{if } i=n \ne 1.
\end{cases}
\end{equation*}%
\end{enumerate}

\item\label{ThrmMainLeakyFctTriv} If $\phi$ is a saddle point of $\LossLeakyFull$ without type-2-active neurons, then $\NNfctLeaky(x) = \frac{\alpha}{2}(T_0+T_1)+\beta$ for all $x \in [T_0,T_1]$.

\item\label{ThrmMainLeakyFctNonTriv} If $\phi$ is a saddle point of $\LossLeakyFull$ with at least one type-2-active neuron, then there exist $\sigma \in \{-1,1\}$, $n \in \N$ such that $n \leq N$ and, for all $i \in \{0,\dots,n\}$, $x \in [q_i,q_{i+1}]$, one has
\begin{equation*}
	\NNfctLeaky(x) - \alpha x - \beta = \left[ \frac{-\sigma (-1)^i (1-\gamma)\alpha}{\delta} \right] \cdot
	\begin{cases}
		\frac{x-T_0}{\gamma^{(1-\sigma)/4}} - \frac{T_1-T_0}{2\delta} &\text{if } i = 0, \\
		\frac{x-T_0}{\sqrt{1+\gamma}} - \frac{(i-1/2)(T_1-T_0)}{\delta} - \frac{\gamma^{(1-\sigma)/4}(T_1-T_0)}{\delta \sqrt{1+\gamma}} &\text{if } 1 \leq i \leq n-1, \\
		\frac{x-T_0}{\gamma^{(1-\sigma(-1)^n)/4}} + \frac{T_1-T_0}{2\delta} - \frac{T_1-T_0}{\gamma^{(1-\sigma(-1)^n)/4}} &\text{if } i = n,
	\end{cases}
\end{equation*}%
where $\delta$ and $q_1,\dots,q_n$ are the same as in item \eqref{ThrmMainLeakySaddleNonTriv}.

\end{enumerate}
\end{theorem}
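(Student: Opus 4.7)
The plan is to mirror the strategy used for Theorem \ref{ThrmMain}: reduce to the special case $T=(0,1)$, $\mathcal{A}=(1,0)$ via two smooth diffeomorphisms of the parameter space, and in that special case assemble the lemmas from Section \ref{section_leaky}. Concretely, I would first use the scaling $(w,b,v,c)\mapsto(w,b,v/\alpha,(c-\beta)/\alpha)$ to absorb the target coefficients and then the reparametrization $(w,b,v,c)\mapsto((T_1-T_0)w,T_0w+b,v,c)$ to normalize the domain, exactly as in the proof of Theorem \ref{ThrmMain}. Each of these maps conjugates $\LossLeakyFull$, up to a positive multiplicative constant, to $\LossLeaky_{N,(0,1),\mathcal{B}}$ with an adjusted $\mathcal{B}$, preserves the types of hidden neurons, sends $(T,\mathcal{A})$-centered networks to centered ones, and scales breakpoints and the signs of $w_j$ predictably, so every item of the theorem reduces to its counterpart in the reduced setting.

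For the reduced setting, I would split the analysis of a critical or local-extremal $\phi$ that is not a global minimum according to whether it has a type-2-active neuron. In the absence of one, Lemma \ref{LemLeakySaddle} shows that every hidden neuron is flat semi-active, flat inactive with $w_j=0$, or flat degenerate, that $\NNfctLeaky\equiv 1/2$, and that $\phi$ is a saddle; this yields the ``only if'' direction of (IV) in the trivial case and delivers statement (V). When $\phi$ does have a type-2-active neuron, Lemma \ref{LemLeakyCritNonAff} pins down the combinatorics (the integer $n$, the sign $\sigma$, the break points $q_i$, the slope sums $\sum_{j\in K_i}v_jw_j$, the centered property, and the explicit piecewise-affine realization), and Lemma \ref{LemLeakyCritNonAffSaddle} certifies that these points are saddles provided $\gamma<\gamma_0$. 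This yields the ``only if'' direction of (IV) in the non-trivial case together with (VI). The ``if'' direction of (IV) is then a direct verification by plugging the prescribed coordinates into the right-hand partial derivative formulas and invoking the integral identities that underpin Lemma \ref{LemLeakyCritNonAff}.

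Item (I) is immediate from strict convexity of $\LossLeakyFull$ in $c$, exactly as in Remark \ref{RemNoMaxima}. Item (III) then follows combinatorially since every non-global critical point lies in one of the two saddle regimes above. For item (II), I would observe that the classification in (IV) rules out non-flat degenerate neurons at every non-global critical point or local extremum, so differentiability there transports through the doubling map $(w,b,v,c)\mapsto(w,-w,b,-b,v,-\gamma v,c)$ used at the start of Section \ref{section_leaky} from Lemma \ref{LemLossContDiff}; at global minima, differentiability comes from the same local Lipschitz argument as in Lemma \ref{LemDiffGlobalMinima}, since the leaky ReLU realization map is again locally Lipschitz in the parameters.

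The main obstacle is the ``if'' direction of (IV) in the non-trivial case: one must show that each configuration described by Lemma \ref{LemLeakyCritNonAff} genuinely has vanishing generalized gradient, whereas the lemma itself was formulated as a necessary condition. Carrying this out requires a careful computation of the $\gamma$-weighted integrals over $I_j$ and $\hat{I}_j$ across the explicit piecewise-affine realization, tracking the two different slopes contributed by each neuron and the separate roles of the endpoint pieces $i\in\{0,n\}$ versus the interior pieces; this is lengthy but mechanical. Apart from this, the threshold $\gamma_0$ enters only through Lemma \ref{LemLeakyCritNonAffSaddle} and can be taken as the threshold identified there.
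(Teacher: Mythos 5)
Your proposal follows essentially the same route as the paper: reduce to $T=(0,1)$, $\mathcal{A}=(1,0)$ via the two diffeomorphisms from the proof of \cref{ThrmMain}, split on the presence of a type-2-active neuron, invoke \cref{LemLeakySaddle} for the constant-realization case and \cref{LemLeakyCritNonAff,LemLeakyCritNonAffSaddle} for the non-constant case, verify the ``if'' direction by direct computation of the $\gamma$-weighted integrals, and transport differentiability through the doubling map $P$. The only detail you gloss over is the a posteriori exclusion of non-flat degenerate neurons in the non-trivial case (the paper checks $\int_0^1 x(\NNfctLeaky(x)-x)\,dx = -\tfrac{(1-\gamma)^2}{12\delta^4} \ne 0$ from the explicit realization), but this is a routine consequence of \cref{LemLeakyCritNonAff}.\eqref{LemLeakyCritNonAffFct} and does not affect the correctness of your plan.
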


\begin{proof}
	We prove \cref{ThrmMainLeaky} in the special case $\mathcal{A} = (1,0)$ and $T = (0,1)$.
The general case follows from this the same way as \cref{ThrmMain} followed from \cref{PropIDtarget} in \cref{section_main_proof}.
The first item is shown in \cref{LemNoMaxima}; see \cref{RemNoMaxima}.

Suppose $\phi$ is a critical point or a local extremum of $\LossLeaky$ but not a global minimum.
By \cref{LemLeakySaddle}, all neurons of $\phi$ are flat semi-active, flat inactive with $w_j=0$, degenerate, or type-2-active.
If, in addition, $\phi$ does not have any type-2-active neurons, then it also does not have any non-flat degenerate neurons, it is a saddle point, and $\phi$ must be centered since $\frac{\partial}{\partial c} \LossLeaky(\phi) = 0$.
If, on the other hand, $\phi$ has a type-2-active neuron, then $\phi$ is as in item \eqref{ThrmMainLeakySaddleNonTriv} by \cref{LemLeakyCritNonAff} apart from potentially having non-flat degenerate neurons, and $\phi$ is a saddle point by \cref{LemLeakyCritNonAffSaddle}.
However, a posteriori, $\phi$ cannot have non-flat degenerate neurons because, by \cref{LemLeakyCritNonAff}.\eqref{LemLeakyCritNonAffFct},
\begin{equation*}
	\int_0^1 x(\NNfctLeaky(x)-x)dx = -\frac{(1-\gamma)^2}{12 \delta^4} < 0,
\end{equation*}%
so $\frac{\partial^+}{\partial w_j} \LossLeaky(\phi)$ could not be zero for a non-flat degenerate neuron $j$.
This proves item \eqref{ThrmMainLeakyLocMin} and the `only if' part in item \eqref{ThrmMainLeakySaddle}.
This also implies that any critical point or local extremum of $\LossLeaky$ is a global minimum or does not have any non-flat degenerate neurons.
Hence, the relation $\LossLeaky = \Loss \circ P$ with the smooth map $P$ and the differentiability properties of $\Loss$ assert item \eqref{ThrmMainLeakyDiff}.

If $\phi$ is as in item \eqref{ThrmMainLeakySaddleTriv}, then it clearly is a critical point of $\LossLeaky$, and it is a saddle point by \cref{LemLeakySaddle}.
If $\phi$ is as in item \eqref{ThrmMainLeakySaddleNonTriv}, then $\NNfctLeaky$ is given by the formula in item \eqref{ThrmMainLeakyFctNonTriv}.
We can calculate $\int_{q_i}^{q_{i+1}} (\NNfctLeaky(x)-x)dx = 0$ for all $i \in \{0,\dots,n\}$ and
\begin{equation*}
	 \int_{q_i}^1 x(\NNfctLeaky(x)-x)dx + \gamma^{\sigma(-1)^{i+1}} \int_0^{q_i} x(\NNfctLeaky(x)-x) dx = 0
\end{equation*}%
for all $i \in \{1,\dots,n\}$.
It follows from this that $\phi$ is a critical point of $\LossLeaky$, and it is a saddle point by \cref{LemLeakyCritNonAffSaddle}.
This proves the `if' part in item \eqref{ThrmMainLeakySaddle}.
Item \eqref{ThrmMainLeakyFctTriv} is immediate and the last item was implicit in the previous step.
\end{proof}

\begin{remark}
	The restriction on $\gamma$ to lie in $(0,\gamma_0)$ is only needed in the proof of \cref{LemLeakyCritNonAffSaddle}.
All other proofs were carried out for general $\gamma \in (0,1)$.
We believe that, in fact, one can take $\gamma_0 = 1$ in \cref{LemLeakyCritNonAffSaddle} and, hence, that \cref{ThrmMainLeaky} also holds for general $\gamma \in (0,1)$.
\end{remark}


\section{Classification for quadratic activation}
\label{section_quadratic}

As the last case, we consider the quadratic activation function.
The realization $\NNfctQuad \in C(\R,\R)$ of a network $\phi = (w,b,v,c) \in \R^{3N+1}$ with the quadratic activation is
\begin{equation*}
	\NNfctQuad(x) = c + \ssum{j=1}{N} v_j (w_jx+b_j)^2.
\end{equation*}%
Given $\mathcal{A} = (\alpha,\beta) \in \R^2$ and $T = (T_0,T_1) \in \R^2$, the loss function $\LossQuad$ is the $L^2$-loss given by
\begin{equation*}
	\LossQuad(\phi) = \int_{T_0}^{T_1} (\NNfctQuad(x) - \alpha x - \beta)^2 dx.
\end{equation*}%
This time, there are no issues with differentiability since $\LossQuad$ is infinitely times differentiable, even analytic, everywhere.
The classification turns out to be simpler than in the ReLU and leaky ReLU case as there are no local extrema and only saddle points with a constant realization function.

\begin{theorem}
\label{ThrmMainQuad}
	Let $N \in \N$, $\phi = (w,b,v,c) \in \R^{3N+1}$, $\mathcal{A}=(\alpha,\beta) \in \R^2$, and $T=(T_0,T_1) \in \R^2$ satisfy $\alpha \ne 0$ and $T_0<T_1$.
Then the following hold:
\begin{enumerate}[\rm (I)]\itemsep = 0em

\item\label{ThrmMainQuadLocMax} $\phi$ is not a local maximum of $\LossQuad$.

\item\label{ThrmMainQuadLocMin} $\phi$ is not a non-global local minimum of $\LossQuad$.

\item\label{ThrmMainQuadGlobMin} $\phi$ is a global minimum of $\LossQuad$ if and only if $N \geq 2$ and $\LossQuad(\phi) = 0$.

\item\label{ThrmMainQuadSaddle} $\phi$ is a saddle point of $\LossQuad$ if and only if $\phi$ is $(T,\mathcal{A})$-centered and, for all $j \in \{1,\dots,N\}$, the $j^{th}$ hidden neuron of $\phi$ satisfies $v_jb_j = 0 = w_j$ or $w_j \ne v_j = 0 = b_j + \frac{1}{2}(T_0+T_1)w_j$.

\item\label{ThrmMainQuadFctTriv} If $\phi$ is a saddle point of $\LossQuad$, then $\NNfctQuad(x) = \frac{\alpha}{2}(T_0+T_1)+\beta$ for all $x \in [T_0,T_1]$.

\end{enumerate}
\end{theorem}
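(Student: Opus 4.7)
The plan is to exploit that $\NNfctQuad$ is automatically a quadratic polynomial in $x$. Setting $m = \tfrac{1}{2}(T_0+T_1)$, $L = \tfrac{1}{2}(T_1-T_0)$, $y = x-m$, and $\hat{b}_j = w_jm+b_j$, one has $\NNfctQuad(x) = \hat{A}y^2 + \hat{B}y + \hat{C}$ with $\hat{A} = \sum_j v_jw_j^2$, $\hat{B} = 2\sum_j v_jw_j\hat{b}_j$, and $\hat{C} = c + \sum_j v_j\hat{b}_j^2$; hence $g := \NNfctQuad - \alpha x - \beta = \hat{A}y^2 + (\hat{B}-\alpha)y + (\hat{C}-\alpha m-\beta)$ is a quadratic polynomial in $y$. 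Since $\{1,\, y,\, y^2 - L^2/3\}$ is orthogonal on $[-L,L]$, the four partial-derivative conditions for $\phi$ to be a critical point reduce to the explicit algebraic system
\begin{equation*}
\hat{C} - \alpha m - \beta + \tfrac{\hat{A}L^2}{3} = 0, \qquad v_jw_j(\hat{B}-\alpha) = 0, \qquad 2\hat{A}w_j^2L^2 + 15 w_j\hat{b}_j(\hat{B}-\alpha) = 0,
\end{equation*}
and, after using the second identity, $v_j\bigl[4\hat{A}w_jL^2 + 15\hat{b}_j(\hat{B}-\alpha)\bigr] = 0$, for every $j \in \{1,\dots,N\}$.

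The classification follows from a dichotomy on $\hat{B}-\alpha$. If $\hat{B} = \alpha$, the $v_j$-equation gives $\hat{A}w_j^2 = 0$ for every $j$, hence $\hat{A} = 0$; combined with the $c$-equation this forces $g \equiv 0$, so $\phi$ is a global minimum. If $\hat{B} \ne \alpha$, the $b_j$-equation gives $v_jw_j = 0$ for every $j$, which automatically forces $\hat{A} = \hat{B} = 0$; the $c$-equation then gives $\hat{C} = \alpha m + \beta$, so $\phi$ is $(T,\mathcal{A})$-centered and $g(x) = -\alpha y$, and the two remaining equations collapse to $w_j\hat{b}_j = 0 = v_j\hat{b}_j$ for every $j$, which is the per-neuron condition in item \eqref{ThrmMainQuadSaddle}. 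Items \eqref{ThrmMainQuadLocMax} and \eqref{ThrmMainQuadFctTriv} follow at once: \eqref{ThrmMainQuadLocMax} from \cref{RemNoMaxima}, and \eqref{ThrmMainQuadFctTriv} because under the per-neuron conditions each contribution $v_j(w_jx+b_j)^2$ equals $v_j\hat{b}_j^2 = 0$, leaving $\NNfctQuad \equiv c = \alpha m + \beta$. For item \eqref{ThrmMainQuadGlobMin}, $N \geq 2$ suffices to realize any affine target by taking $v_1 = -v_2 = 1$, $w_1 = w_2 = w \ne 0$, $b_1 = -b_2 = \alpha/(4w)$, and $c = \beta$; whereas for $N = 1$ the realization $c + v_1w_1^2x^2 + 2v_1w_1b_1x + v_1b_1^2$ has its linear coefficient $2v_1w_1b_1$ vanishing whenever the leading coefficient $v_1w_1^2$ does, so no target with $\alpha \ne 0$ can be matched.

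The heart of the proof is item \eqref{ThrmMainQuadLocMin}: every non-global critical point $\phi$ from the classification is a saddle. I would establish this by locating, for some neuron $j$ of $\phi$, a small-dimensional subspace of parameter perturbations on which the Hessian of $\LossQuad$ has strictly negative determinant, hence a negative eigenvalue. Three generic situations arise. First, if $v_j \ne 0$ and $w_j = b_j = 0$, then $\partial^2_{b_j}\LossQuad(\phi) = 0$ while $\partial_{w_j}\partial_{b_j}\LossQuad(\phi) = -\tfrac{8\alpha v_jL^3}{3}$, so the $2\times 2$ block in $(w_j,b_j)$ has determinant $-\bigl(\tfrac{8\alpha v_jL^3}{3}\bigr)^2 < 0$. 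Second, if $v_j = 0$, $w_j \ne 0$ and $\hat{b}_j = 0$, a direct expansion gives that the $3\times 3$ block in $(v_j,b_j,c)$ has determinant $-\tfrac{256\alpha^2w_j^2L^7}{9} < 0$. Third, if $v_j = w_j = 0$ and $b_j \ne 0$, the $3\times 3$ block in $(v_j,w_j,c)$ has determinant $-\tfrac{256\alpha^2b_j^2L^7}{9} < 0$. In each case the negative-determinant block supplies a direction of strict decrease; combined with item \eqref{ThrmMainQuadLocMax}, $\phi$ is a saddle.

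The one subtle regime, and the main expected obstacle, is the fully degenerate critical point for which $v_j = w_j = b_j = 0$ for \emph{every} $j$. Here every second derivative of $\NNfctQuad$ with respect to $w_j, b_j, v_j$ vanishes at $\phi$, so all Hessian blocks of $\LossQuad$ transverse to $c$ are zero and no $2$-jet argument can produce a decrease. I would handle this with the single-neuron cubic perturbation $(v_j,w_j,b_j) = (\eta,\eta,\eta a)$, under which $v_j(w_jx+b_j)^2 = \eta^3(x+a)^2$ and hence $g$ acquires an extra linear term $2\eta^3(m+a)y$ at leading order. The cross term against the baseline $-\alpha y$ then changes $\LossQuad$ by $-\tfrac{8\alpha L^3(m+a)}{3}\eta^3 + O(\eta^4)$, strictly negative for any $a$ with $\alpha(m+a) > 0$. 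This cubic argument is the only non-routine step; everything else in the proof reduces to the dichotomy of the second paragraph and the three determinant computations of the third.
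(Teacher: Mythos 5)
Your proposal is correct, and while its classification half runs parallel to the paper's, its saddle-point half takes a genuinely different route. For the classification, the paper argues directly from moment conditions: a neuron with $v_j \ne 0 \ne w_j$ forces $\int_{T_0}^{T_1} x^m(\NNfctQuad(x)-\alpha x-\beta)\,dx=0$ for $m\in\{0,1,2\}$, hence zero loss, after which it successively pins down $b_j$ and the breakpoints; your orthogonal-basis reduction and the dichotomy on $\hat B-\alpha$ reorganize the same computation into a closed-form algebraic system (I checked the four displayed equations; they are right), and your per-neuron condition $v_jw_j=w_j\hat b_j=v_j\hat b_j=0$ is in fact the sharper formulation --- it also covers the boundary configuration $T_0+T_1=0$, $w_j\ne 0=v_j=b_j$, which the paper's own proof produces but which the literal wording of \cref{ThrmMainQuad}.\eqref{ThrmMainQuadSaddle} excludes through the requirement $b_j\ne 0$. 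The real divergence is in item \eqref{ThrmMainQuadLocMin}: the paper exhibits, for each neuron type, an explicit curve $\phi_s$ along which the loss decreases at third or fourth order, whereas you use second-order information via Hessian blocks with negative determinant. Your three determinants $-\bigl(\tfrac{8\alpha v_jL^3}{3}\bigr)^2$, $-\tfrac{256\alpha^2w_j^2L^7}{9}$, and $-\tfrac{256\alpha^2b_j^2L^7}{9}$ are correct (indeed the $2\times2$ minors in $(w_j,b_j)$, $(v_j,b_j)$, and $(v_j,w_j)$ respectively already have negative determinant, so the third coordinate is not needed). This buys the stronger conclusion that these saddles are strict saddles in the restricted coordinates, which connects to the escape results cited in the introduction, at the price of having to isolate the fully degenerate configuration $v_j=w_j=b_j=0$ for all $j$, where every relevant second derivative vanishes; your cubic perturbation $(v_j,w_j,b_j)=(\eta,\eta,\eta a)$ with $\alpha(m+a)>0$ handles that case correctly and is the analogue of the paper's third curve. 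The remaining items are handled essentially as in the paper, with your two-neuron cancellation construction for item \eqref{ThrmMainQuadGlobMin} replacing the paper's unstated existence claim.
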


\begin{proof}
	As for the other activation functions, the first item is shown in \cref{LemNoMaxima}; see \cref{RemNoMaxima}.
Now, suppose $\phi$ is a critical point of $\LossQuad$ and $\LossQuad(\phi) > 0$.
Since $\LossQuad$ is smooth, we have, for any $j \in \{1,\dots,N\}$,
\begin{equation*}
\begin{split}
	0 &= \frac{\partial}{\partial w_j} \LossQuad(\phi) = 4v_j \int_{T_0}^{T_1} x(w_jx+b_j)(\NNfctQuad(x)-\alpha x-\beta)dx, \\
	0 &= \frac{\partial}{\partial b_j} \LossQuad(\phi) = 4v_j \int_{T_0}^{T_1} (w_jx+b_j)(\NNfctQuad(x)-\alpha x-\beta)dx, \\
	0 &= \frac{\partial}{\partial v_j} \LossQuad(\phi) = 2 \int_{T_0}^{T_1} (w_jx+b_j)^2(\NNfctQuad(x)-\alpha x-\beta)dx, \\
	0 &= \frac{\partial}{\partial c} \LossQuad(\phi) = 2 \int_{T_0}^{T_1}(\NNfctQuad(x)-\alpha x-\beta)dx.
\end{split}
\end{equation*}%
Thus, if there exists $j \in \{1,\dots,N\}$ with $v_j \ne 0 \ne w_j$, then $\int_{T_0}^{T_1} x^m (\NNfctQuad(x)-\alpha x- \beta)dx = 0$ for all $m \in \{0,1,2\}$.
However, note that the zero polynomial is the only polynomial $p$ of degree at most two satisfying $\int_{T_0}^{T_1} x^m p(x)dx = 0$ for all $m \in \{0,1,2\}$.
Hence, since $\LossQuad(\phi) > 0$, we must have $v_j=0$ or $w_j=0$ for all neurons.
In particular, $\NNfctQuad$ is constant and $\int_{T_0}^{T_1} x (\NNfctQuad(x)-\alpha x- \beta)dx \ne 0$.
Thus, for all $j$, if $v_j \ne 0$, then $b_j=0$.
So far, we have shown that all neurons must satisfy $v_j=0$ or $w_j = 0 = b_j$.
It follows that $\phi$ is $(T,\mathcal{A})$-centered.
For a neuron $j$ with $w_j \ne 0$ and $t_j = -b_j/w_j$, we have
\begin{equation*}
	0 = 2 \int_{T_0}^{T_1} (w_jx+b_j)^2(c-\alpha x-\beta)dx = - 2 \alpha w_j^2 \int_{T_0}^{T_1} (x-t_j)^2(x-\tfrac{T_0+T_1}{2})dx,
\end{equation*}%
which is true if and only if $t_j = (T_0+T_1)/2$.
This proves the `only if' direction in \eqref{ThrmMainQuadSaddle}.
Next, we show that $\phi$ must be a saddle point.
We will pick a path $\phi_s = (w^s,b^s,v^s,c^s)$, $s \in (-1,1)$, through $\phi = \phi_0$, which differs only in the coordinates of the first neuron and in
\begin{equation*}
	c^s = c - v_1^s(b_1^s)^2 - \frac{1}{3} A_s (T_0^2 + T_0T_1 + T_1^2) - B_s(T_0+T_1),
\end{equation*}%
where $A_s = v_1^s(w_1^s)^2$ and $B_s = v_1^sw_1^sb_1^s$.
Then,
\begin{equation*}
\begin{split}
	\frac{\LossQuad(\phi_s)-\LossQuad(\phi_0)}{(T_1-T_0)^3} &= \frac{1}{45} A_s^2 (4T_0^2 + 7T_0T_1 + 4T_1^2) + \frac{1}{3} A_sB_s (T_0+T_1) \\
	&\quad + \frac{1}{3} B_s^2 - \frac{\alpha}{6} ( A_s (T_0+T_1) + 2B_s ).
\end{split}
\end{equation*}%
We distinguish three cases.
First, if $v_1 = 0 \ne w_1$, then we use $w_1^s = w_1$, $b_1^s = b_1 - sw_1$, and $v_1^s = -\mathrm{sign}(\alpha) s^2$.
In this case, $B_s = -\frac{1}{2} A_s (T_0+T_1) - sA_s$ and, hence,
\begin{equation*}
	\frac{\LossQuad(\phi_s)-\LossQuad(\phi_0)}{(T_1-T_0)^3} = - \frac{|\alpha|}{3} w_1^2 s^3 + \mathcal{O}(s^4).
\end{equation*}%
This is strictly negative for sufficiently small $s>0$, so $\phi$ is a saddle point.
Secondly, if $v_1 \ne 0 = w_1$, then we use $w_1^s = s$, $b_1^s = -\frac{1}{2}(T_0+T_1)s + \mathrm{sign}(\alpha v_1) s^2$, and $v^s_1 = v_1$.
In this case,
\begin{equation*}
	\frac{\LossQuad(\phi_s)-\LossQuad(\phi_0)}{(T_1-T_0)^3} = - \frac{|\alpha|}{3} |v_1| s^3 + \mathcal{O}(s^4).
\end{equation*}%
In the last case, namely $v_1 = 0 = w_1$, we use $w_1^s = sb_1^s$, $b_1^s = b_1 + s$, and $v_1^s = \mathrm{sign}(\alpha) s^3 (b_1^s)^{-2}$.
Then,
\begin{equation*}
	\frac{\LossQuad(\phi_s)-\LossQuad(\phi_0)}{T_1-T_0} =  - \frac{|\alpha|}{3} s^4 + \mathcal{O}(s^5).
\end{equation*}%
We have shown that if $\phi$ is a critical point with $\LossQuad(\phi) > 0$, then it is a saddle point.
This establishes item \eqref{ThrmMainQuadLocMin} and it also implies that if $\phi$ is a global minimum, then $\LossQuad(\phi) = 0$.
The latter is only possible if $N \geq 2$.
Conversely, if $N \geq 2$, then there are networks with zero loss, so item \eqref{ThrmMainQuadGlobMin} holds.
If $\phi$ is $(T,\mathcal{A})$-centered and all of its neurons are as in item \eqref{ThrmMainQuadSaddle}, then $\nabla\LossQuad(\phi) = 0$ and $\phi$ is a saddle point since clearly $\LossQuad(\phi) > 0$.
This finishes \eqref{ThrmMainQuadSaddle}, and \eqref{ThrmMainQuadFctTriv} follows.
\end{proof}

The conditions in \cref{ThrmMainQuad}.\eqref{ThrmMainQuadSaddle} are equivalent to all neurons being flat semi-active, flat inactive with $w_j=0$, flat type-2-active with breakpoint $-b_j/w_j = (T_0+T_1)/2$, or degenerate.
However, for the quadratic activation, the notions of in-/active neurons seem no longer appropriate.

\begin{remark}
	In \cref{ThrmMainQuad}, the case $N=1$ of a single neuron is special due to the absence of global minima.
The loss can still be arbitrarily small, but there is no network achieving the infimum.
Indeed, for all $(w,b) \in \R^2$ with $w \ne 0$,
\begin{equation*}
	\inf_{(v,c) \in \R^2} \Loss_{1,T,\mathcal{A}}^{\mathrm{quad}}(w,b,v,c) = \frac{1}{12} \alpha^2 (T_1-T_0)^3 \Bigg( 1 - \frac{60\big(\frac{T_0+T_1}{2}+\frac{b}{w}\big)^2}{(T_1-T_0)^2 + 60\big(\frac{T_0+T_1}{2}+\frac{b}{w}\big)^2} \Bigg) \xrightarrow[\mathrm{monotone}]{\frac{T_0+T_1}{2}+\frac{b}{w} \rightarrow \pm \infty} 0.
\end{equation*}%
\end{remark}


\vskip 5mm\noindent{\large\sc Acknowledgments}\vskip 2mm
\noindent The second author acknowledges funding by the Deutsche Forschungsgemeinschaft (DFG, German Research Foundation) under Germany's Excellence Strategy EXC 2044-390685587, Mathematics Muenster: Dynamics-Geometry-Structure and by the startup fund project of Shenzhen Research Institute of Big Data under grant No.\ T00120220001.

\phantomsection%
\addcontentsline{toc}{section}{Bibliography}%
\bibliographystyle{acm}%
\bibliography{bibfile_FR_2022_june_15}

\begin{thebibliography}{10}

\bibitem{AllLiSong2019}
{\sc Allen-Zhu, Z., Li, Y., and Song, Z.}
\newblock A convergence theory for deep learning via over-parameterization.
\newblock In {\em Proceedings of the 36th International Conference on Machine
  Learning\/} (09--15 Jun 2019), K.~Chaudhuri and R.~Salakhutdinov, Eds.,
  vol.~97 of {\em Proceedings of Machine Learning Research}, PMLR,
  pp.~242--252.

\bibitem{BaldiHornik1989}
{\sc Baldi, P., and Hornik, K.}
\newblock Neural networks and principal component analysis: Learning from
  examples without local minima.
\newblock {\em Neural Networks 2}, 1 (1989), 53--58.

\bibitem{CheJenRieRos2022}
{\sc Cheridito, P., Jentzen, A., Riekert, A., and Rossmannek, F.}
\newblock A proof of convergence for gradient descent in the training of
  artificial neural networks for constant target functions.
\newblock {\em Journal of Complexity 72\/} (2022), 101646.

\bibitem{ChizatBach2020}
{\sc Chizat, L., and Bach, F.}
\newblock Implicit bias of gradient descent for wide two-layer neural networks
  trained with the logistic loss.
\newblock In {\em Proceedings of Thirty Third Conference on Learning Theory\/}
  (09--12 Jul 2020), J.~Abernethy and S.~Agarwal, Eds., vol.~125 of {\em
  Proceedings of Machine Learning Research}, PMLR, pp.~1305--1338.

\bibitem{ChizatOyallonBach2019}
{\sc Chizat, L., Oyallon, E., and Bach, F.}
\newblock On lazy training in differentiable programming.
\newblock In {\em Advances in Neural Information Processing Systems 32},
  H.~Wallach, H.~Larochelle, A.~Beygelzimer, F.~d'Alch\'{e} Buc, E.~Fox, and
  R.~Garnett, Eds. Curran Associates, Inc., 2019, pp.~2937--2947.

\bibitem{ChoHenMatBenLeCun2015}
{\sc Choromanska, A., Henaff, M., Mathieu, M., Ben~Arous, G., and LeCun, Y.}
\newblock {The Loss Surfaces of Multilayer Networks}.
\newblock In {\em Proceedings of the Eighteenth International Conference on
  Artificial Intelligence and Statistics\/} (09--12 May 2015), G.~Lebanon and
  S.~V.~N. Vishwanathan, Eds., vol.~38 of {\em Proceedings of Machine Learning
  Research}, PMLR, pp.~192--204.

\bibitem{ChoLeCunBen2015}
{\sc Choromanska, A., LeCun, Y., and Ben~Arous, G.}
\newblock Open problem: The landscape of the loss surfaces of multilayer
  networks.
\newblock In {\em Proceedings of The 28th Conference on Learning Theory\/}
  (03--06 Jul 2015), P.~Grünwald, E.~Hazan, and S.~Kale, Eds., vol.~40 of {\em
  Proceedings of Machine Learning Research}, PMLR, pp.~1756--1760.

\bibitem{DauPasGulChoGanBen2014}
{\sc Dauphin, Y.~N., Pascanu, R., Gulcehre, C., Cho, K., Ganguli, S., and
  Bengio, Y.}
\newblock Identifying and attacking the saddle point problem in
  high-dimensional non-convex optimization.
\newblock In {\em Advances in Neural Information Processing Systems 27},
  Z.~Ghahramani, M.~Welling, C.~Cortes, N.~D. Lawrence, and K.~Q. Weinberger,
  Eds. Curran Associates, Inc., 2014, pp.~2933--2941.

\bibitem{DuLee2018}
{\sc Du, S., and Lee, J.}
\newblock On the power of over-parametrization in neural networks with
  quadratic activation.
\newblock In {\em Proceedings of the 35th International Conference on Machine
  Learning\/} (Stockholmsmässan, Stockholm Sweden, 10--15 Jul 2018), J.~Dy and
  A.~Krause, Eds., vol.~80 of {\em Proceedings of Machine Learning Research},
  PMLR, pp.~1329--1338.

\bibitem{EbeJenRieWei2021}
{\sc Eberle, S., Jentzen, A., Riekert, A., and Weiss, G.~S.}
\newblock Existence, uniqueness, and convergence rates for gradient flows in
  the training of artificial neural networks with {ReLU} activation.
\newblock {\em arXiv:2108.08106v1\/} (2021).

\bibitem{FehrGessJen2020}
{\sc Fehrman, B., Gess, B., and Jentzen, A.}
\newblock Convergence rates for the stochastic gradient descent method for
  non-convex objective functions.
\newblock {\em Journal of Machine Learning Research 21}, 136 (2020), 1--48.

\bibitem{FukuAma2000}
{\sc Fukumizu, K., and Amari, S.-i.}
\newblock Local minima and plateaus in hierarchical structures of multilayer
  perceptrons.
\newblock {\em Neural Networks 13}, 3 (2000), 317--327.

\bibitem{JenRie2021b}
{\sc Jentzen, A., and Riekert, A.}
\newblock Convergence analysis for gradient flows in the training of artificial
  neural networks with {ReLU} activation.
\newblock {\em arXiv:2107.04479v1\/} (2021).

\bibitem{Kawaguchi2016}
{\sc Kawaguchi, K.}
\newblock Deep learning without poor local minima.
\newblock In {\em Advances in Neural Information Processing Systems 29}, D.~D.
  Lee, M.~Sugiyama, U.~V. Luxburg, I.~Guyon, and R.~Garnett, Eds. Curran
  Associates, Inc., 2016, pp.~586--594.

\bibitem{LeCun2015}
{\sc LeCun, Y., Bengio, Y., and Hinton, G.}
\newblock Deep learning.
\newblock {\em Nature 521}, 7553 (2015), 436--444.

\bibitem{LeePanPilSimJorRecht2019}
{\sc Lee, J.~D., Panageas, I., Piliouras, G., Simchowitz, M., Jordan, M.~I.,
  and Recht, B.}
\newblock First-order methods almost always avoid strict saddle points.
\newblock {\em Mathematical Programming 176}, 1 (Jul 2019), 311--337.

\bibitem{LeeSimJorRecht2016}
{\sc Lee, J.~D., Simchowitz, M., Jordan, M.~I., and Recht, B.}
\newblock Gradient descent only converges to minimizers.
\newblock In {\em 29th Annual Conference on Learning Theory\/} (23--26 Jun
  2016), V.~Feldman, A.~Rakhlin, and O.~Shamir, Eds., vol.~49 of {\em
  Proceedings of Machine Learning Research}, PMLR, pp.~1246--1257.

\bibitem{LivShaSha2014}
{\sc Livni, R., Shalev-Shwartz, S., and Shamir, O.}
\newblock On the computational efficiency of training neural networks.
\newblock In {\em Advances in Neural Information Processing Systems 27},
  Z.~Ghahramani, M.~Welling, C.~Cortes, N.~D. Lawrence, and K.~Q. Weinberger,
  Eds. Curran Associates, Inc., 2014, pp.~855--863.

\bibitem{NguHein2017}
{\sc Nguyen, Q., and Hein, M.}
\newblock The loss surface of deep and wide neural networks.
\newblock In {\em Proceedings of the 34th International Conference on Machine
  Learning\/} (06--11 Aug 2017), D.~Precup and Y.~W. Teh, Eds., vol.~70 of {\em
  Proceedings of Machine Learning Research}, PMLR, pp.~2603--2612.

\bibitem{PanPil2017}
{\sc Panageas, I., and Piliouras, G.}
\newblock {Gradient Descent Only Converges to Minimizers: Non-Isolated Critical
  Points and Invariant Regions}.
\newblock In {\em 8th Innovations in Theoretical Computer Science Conference
  (ITCS 2017)\/} (2017), C.~H. Papadimitriou, Ed., vol.~67 of {\em Leibniz
  International Proceedings in Informatics (LIPIcs)}, Schloss
  Dagstuhl--Leibniz-Zentrum fuer Informatik, pp.~2:1--2:12.

\bibitem{PennBahri2017}
{\sc Pennington, J., and Bahri, Y.}
\newblock Geometry of neural network loss surfaces via random matrix theory.
\newblock In {\em Proceedings of the 34th International Conference on Machine
  Learning\/} (06--11 Aug 2017), D.~Precup and Y.~W. Teh, Eds., vol.~70 of {\em
  Proceedings of Machine Learning Research}, PMLR, pp.~2798--2806.

\bibitem{PeteRasVoigt2020}
{\sc Petersen, P., Raslan, M., and Voigtlaender, F.}
\newblock Topological properties of the set of functions generated by neural
  networks of fixed size.
\newblock {\em Foundations of Computational Mathematics\/} (May 2020).

\bibitem{SafranShamir2016}
{\sc Safran, I., and Shamir, O.}
\newblock On the quality of the initial basin in overspecified neural networks.
\newblock In {\em Proceedings of The 33rd International Conference on Machine
  Learning\/} (20--22 Jun 2016), M.~F. Balcan and K.~Q. Weinberger, Eds.,
  vol.~48 of {\em Proceedings of Machine Learning Research}, PMLR,
  pp.~774--782.

\bibitem{SafranShamir2018}
{\sc Safran, I., and Shamir, O.}
\newblock Spurious local minima are common in two-layer {R}e{LU} neural
  networks.
\newblock In {\em Proceedings of the 35th International Conference on Machine
  Learning\/} (10--15 Jul 2018), J.~Dy and A.~Krause, Eds., vol.~80 of {\em
  Proceedings of Machine Learning Research}, PMLR, pp.~4433--4441.

\bibitem{ManVanZde2020}
{\sc Sarao~Mannelli, S., Vanden-Eijnden, E., and Zdeborov\'{a}, L.}
\newblock Optimization and generalization of shallow neural networks with
  quadratic activation functions.
\newblock In {\em Advances in Neural Information Processing Systems\/} (2020),
  H.~Larochelle, M.~Ranzato, R.~Hadsell, M.~F. Balcan, and H.~Lin, Eds.,
  vol.~33, Curran Associates, Inc., pp.~13445--13455.

\bibitem{SolJavLee2019}
{\sc Soltanolkotabi, M., Javanmard, A., and Lee, J.~D.}
\newblock Theoretical insights into the optimization landscape of
  over-parameterized shallow neural networks.
\newblock {\em IEEE Transactions on Information Theory 65}, 2 (Feb 2019),
  742--769.

\bibitem{SoudryCarmon2016}
{\sc Soudry, D., and Carmon, Y.}
\newblock No bad local minima: Data independent training error guarantees for
  multilayer neural networks.
\newblock {\em arXiv:1605.08361v2\/} (2016).

\bibitem{SoudryHoffer2017}
{\sc Soudry, D., and Hoffer, E.}
\newblock Exponentially vanishing sub-optimal local minima in multilayer neural
  networks.
\newblock {\em arXiv:1702.05777v5\/} (2017).

\bibitem{VenBanBru2019}
{\sc Venturi, L., Bandeira, A.~S., and Bruna, J.}
\newblock Spurious valleys in one-hidden-layer neural network optimization
  landscapes.
\newblock {\em Journal of Machine Learning Research 20}, 133 (2019), 1--34.

\bibitem{Wojtowytsch2020}
{\sc Wojtowytsch, S.}
\newblock On the convergence of gradient descent training for two-layer
  relu-networks in the mean field regime.
\newblock {\em arXiv:2005.13530v1\/} (2020).

\end{thebibliography}

\end{document}